\documentclass[12pt,twoside]{report}

\usepackage{graphicx} 
\usepackage{amsfonts}
\usepackage{amsmath}
\usepackage{amssymb}
\usepackage{amsthm}
\usepackage{stmaryrd}
\usepackage{bbold}
\usepackage{parskip}  
\usepackage{listings}
\usepackage{csquotes}
\usepackage{float}
\usepackage[sorting=none]{biblatex}
\addbibresource{bibliography.bib}



\newcommand{\reporttitle}{Combinatorial Complex Score-based Diffusion Modelling through Stochastic Differential Equations}

\newcommand{\reportauthor}{Adrien Carrel}
\newcommand{\supervisor}{Dr. Tolga Birdal}
\newcommand{\secondMarker}{Dr. Pedro Mediano}
\newcommand{\degreetype}{Advanced Computing}


%
%
%
%
%
\usepackage[a4paper,hmargin=2.8cm,vmargin=2.0cm,includeheadfoot]{geometry}
\usepackage{textpos}
\usepackage{tabularx,longtable,multirow,subfigure,caption}
\usepackage{fncylab} 
\usepackage{fancyhdr} 
\usepackage{url} 
\usepackage[english]{babel}
\usepackage{amsmath}
\usepackage{graphicx}
\usepackage{dsfont}
\usepackage{epstopdf} 
\usepackage{array}
\usepackage{latexsym}
\usepackage[pdftex,hypertexnames=false,colorlinks]{hyperref} 

\hypersetup{pdftitle={},
  pdfsubject={}, 
  pdfauthor={},
  pdfkeywords={}, 
  pdfstartview=FitH,
  pdfpagemode={UseOutlines},
  bookmarksnumbered=true, bookmarksopen=true, colorlinks,
    citecolor=black,%
    filecolor=black,%
    linkcolor=black,%
    urlcolor=black}

\usepackage[all]{hypcap}



\setlength{\parindent}{0em}  

\setlength{\headheight}{14.5pt}
\pagestyle{fancy}

\fancyfoot[ER,OL]{\sffamily\textbf{\thepage}}
\fancyfoot[OC,EC]{\sffamily }

\captionsetup{margin=10pt,labelfont=bf}


\def\@makechapterhead#1{%
  \vspace*{10\p@}%
  {\parindent \z@ \raggedright \sffamily
    \interlinepenalty\@M
    \Huge\bfseries \thechapter \space\space #1\par\nobreak
    \vskip 30\p@
  }}

\def\@makeschapterhead#1{%
  \vspace*{10\p@}%
  {\parindent \z@ \raggedright
    \sffamily
    \interlinepenalty\@M
    \Huge \bfseries  #1\par\nobreak
    \vskip 30\p@
  }}

\allowdisplaybreaks


\newcommand{\E}[0]{\mathbb{E}}  

\date{September 2023}


\theoremstyle{plain}

\newtheorem{theorem}{Theorem}  
\newtheorem{definition}{Definition}  
\newtheorem{layer}{Layer}
\newtheorem{model}{Model}
\newtheorem{proposition}{Proposition}
\newtheorem*{remark}{Remark}
\newtheorem{corollary}[theorem]{Corollary}

\begin{document}

\renewcommand\qedsymbol{$\blacksquare$}  

\begin{titlepage}

\newcommand{\HRule}{\rule{\linewidth}{0.5mm}} 


\includegraphics[width = 4cm]{./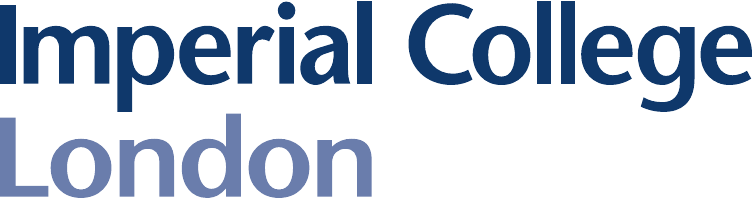}\\[0.5cm] 

\center 


\textsc{\Large Imperial College London}\\[0.5cm] 
\textsc{\large Department of Computing}\\[0.5cm] 


\HRule \\[0.4cm]
{ \huge \bfseries \reporttitle}\\ 
\HRule \\[1.5cm]
 

\begin{minipage}{0.4\textwidth}
\begin{flushleft} \large
\emph{Author:}\\
\reportauthor 
\end{flushleft}
\end{minipage}
~
\begin{minipage}{0.4\textwidth}
\begin{flushright} \large
\emph{Supervisor:} \\
\supervisor\\ 
\end{flushright}
\end{minipage}\\[1cm]

\begin{minipage}{0.4\textwidth}
\begin{flushleft} \large
 
\end{flushleft}
\end{minipage}
~
\begin{minipage}{0.4\textwidth}
\begin{flushright} \large
\emph{Second Marker:} \\
\secondMarker 
\end{flushright}
\end{minipage}\\[2cm]

\includegraphics[width = 5cm]{./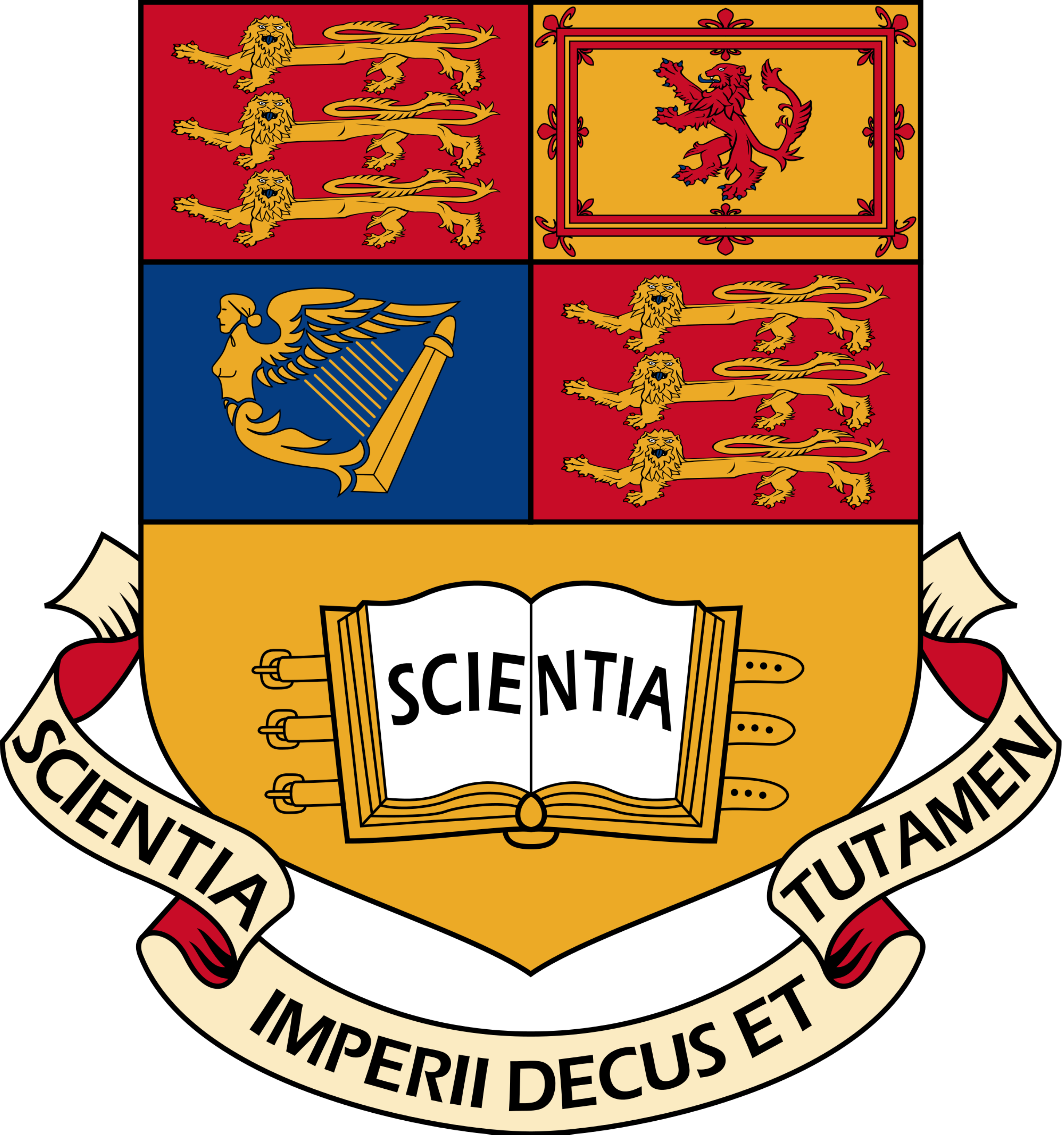}\\[0.5cm]

\vfill 
Submitted in partial fulfillment of the requirements for the MSc degree in
\degreetype~of Imperial College London\\[0.5cm]

\makeatletter
\@date 
\makeatother

\end{titlepage}

\pagenumbering{roman}
\clearpage{\pagestyle{empty}\cleardoublepage}
\setcounter{page}{1}
\pagestyle{fancy}

\begin{abstract}
Graph structures offer a versatile framework for representing diverse patterns in nature and complex systems, applicable across domains like molecular chemistry, social networks, and transportation systems. While diffusion models have excelled in generating various objects, generating graphs remains challenging. This thesis explores the potential of score-based generative models in generating such objects through a modelization as combinatorial complexes, which are powerful topological structures that encompass higher-order relationships.\par

In this thesis, we propose a unified framework by employing stochastic differential equations. We not only generalize the generation of complex objects such as graphs and hypergraphs, but we also unify existing generative modelling approaches such as Score Matching with Langevin dynamics and Denoising Diffusion Probabilistic Models. This innovation overcomes limitations in existing frameworks that focus solely on graph generation, opening up new possibilities in generative AI.\par

The experiment results showed that our framework could generate these complex objects, and could also compete against state-of-the-art approaches for mere graph and molecule generation tasks.
\end{abstract}

\cleardoublepage
\section*{Acknowledgments}
\label{Acknowledgments}

I would like to express my heartfelt gratitude to the following individuals and entities who have played a significant role in the realization of this thesis:\par

First and foremost, I extend my sincere appreciation to Dr. Tolga Birdal for his exceptional guidance and support throughout this academic journey. His advice has been pivotal and his experience has been a constant source of inspiration. I am also grateful to Dr. Pedro Mediano for co-marking my thesis.\par

I am indebted to Leo Anthony Celi for opening the doors to numerous research opportunities and for facilitating my engagement with the Laboratory for Computational Physiology at the esteemed Massachusetts Institute of Technology, where I had the privilege of being partially hosted during the formulation of this thesis.\par

I would like to thank my professors, both during my time in Classes Préparatoires at Lycée Pierre Corneille and Lycée Hoche, and later at CentraleSupélec and Imperial College London, for imparting the foundational mathematical skills and fostering the sense of rigour that enabled me to undertake this thesis.\par

My heartfelt thanks go to my parents and my sister for their unending support and belief in me.\par

Last, but not least, I am deeply thankful to all my friends for the camaraderie and the joy over the years. Your presence has added a cherished dimension to my academic journey.\par

\clearpage{\pagestyle{empty}\cleardoublepage}

\fancyhead[RE,LO]{\sffamily {Table of Contents}}
\tableofcontents

\listoffigures
\listoftables
\newpage

\clearpage{\pagestyle{empty}\cleardoublepage}
\pagenumbering{arabic}
\setcounter{page}{1}
\fancyhead[LE,RO]{\slshape \rightmark}
\fancyhead[LO,RE]{\slshape \leftmark}

\chapter{Introduction}
\label{Introduction}

Graph structures have established themselves as an elegant and versatile language capable of encapsulating diverse patterns in natural and complex systems \cite{velickovic_everything_2023}. Their utility spans various domains, including molecular chemistry, social networks, and transportation systems, where we need to represent interconnected entities. Take molecules, for instance, where atoms and bonds translate seamlessly into nodes and edges within a graph, enabling the utilization of geometric deep learning techniques like graph neural networks \cite{velickovic_everything_2023, bronstein2021geometric}. These techniques empower researchers to grasp the structural and functional properties of molecules, a knowledge that can catalyze the design of novel compounds with specific attributes.\par

Lately, diffusion models, specifically score-based generative models and related methodologies \cite{NEURIPS2019_378a063b, bordes2017learning, goyal2017variational}, have risen to prominence in the realm of generative artificial intelligence (generative AI). They have delivered remarkable results in a wide array of fields, from text-to-image generation models such as Imagen \cite{saharia2022photorealistic}, Dall-E \cite{ramesh2021zeroshot}, and Stable Diffusion \cite{rombach2022highresolution}, to audio \cite{chen2020wavegrad, kong2021diffwave, segre}, shapes \cite{cai2020learning}, as well as tabular data generation \cite{kotelnikov2022tabddpm}. However, despite this impressive repertoire, the generation of graphs continues to present a challenge \cite{zhang_survey_2023}.\par

The advent of diffusion models within the field of geometric deep learning has ushered in new prospects and advancements. Graph generation carries immense potential in diverse domains, encompassing molecular conformation generation, drug discovery, protein design (both structure and sequence), motif-conditioned structure design, antibody design, and materials design \cite{zhang_survey_2023}. Nevertheless, the complexity of graphs, their varying sizes, properties, and potential higher-order relationships, have posed formidable hurdles in the quest to develop models capable of effective generalization across different graph types.\par

To address these obstacles and broaden the horizons of generative AI models, we explore mathematical structures that can adeptly represent higher-order relationships between entities. While graphs excel at encoding binary relations, alternative structures such as simplicial and cell complexes, hypergraphs, and combinatorial complexes excel at capturing hierarchical and more complex relationships \cite{papillon2023architectures, hajij_topological_2023}. Combinatorial complexes (CCs), in particular, stand out as a potent framework that generalizes the aforementioned structures (see Figure \ref{fig:topo}). Armed with both set-type relations and a hierarchy among these relations, CCs may hold the key to preserving complex higher-order relations - an essential facet in learning representations of intricate objects and consequently, constructing robust generative models.\par

In this thesis, we extend the capabilities of score-based generative models by introducing a unified framework. Our model exhibits generality on two distinct fronts: a generalized perspective with our score-based modelling approach and a generalization due to the generation of combinatorial complexes. Firstly, we approach diffusion through the prism of stochastic differential equations (SDEs), thus offering a generalized perspective of the main score-based generative modelling frameworks. As elucidated in \cite{jo2022scorebased}, Score matching with Langevin dynamics (SMLD) \cite{sohldickstein2015deep, ho2020denoising} and Denoising Diffusion Probabilistic Models (DDPM) \cite{NEURIPS2020_4c5bcfec} harmonize into our framework as discretizations of two separate SDEs. Secondly, our model generates combinatorial complexes (CCs), allowing us to not only generate graphs but also more intricate and higher-dimensional topological entities that serve as special cases of combinatorial complexes, including hypergraphs and simplicial complexes. Tasks like molecule generation, previously intertwined with graph generation, can now be more naturally generalized within our proposed framework. By applying transformation procedures to molecules, graphs can be seamlessly converted into CCs, preserving intrinsic geometric attributes like rings. This capacity to incorporate higher-order relations circumvents limitations observed in existing frameworks that exclusively generate graphs, thereby charting a novel path for generative AI.\par

We hope our framework could kickstart a new era of generative models by enabling the synthesis of a rich spectrum of topological structures through the generation of combinatorial complexes. Based on this baseline work, we hope researchers from different domains can work together to push the boundaries of generative AI.

\section{Objectives}
\label{Objectives}

Our objective is to develop a score-based diffusion model capable of generating combinatorial complexes, addressing the limitations of existing graph generation methods by preserving complex higher-order relationships. We aim to assess the model's performance, scalability, and efficiency across different datasets and explore applications in domains such as drug discovery where we need to generate molecules, with the hope of potentially revolutionizing these fields.\par



\section{Contributions}
\label{Contributions}

The work presented in the thesis is driven by the need for a unified framework capable of generating diverse topological structures with high fidelity. Our contributions could be summarized as follows:\par

\begin{itemize}
    \item We introduce CCSD (Combinatorial Complex Score-based Diffusion), a pioneering framework for generating combinatorial complexes. This framework extends beyond previous graph-centric approaches and incorporates score-based generative model techniques using stochastic differential equations (SDEs). We provide formal proofs of its convergence and practical implementability.
    \item We introduce novel mathematical objects that position combinatorial complexes within the broader context of generative AI.
    \item We design and redefine operators to facilitate neural network architectures capable of handling higher-order topological structures like CCs.
    \item We propose new layers and neural network architectures tailored for learning partial score functions.
    \item We develop a procedure to transform (lift) objects, such as molecules typically represented as lower-dimensional graphs, to combinatorial complexes. This includes a modified version of the path-based lifting procedure.
    \item Pioneering the generation of objects with higher dimensionality than graphs, we devise new metrics to assess the quality of generated combinatorial complexes in comparison to the original object distribution.
    \item We offer a Python library, CCSD \cite{Carrel_CCSD_-_Combinatorial_2023}, facilitating model training on diverse datasets and enabling sampling from our models. This library is well-documented and incorporates best practices as well as an extensive suite of unit test functions.
    \item We comprehensively evaluate the framework on diverse datasets by benchmarking our framework against state-of-the-art models in graph and molecule generation tasks.
\end{itemize}

In essence, our work stands at the forefront of what we can call \textbf{Generative Topological Deep Learning}, a pioneering field introduced by this thesis, and that is dedicated to the development of deep learning models proficient in generating diverse topological structures.\par

\section{Outline}
\label{Outline}

We structure this thesis as follows. First, we provide a foundational exploration of relevant literature in Chapter \ref{Background} (Background). This chapter delves into the realms of graph diffusion models and the integration of topology within deep learning and graph generation. Following this, Chapter \ref{Preliminaries} (Preliminaries) delves into the mathematics behind diffusion models and topological deep learning, providing readers with a foundational understanding of these models' origins and their capability. We believe that a deep understanding of these concepts is crucial to grasp the specificities of our approach. In Chapter \ref{Theory} (Theoretical Contributions), we present our theoretical contributions, introducing CCSD (Combinatorial Complex Score-based Diffusion), our innovative diffusion framework for generating combinatorial complexes. Chapter \ref{Implementation} (Implementation) transitions into the practical aspects of our work, explaining the implementation of our framework detailing the experiments conducted. Chapter \ref{Results_eval} (Experiments Evaluation \& Results) showcases our results after evaluating our model's performance using a variety of metrics and comparing them with other approaches. Finally, Chapter \ref{Conclusion_chap} (Conclusion) serves as the thesis' culmination, summarizing findings, explaining the limitations of our work, and charting potential paths for future research.\par

\chapter{Background}
\label{Background}

Various techniques have been developed for graph generation, including auto-regressive models (AR), variational autoencoders (VAE), normalizing flows, and generative adversarial networks (GAN) \cite{you2018graphrnn, popova2019molecularrnn, simonovsky2018graphvae, decao2022molgan}. Variational autoencoders offer the advantage of providing lower-dimensional latent representations, yet they struggle with scalability, particularly for large graphs \cite{zhang_survey_2023}. On the contrary, auto-regressive models excel in generating high-quality samples \cite{huang2022graphgdp}, but face challenges in capturing permutation-invariant properties inherent in structures like graphs. In response to these limitations, diffusion models have emerged as a promising solution.\par

Diffusion models have demonstrated success in tasks like molecular conformer generation \cite{xu2022geodiff, jing_torsional_2023}. Torsional Diffusion \cite{jing_torsional_2023}, for instance, employs topology to diffuse on a torus, reducing the search space, improving model performance, and accelerating inference. Diffusion models enhance sample quality and diversity compared to other methods \cite{vignac_digress_2023} while naturally accommodating global information and conditional dependencies, leading to precise generation. However, the unique properties of graph data sometimes necessitate adaptations to the standard diffusion process \cite{NEURIPS2019_f3a4ff48}. Continuous Gaussian noise processes often result in fully connected, noisy graphs lacking clear structural information. Methods like DiGress \cite{vignac_digress_2023} introduce a discrete denoising diffusion model, incorporating a noise model for independent noise addition to the nodes and the edges. Similarly, GDSS \cite{jo2022scorebased} considers the joint probability density of nodes and edges and trains neural networks to predict partial score functions. These two methods achieve state-of-the-art performance across various graph datasets, molecular or not.\par

While discrete diffusion methods demonstrate improved sampling quality and computational efficiency \cite{haefeli2022diffusion}, stochastic differential equations, as in GDSS, offer a more general and flexible framework with similar performance \cite{song2021scorebased}, serving as a milestone in our framework's development. Alternative approaches like the Graph Spectral Diffusion Model (GSDM) \cite{luo2022fast} insert Gaussian noise into the eigenvalues of the adjacency matrix, offering an alternative to full-rank diffusion on the entire adjacency matrix.\par

In specific tasks such as molecular and protein generation and analysis \cite{walters_applications_2021, atz2021geometric}, model properties play a crucial role. Ensuring invariance to rotation and translation, particularly for generating 3D molecular data, requires specialized approaches like roto-translation equivariant score networks for 3D  and permutation invariant and equivariant score networks for graphs dada. Equivariant neural networks, such as the graph convolution network in GeoDiff \cite{xu2022geodiff}, enforce invariance and have significantly contributed to advancing drug discovery. As a result, deep learning holds promises in identifying novel drug candidates with improved effectiveness and reduced side effects \cite{acsmolpharmaceut8b00930}.\par

Furthermore, the application of structured methods to lift graphs to higher-order structures and the incorporation of topological deep learning models can enhance predictive performance in graph learning tasks. This approach introduces an inductive bias, allowing algorithms to prioritize solutions based on factors beyond observed data \cite{hajij_topological_2023}, as demonstrated in improving classification predictions on molecular data \cite{bodnar2022weisfeiler}.\par

Despite the recent advancements in topological deep learning and diffusion models, scant attention, if any, has been devoted to models capable of generating high-order topological structures, notably combinatorial complexes. The lone model developed for hypergraph generation relies on empirical observations and multi-level decomposition \cite{Do_2020}, raising questions about generalizability and optimality, which sheds light on the need for further research in this area.\par

\chapter{Preliminaries}
\label{Preliminaries}

In this chapter, we lay the foundational groundwork necessary for comprehending the mathematical framework and subsequent implementation. We start by presenting some notations (Section \ref{Notations}) and the evolution of diffusion models (Section \ref{Diffusion}), tracing their roots from variational autoencoders to their cutting-edge form as score-based generative models. Then, we introduce key descriptions of topological structures, encompassing graphs, hypergraphs, simplicial and cell complexes, and combinatorial complexes (Section \ref{Topological_Deep_Learning}). To facilitate a deeper understanding of molecule and graph generation, central to the forthcoming results and evaluation section (Section \ref{Results_eval}), we finally introduce essential terminologies and metrics (Section \ref{Graphs_molecule_generation}).\par

\section{Notations}
\label{Notations}

To enhance conciseness and clarity, we employ specific notations throughout this thesis. Multiple random variables or observations, denoted as $x_{1},\ldots,x_{t}$, are succinctly represented as $x_{1:t}$. Similarly, when integrating over multiple variables, we adopt the notation $dx_{1:t}$ instead of $dx_{1}\ldots dx_{t}$. The domain of integration of the latent variable(s) $z$ or the variable/observations $x$ is denoted as $\mathcal{Z}$ or $\mathcal{X}$, respectively. The use of the $+$ sign (resp. ) in expressions like $\mathbb{R}_{+}^{*}$ signifies the consideration of exclusively non-negative numbers (resp. all numbers excluding $0$). Lastly, $\llbracket a, b\rrbracket$ represents an interval encompassing all integers between $a$ and $b$, including $a$ and $b$.

\section{Diffusion}
\label{Diffusion}

This section draws inspiration from the comprehensive perspective on denoising diffusion models offered by C. Luo \cite{luo_understanding_2022}, along with insights from D. McAllester \cite{mcallester_mathematics_2023} and T. Segré \cite{segre}. The objective of this part is to equip the reader with a sound understanding of the mathematics behind diffusion models, their operational mechanisms, as well as the notations and equations crucial for comprehending the proposed framework in the context of complex object generation.\\

In this work, we operate within the framework of Score-based Generative Modelling through Stochastic Differential Equations, as initially introduced in \cite{song2021scorebased}. Convergence is assured under a minimal set of assumptions \cite{chen2023sampling}, which we consider verified. Similarly, when introducing SDEs in Subsection \ref{score_sde}, we assume that the coefficients are globally Lipschitz in both state and time to ensure that we have a unique strong solution \cite{oksendal_stochastic_2003}. This section on Diffusion Models ultimately presents Variational Diffusion Models (VDMs) and Score-based Generative Models (SBMs). The primary distinction between SBMs and VDMs resides in their optimization objectives. SBMs, exemplified by models like Denoising Diffusion Probabilistic Models (DDPMs), explicitly optimize the denoising process by minimizing the difference between denoised samples and the original data. In contrast, VDMs optimize the parameters of the diffusion process to directly match the target distribution, bypassing denoising as a separate step. Theoretically, these two approaches are equivalent \cite{luo_understanding_2022} (also see Subsection \ref{towards}), although empirical evidence suggests that employing DDPMs yields better performance \cite{NEURIPS2020_4c5bcfec, saharia2022photorealistic}.\par

\subsection{Generative modelling}

A generative model, at its core, seeks to learn an underlying data distribution, symbolized as $p(x)$, based on observed data samples $x$. By acquiring knowledge of this distribution, the generative model becomes capable of generating new samples at will through a procedure called sampling. As we will see below, in certain formulations, the learned model can also be utilized to assess the likelihood of both observed and sampled data.\par

Presently, a spectrum of well-established approaches exists in the literature, and we offer a high-level overview of these conceptual paradigms. Generative Adversarial Networks (GANs) represent one approach, employing an adversarial learning framework to model the sampling procedure of complex distributions. Another category, Likelihood-Based Generative Models, focuses on learning models that assign high probabilities to observed data samples. Within this category, one encounters auto-regressive models (AR), normalizing flows, and Variational AutoEncoders (VAE). Energy-Based Modelling (EBM) constitutes a related approach, involving the learning of a highly flexible energy function that is subsequently normalized to produce a distribution.\par

Score-Based Generative Models (SBMs) share similarities with EBMs but diverge in their approaches. Instead of learning the energy function directly, SBMs train a neural network to estimate the score\footnotemark{} of a data distribution. Diffusion models, which we delve into in detail below, can be interpreted from both likelihood-based and score-based perspectives. In this section, we aim to cultivate a mathematical comprehension of diffusion models through the lens of the likelihood-based approach, providing a foundation for understanding the fundamental principles underpinning these models. We then also briefly show the equivalence of this approach to the score-based one.\par

\footnotetext{Let $X$ be a random variable, $X\sim p_{\theta^{*}}$, with $\theta^{*}$ unknown and $p_{\theta^{*}}\in\mathcal{M}_{\Theta}$ where $\mathcal{M}_{\Theta}$ is a family of conditional probability laws given $X$. Let $x\in X$ be an observation of $X$. The map $\mathcal{L}(\theta;x):=\theta\mapsto p_{\theta}(x)$ defined from $\Theta$ to $[0,1]$ is called likelihood of the parameter $\theta$. If $\Theta$ is an open set of dimension $p$, if all laws of $\mathcal{M}_{\Theta}$ have the same support $S$, and if for almost all $x\in S$, $\theta\mapsto \ln\left (\mathcal{L}(\theta;x) \right )$ is differentiable, then the \textbf{score} $S_{\theta}$ is the random vector that is the gradient of the log-likelihood with respect to the parameter vector:\\

$S_{\theta}(X)=\nabla_{\theta}\ln\left ( p_{\theta} (X)\right )=\left (\frac{\partial\ln\left ( p_{\theta}(X)\right )}{\partial\theta_{1}},\ldots,\frac{\partial\ln\left ( p_{\theta}(X)\right )}{\partial\theta_{p}} \right )^{T}$}

\subsection{Likelihood-based approach}

For many types of data, it is possible to consider the observed data as being generated or represented by a corresponding latent variable that is \textit{a priori} unknown. This latent variable can be denoted as a random variable $z$.\par

From a probabilistic perspective, we can envision the latent variables and the observed data as governed by a joint distribution $p(x, z)$. One avenue in generative modelling, known as the likelihood-based approach, involves learning a model that maximizes the likelihood $p(x)$ for all observed samples $x$. To extract the likelihood of the observed data $p(x)$ from this joint distribution, two methods are commonly employed:\par

\begin{itemize}
    \item Explicit marginalization of the latent variable $z$: $p(x)=\int p(x,z)dz$
    \item Chain rule of probability: $p(x)=\frac{p(x,z)}{p(z|x)}$
\end{itemize}

However, directly computing and maximizing the likelihood $p(x)$ presents challenges. It demands either the integration of all latent variables $z$ in the first scenario, a task infeasible for complex models, or access to an accurate latent encoder $p(z|x)$ in the second scenario.\par

Nonetheless, an approximation known as the Evidence Lower Bound (ELBO) can be derived, serving as a lower bound for the evidence $p(x)$. Hereafter, $q_{\phi}(z|x)$ denotes a flexible approximate variational distribution parameterized by $\phi$, which we aim to optimize. This distribution operates as a learnable model that estimates the true distribution of latent variables given observations $x$, effectively approximating the true posterior $p(z|x)$. In practice, this learnable model often takes the form of a neural network. Two methods for deriving the ELBO are as follows:\par

\begin{align*}
 & \log (p(x)) \\
 &= \log\left ( \int p(x,z)dz \right ) & \left ( \text{Marginalize }p(x)\right )\\
 &= \log\left ( \int_{\mathcal{Z}} \frac{p(x,z)q_{\phi}(z|x)}{q_{\phi}(z|x)}dz \right ) & \left ( \text{Multiply by }\frac{q_{\phi}(z|x)}{q_{\phi}(z|x)}=1\right )\\
 &= \log\left ( \E_{q_{\phi}(z|x)} \left [ \frac{p(x,z)}{q_{\phi}(z|x)}\right ] \right ) & \left ( \text{Expectation}\right )\\
 &\geq \E_{q_{\phi}(z|x)} \left [ \log\left ( \frac{p(x,z)}{q_{\phi}(z|x)}\right ) \right ] & \left ( \text{Apply Jensen's inequality}\right )
\end{align*}

However, the gap between the ELBO and the log of the evidence $\log (p(x))$ is unknown using the derivation above. Another derivation that uses the chain rule of probability is the following:

\begin{align*}
 & \log (p(x)) \\
 &= \log (p(x)) \int_{\mathcal{Z}}  q_{\phi}(z|x)dz & \left ( \text{Multiply by }\int_{\mathcal{Z}}  q_{\phi}(z|x)dz=1\right )\\
 &= \int_{\mathcal{Z}}  q_{\phi}(z|x)\log (p(x))dz & \left ( \text{Swap scalar and integral sign}\right )\\
 &= \E_{q_{\phi}(z|x)}\left [ \log (p(x)) \right ] & \left ( \text{Expectation}\right )\\
 &= \E_{q_{\phi}(z|x)}\left [ \log \left (\frac{p(x,z)}{p(z|x)}\right ) \right ] & \left ( \text{Chain rule of probability}\right )\\
 &= \E_{q_{\phi}(z|x)}\left [ \log \left (\frac{p(x,z)q_{\phi}(z|x)}{p(z|x)q_{\phi}(z|x)}\right ) \right ] & \left ( \text{Multiply by }\frac{q_{\phi}(z|x)}{q_{\phi}(z|x)}=1\right )\\
 &= \E_{q_{\phi}(z|x)}\left [ \log \left (\frac{p(x,z)}{q_{\phi}(z|x)}\right ) \right ] + \E_{q_{\phi}(z|x)}\left [ \log \left (\frac{q_{\phi}(z|x)}{p(z|x)}\right ) \right ] & \left ( \text{Split the expectation}\right )\\
 &= \E_{q_{\phi}(z|x)}\left [ \log \left (\frac{p(x,z)}{q_{\phi}(z|x)}\right ) \right ] + D_{KL}\left ( q_{\phi}(z|x) || p(z|x) \right ) & \left ( \text{Definition of KL divergence\footnotemark{}}\right )\\
 &\geq \E_{q_{\phi}(z|x)}\left [ \log \left (\frac{p(x,z)}{q_{\phi}(z|x)}\right ) \right ] & \left ( \text{KL divergence always non-negative}\right )
\end{align*}

\footnotetext{The Kullback-Leibler (KL) divergence is a measure of dissimilarity between two distributions. Mathematically, let $P$ and $Q$ be two probability measures on a measurable space $\mathcal{X}$ such that $P$ is absolutely continuous with respect to $Q$. The relative entropy from $Q$ to $P$ is defined by $D_{KL}\left ( P||Q \right )=\int_{\mathcal{X}}\log\left ( \frac{P(dx)}{Q(dx)}\right )P(dx)$. $\frac{P(dx)}{Q(dx)}$ is the Radon–Nikodym derivative of $P$ with respect to $Q$. It corresponds to the ratio of their density if they are dominated.}

Minimizing the KL divergence term: the difference between the approximate posterior distribution $q_{\phi}(z|x)$ and the true yet \textit{a priori} unknown posterior distribution $p(z|x)$, is unfeasible. However, the ELBO and KL divergence terms together sum to a constant, which is the evidence term $\log (p(x))$. Consequently, maximizing the ELBO can function as a surrogate objective for effectively modelling the true latent posterior distribution. Through ELBO optimization, we can progressively approach the true posterior. Thus, the ELBO serves as an objective function for this purpose. Furthermore, post-training, the ELBO facilitates the estimation of the likelihood of observed or generated data, as it is trained to approximate the model evidence $\log (p(x))$. For Variational AutoEncoders (VAE), as we will see below, maximizing the ELBO yields two components: one enabling the representation of the true data distribution in a latent space (encoder), and another allowing generation via sampling from the latent space (decoder).\par

\subsection{Variational Autoencoder}

In the classic formulation of the Variational Autoencoder (VAE) \cite{kingma2022autoencoding}, the previously introduced ELBO is maximized, and the input data is trained to predict itself following an intermediate bottleneck representation step. This method falls under the variational category as it consists of optimizing the most suitable choice for $q_{\phi}(z|x)$ from a range of potential posterior distributions parametrized by $\phi$. This term effectively functions as an encoder, transforming input data into a distribution spanning potential latent variables. Concurrently, a deterministic function $p_{\theta}(x|z)$ is learned to map a given latent vector $z$ to an observation $x$, which can be interpreted as a decoder. The nomenclature \textit{autoencoder} stems from the resemblance VAEs bear to traditional autoencoder models. To underscore this connection explicitly, the ELBO term can be further deconstructed as follows:\par

\begin{align*}
 & \E_{q_{\phi}(z|x)} \left [ \log\left ( \frac{p(x,z)}{q_{\phi}(z|x)}\right ) \right ] \\
 &= \E_{q_{\phi}(z|x)} \left [ \log\left ( \frac{p_{\theta}(x|z)p(z)}{q_{\phi}(z|x)}\right ) \right ] & \left ( \text{Chain rule of probability}\right )\\
 &= \E_{q_{\phi}(z|x)} \left [ \log\left ( p_{\theta}(x|z)\right ) \right ] + \E_{q_{\phi}(z|x)} \left [ \log\left ( \frac{p(z)}{q_{\phi}(z|x)}\right ) \right ] & \left ( \text{Split the expectation}\right )\\
 &= \underbrace{\E_{q_{\phi}(z|x)} \left [ \log\left ( p_{\theta}(x|z)\right ) \right ]}_{\text{Reconstruction term}} - \underbrace{D_{KL}\left ( q_{\phi}(z|x) || p(z)\right )}_{\text{Prior matching term}} & \left ( \text{Definition of KL divergence} \right )
\end{align*}

\subsection{Markovian Hierarchical Variational Autoencoder}

A Hierarchical Variational Autoencoder (HVAE) \cite{NIPS2016_ddeebdee, NIPS2016_6ae07dcb} represents an extension of the VAE model that introduces multiple hierarchies of latent variables. In contrast to VAEs, these latent variables are themselves generated from higher-level, and thus more abstract, latent variables.\par

Markovian Hierarchical Variational Autoencoders (MHVAE) constitute a specific subset of HVAE models wherein the generative process follows a Markov chain structure. In this structure, each transition down the hierarchy follows a Markovian principle, meaning that the decoding of each latent variable $z_{t}$ relies solely on the preceding latent variable $z_{t+1}$. This configuration can be envisioned as a cascade of VAEs stacked atop one another. Mathematically, it yields the following expressions for the joint distribution and posterior of an MHVAE:\par

Joint distribution: $p(x,z_{1:T})=p(z_{T})p_{\theta}(x|z_{1})\prod_{t=2}^{T}p_{\theta}(z_{t-1}|z_{t})$\par

Posterior: $q_{\phi}(z_{1:T}|x)=q_{\phi}(z_{1}|x)\prod_{t=2}^{T}q_{\phi}(z_{t}|z_{t-1})$\par

Employing similar derivations as in the preceding subsections, we can derive the following ELBO term:

\begin{align*}
\log (p(x)) &= \log\left ( \int_{\mathcal{Z}} p(x,z_{1:T})dz_{1:T} \right ) & \left ( \text{Chain rule of probability}\right )\\
    &= \log\left ( \int_{\mathcal{Z}} \frac{p(x,z_{1:T})q_{\phi}(z_{1:T}|x)}{q_{\phi}(z_{1:T}|x)}dz_{1:T} \right ) & \left ( \text{Multiply by }\frac{q_{\phi}(z_{1:T}|x)}{q_{\phi}(z_{1:T}|x)}=1\right )\\
    &= \log\left ( \E_{q_{\phi}(z_{1:T}|x)} \left [ \frac{p(x,z_{1:T})}{q_{\phi}(z_{1:T}|x)}\right ] \right ) & \left ( \text{Expectation}\right )\\
    &\geq \E_{q_{\phi}(z_{1:T}|x)} \left [ \log\left ( \frac{p(x,z_{1:T})}{q_{\phi}(z_{1:T}|x)}\right ) \right ] & \left ( \text{Apply Jensen's inequality}\right )
\end{align*}

An alternative form of this ELBO can be obtained by replacing the joint distribution and the posterior previously established:\par

$\log (p(x)) \geq \E_{q_{\phi}(z_{1:T}|x)} \left [ \log\left ( \frac{p(z_{T})p_{\theta}(x|z_{1})\prod_{t=2}^{T}p_{\theta}(z_{t-1}|z_{t})}{q_{\phi}(z_{1}|x)\prod_{t=2}^{T}q_{\phi}(z_{t}|z_{t-1})}\right ) \right ]$

\subsection{Variational Diffusion Models}

Variational Diffusion Models (VDMs) can be framed as a specific instance of the Markovian Hierarchical Variational Autoencoder framework, with the inclusion of three key characteristics. Firstly, the latent dimension matches exactly with the data dimension. Secondly, the structure of the latent encoder at each time step is not learned; instead, it is predetermined as a linear Gaussian model. This means that the latent encoder distribution is centred around the output of the previous time step. Finally, the Gaussian parameters of the latent encoders evolve over time, ensuring that the distribution of the latent variable at the final time step, denoted as $T$, follows a standard Gaussian distribution.\par

As the dimensions are preserved throughout encoding and decoding, we represent the latent representation, previously denoted as $z$, in the same manner as the generated or source data, denoted as $x$. For instance, following this widely used convention, we express the posterior of an MHVAE as follows: $q(x_{1:T}|x_{0})=\prod_{t=1}^{T}q(x_{t}|x_{t-1})$.\par

Regarding the linear Gaussian model structure of the encoder at each time step, two approaches exist. The mean and standard deviation of the Gaussian encoder can either be learned as parameters \cite{NEURIPS2021_b578f2a5}, or set as hyperparameters \cite{NEURIPS2020_4c5bcfec}. Specifically, the Gaussian encoder is parameterized with a mean $\mu_{t}(x_{t}) = \sqrt{\alpha_{t}}x_{t-1}$ and a variance $\Sigma_{t}(x_{t}) = (1 - \alpha_{t})I$. This formulation is said to be variance-preserving as it preserves the variance of the latent variables throughout the encoding process and allows flexibility in adding noise via the coefficient $\alpha_{t}$, which can vary with the hierarchical depth $t$. It's important to note that alternative Gaussian parameterizations are possible and yield similar results. Mathematically, the encoder transitions are expressed as $q(x_{t}|x_{t-1})\sim \mathcal{N}(x_{t};\sqrt{\alpha_{t}}x_{t-1},(1-\alpha_{t})I)$. Notably, in VDMs, the encoder distributions $q(x_{t}|x_{t-1})$ are no longer parameterized by $\phi$; they are fully modelled as Gaussians with predefined mean and variance parameters at each time step.\par

Regarding the third assumption, $\alpha_{t}$ evolves over time based on a fixed or learnable schedule, ensuring that the distribution of the final latent variable $p(x_{T})$ conforms to a standard Gaussian distribution. This simplifies decoder training and the sampling process. Consequently, the joint distribution of an MHVAE can be reformulated as $p(x_{0:T})=p(x_{T})\prod_{t=1}^{T}p_{\theta}(x_{t-1}|x_{t})$, where $p(x_{T})\sim \mathcal{N}(x_{T};0,I)$.\par

In summary, these three assumptions describe the gradual introduction of noise into an input object (e.g., graph, image, etc.) over time. The object is progressively corrupted by the addition of Gaussian noise until it eventually becomes similar to a Gaussian noise. In a VDM, the primary focus lies on learning the conditionals $p_{\theta}(x_{t-1}|x_{t})$ to enable sampling new data points. Once the VDM is optimized, the sampling procedure involves sampling Gaussian noise from $p(x_{T})$ and then iteratively applying the denoising transitions $p_{\theta}(x_{t-1}|x_{t})$ for $T$ steps to generate a new object $x_{0}$ that follows the learned original distribution $p_{0}$.\par

By adhering to the same principles, one can derive an ELBO term for VDMs. Although the expectations in the ELBO derivation can be approximated using Monte Carlo estimates, they involve two variables ($x_{t-1}$ and $x_{t+1}$), resulting in high variance for large $T$ due to the summation of $T-1$ consistency terms. To obtain a more reliable evidence lower bound, we need to reformulate the encoder transitions as $q(x_{t}|x_{t-1}) = q(x_{t}|x_{t-1}, x_{0})$. The additional conditioning term on the original data point $x_{0}$ has been added as it is redundant due to the Markov property. By applying Bayes' rule, we can express each transition as $q(x_{t}|x_{t-1},x_{0})=\frac{q(x_{t-1}|x_{t},x_{0})q(x_{t}|x_{0})}{q(x_{t-1}|x_{0})}$. This leads to the derivation of the ELBO outlined below:\par

\begin{align*}
\log (p(x_{0})) &= \log\left ( \int_{\mathcal{X}} p(x_{0:T})dx_{1:T} \right )\\
    &= \log\left ( \int_{\mathcal{X}} \frac{p(x_{0:T})q(x_{1:T}|x_{0})}{q(x_{1:T}|x_{0})}dx_{1:T} \right )\\
    &= \log\left ( \E_{q(x_{1:T}|x_{0})} \left [ \frac{p(x_{0:T})}{q(x_{1:T}|x_{0})}\right ] \right )\\
    &\geq \E_{q(x_{1:T}|x_{0})} \left [ \log\left ( \frac{p(x_{0:T})}{q(x_{1:T}|x_{0})}\right ) \right ]\\
    &\geq \E_{q(x_{1:T}|x_{0})} \left [ \log\left ( \frac{p(x_{T})\prod_{t=1}^{T}p_{\theta}(x_{t-1}|x_{t})}{\prod_{t=1}^{T}q(x_{t}|x_{t-1})}\right ) \right ]\\
    &\geq \E_{q(x_{1:T}|x_{0})} \left [ \log\left ( \frac{p(x_{T})p(x_{0}|x_{1})\prod_{t=2}^{T}p_{\theta}(x_{t-1}|x_{t})}{q(x_{T}|x_{T-1})\prod_{t=1}^{T-1}q(x_{t}|x_{t-1})}\right ) \right ]\\
    &\geq \E_{q(x_{1:T}|x_{0})} \left [ \log\left ( \frac{p(x_{T})p(x_{0}|x_{1})\prod_{t=1}^{T-1}p_{\theta}(x_{t}|x_{t+1})}{q(x_{T}|x_{T-1})\prod_{t=1}^{T-1}q(x_{t}|x_{t-1})}\right ) \right ]\\
    &\geq \E_{q(x_{1:T}|x_{0})} \left [ \log\left ( \frac{p(x_{T})p(x_{0}|x_{1})}{q(x_{T}|x_{T-1})}\right ) \right ] + \E_{q(x_{1:T}|x_{0})} \left [ \log\left ( \prod_{t=1}^{T-1}\frac{p_{\theta}(x_{t}|x_{t+1})}{q(x_{t}|x_{t-1})}\right ) \right ]\\
    &\geq \E_{q(x_{1:T}|x_{0})} \left [ \log\left ( p(x_{0}|x_{1})\right ) \right ] + \E_{q(x_{1:T}|x_{0})} \left [ \log\left ( \frac{p(x_{T})}{q(x_{T}|x_{T-1})}\right ) \right ]\\
    & + \E_{q(x_{1:T}|x_{0})} \left [ \sum_{t=1}^{T-1}\log\left (\frac{p_{\theta}(x_{t}|x_{t+1})}{q(x_{t}|x_{t-1})}\right ) \right ]\\
    &\geq \E_{q(x_{1:T}|x_{0})} \left [ \log\left ( p(x_{0}|x_{1})\right ) \right ] + \E_{q(x_{1:T}|x_{0})} \left [ \log\left ( \frac{p(x_{T})}{q(x_{T}|x_{T-1})}\right ) \right ]\\
    & + \sum_{t=1}^{T-1}\E_{q(x_{1:T}|x_{0})} \left [\log\left (\frac{p_{\theta}(x_{t}|x_{t+1})}{q(x_{t}|x_{t-1})}\right ) \right ]\\
    &\geq \E_{q(x_{1}|x_{0})} \left [ \log\left ( p(x_{0}|x_{1})\right ) \right ] + \E_{q(x_{T-1},x_{T}|x_{0})} \left [ \log\left ( \frac{p(x_{T})}{q(x_{T}|x_{T-1})}\right ) \right ]\\
    & + \sum_{t=1}^{T-1}\E_{q(x_{t-1},x_{t},x_{t+1}|x_{0})} \left [\log\left (\frac{p_{\theta}(x_{t}|x_{t+1})}{q(x_{t}|x_{t-1})}\right ) \right ]\\
    &\geq \E_{q(x_{1}|x_{0})} \left [ \log\left ( p(x_{0}|x_{1})\right ) \right ] - \E_{q(x_{T-1}|x_{0})} \left [ D_{KL}\left ( q(x_{T}|q_{T-1}) || p(x_{T}) \right ) \right ]\\
    & - \sum_{t=1}^{T-1}\E_{q(x_{t-1},x_{t+1}|x_{0})} \left [ D_{KL}\left ( q(x_{t}|x_{t-1}) || p_{\theta}(x_{t}|x_{t+1}) \right ) \right ]
\end{align*}

We can notice that the ELBO can be decomposed into three terms:

\begin{itemize}
    \item $\E_{q(x_{1}|x_{0})}\left [ \log (p_{\theta}(x_{0}|x_{1}))\right ]$ can be interpreted as a reconstruction term, similar to the one found in the ELBO of a VAE. It quantifies how well the model reconstructs the original data from a noisy version of it. This term can be approximated and optimized using Monte-Carlo estimation techniques.\\
    \item $D_{KL}\left ( q(x_{T}|x_{0}) || p(x_{T})\right )$ measures the divergence between the distribution of the final noisy input and the standard Gaussian prior. It does not involve any trainable parameters and, based on the model's assumptions, is typically close to or equal to zero.
    \item $\E_{q(x_{t}|x_{0})}\left [ D_{KL}\left ( q(x_{t-1}|x_{t}, x_{0}) || p_{\theta}(x_{t-1}|x_{t}) \right ) \right ]$ serves as a denoising matching term, analogous to its counterpart in the ELBO of a VAE. Here, the goal is to learn an approximate denoising transition step $p_{\theta}(x_{t-1}|x_{t})$ that approximates the tractable, ground-truth denoising transition step $q(x_{t-1}|x_{t}, x_{0})$, which can be considered as a ground-truth signal since it defines the denoising process for a noisy object $x_{t}$ while having access to the completely denoised object $x_{0}$. Minimizing this term aims to align the two denoising steps as closely as possible, as measured by the Kullback-Leibler (KL) divergence.
\end{itemize}

As per \cite[equations.~59-99]{luo_understanding_2022}, by applying the reparametrization trick, leveraging properties of Gaussian variables, and computing tractable KL divergences between Gaussian distributions, the optimization problem simplifies to:\par

$\underset{\theta}{\text{arg min }}D_{KL}\left ( q(x_{t-1}|x_{t},x_{0}) || p_{\theta}(x_{t-1}|x_{t}) \right )=\underset{\theta}{\text{arg min }}\frac{1}{2\sigma_{q}^{2}(t)}\frac{\overline{\alpha}_{t-1}(1-\alpha_{t})^{2}}{(1-\overline{\alpha}_{t})^{2}}\left [ \left \| \hat{x}_{\theta}(x_{t},t)-x_{0} \right \|_{2}^{2} \right ]$\par

where for all $t$ in $\llbracket 1, T\rrbracket$, $\overline{\alpha}_{t}=\prod_{i=1}^{t}\alpha_{i}$ and $\sigma_{q}^{2}(t)=\frac{(1-\alpha_{t})(1-\overline{\alpha}_{t})}{1-\overline{\alpha}_{t}}$ is defined after all the derivations: $\Sigma_{q}(t)=\sigma_{q}^{2}(t)I$, with $q(x_{t-1}|x_{t},x_{0})\propto \mathcal{N}\left ( x_{t-1}; \mu_{q}(x_{t},x_{0}), \Sigma_{q}(t) \right )$.\par

Therefore, optimizing a VDM involves training a neural network to estimate the original ground truth object from a noisy version of it \cite{NEURIPS2020_4c5bcfec}. This can be interpreted in two other equivalent ways after some derivations, as outlined in \cite{luo_understanding_2022}. First, the neural network can be trained to predict the source noise $\epsilon_{0}$, drawn from a standard Gaussian distribution $\mathcal{N}(\epsilon; 0, I)$, that generates the noisy object $x_{t}$ given the initial object $x_{0}$. Second, the neural network can be trained to predict the score function $s_{t}=\nabla x_{t} \log (p(x_{t}))$, which represents here the gradient of $x_{t}$ in the data space. This interpretation aligns with the principles of score-based models that can also be derived from energy-based models \cite{928a56b7d6f1473e930f282a0c4b534e, song2021train}.\par

\subsection{Towards Score-based Diffusion Models}
\label{towards}

Score-based diffusion models are a category of diffusion models that rely on approximating the score function using neural networks and subsequently generate objects through a time-reversal process. These models exhibit connections with variational diffusion models, as explained below, as well as with energy-based models, which broaden their applicability within the wider context of generative models. Consequently, there has been a growing interest in this approach.\par

There are two primary classes of score-based generative models:\par

\begin{itemize}
\item \textbf{Denoising Diffusion Probabilistic Modelling (DDPM):} In DDPM \cite{NEURIPS2019_3001ef25}, a sequence of probabilistic models is trained to reverse each step of the noise corruption process. The training process often involves making approximations, leveraging domain knowledge, or utilizing knowledge of the functional form of the reverse distributions. DDPMs have found applications in graph generation, such as in models like DiGress \cite{vignac_digress_2023}, GRAPHARM \cite{kong2023autoregressive}, and SGGM \cite{yang2023scorebased}. These models are termed \textit{score-based} because, in cases where the state space is continuous, the training implicitly computes scores at each noise scale.

\item \textbf{Score Matching with Langevin Dynamics (SMLD):} SMLD \cite{yang2023scorebased} directly estimates the score at each noise scale and then employs Langevin dynamics to sample from a sequence of decreasing noise scales during the generation process. This approach has also been applied in the context of graph and molecule conformer generation, seen in models like EDP-GNN \cite{niu2020permutation} and ConfGF \cite{shi2021learning}.
\end{itemize}

Now, let's explore the equivalence between variational diffusion models and score-based models, specifically the score matching with Langevin dynamics approach. To do so, we start with Tweedie's formula: \cite{tweedie}:

\begin{theorem}[Tweedie's formula]
Let $z\sim \mathcal{N}(\mu_{z}, \Sigma_{z})$ be a Gaussian variable. Then, we have:

$\E \left [ \mu_{z}|z\right ]=z+\Sigma_{z} \nabla_{z}\log(p(z))$
\end{theorem}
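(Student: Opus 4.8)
The plan is to prove Tweedie's formula by a direct computation with the Gaussian density, writing the posterior mean $\E[\mu_z \mid z]$ in terms of the score of the marginal density $p(z)$. For clarity I would actually reinterpret the statement in the standard Bayesian way: let $\mu_z$ be an (unknown) parameter drawn from some prior, and conditionally $z \mid \mu_z \sim \mathcal{N}(\mu_z, \Sigma_z)$ with $\Sigma_z$ fixed. Then $p(z) = \int p(z \mid \mu_z)\, p(\mu_z)\, d\mu_z$ is the marginal, and the claim is $\E[\mu_z \mid z] = z + \Sigma_z \nabla_z \log p(z)$. The engine of the proof is the elementary identity for the Gaussian kernel $\varphi(z - \mu_z) = (2\pi)^{-d/2}(\det \Sigma_z)^{-1/2}\exp\!\big(-\tfrac12 (z-\mu_z)^\top \Sigma_z^{-1}(z - \mu_z)\big)$, namely
\begin{equation*}
\nabla_z \varphi(z - \mu_z) = -\Sigma_z^{-1}(z - \mu_z)\,\varphi(z - \mu_z),
\end{equation*}
equivalently $(\mu_z - z)\,\varphi(z-\mu_z) = \Sigma_z \nabla_z \varphi(z - \mu_z)$.

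First I would write $\nabla_z p(z) = \int \nabla_z \varphi(z - \mu_z)\, p(\mu_z)\, d\mu_z$, justifying the exchange of gradient and integral by dominated convergence (the Gaussian kernel and its derivative are smooth and integrably dominated locally in $z$). Substituting the identity above gives
\begin{equation*}
\nabla_z p(z) = \int \Sigma_z^{-1}(\mu_z - z)\,\varphi(z - \mu_z)\, p(\mu_z)\, d\mu_z = \Sigma_z^{-1}\!\int (\mu_z - z)\, p(z \mid \mu_z)\, p(\mu_z)\, d\mu_z.
\end{equation*}
Multiplying on the left by $\Sigma_z$ and dividing through by $p(z)$, the right-hand side becomes $\int (\mu_z - z)\, \frac{p(z\mid \mu_z)p(\mu_z)}{p(z)}\, d\mu_z = \int (\mu_z - z)\, p(\mu_z \mid z)\, d\mu_z = \E[\mu_z \mid z] - z$ by Bayes' rule, while the left-hand side is $\Sigma_z \frac{\nabla_z p(z)}{p(z)} = \Sigma_z \nabla_z \log p(z)$. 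Rearranging yields the claim.

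The only subtle point — the main obstacle, such as it is — is the regularity/integrability justification needed to differentiate under the integral sign and to guarantee $\E[\mu_z \mid z]$ is finite; this is handled by noting the Gaussian kernel has all moments and is dominated on compact $z$-neighborhoods, so in the clean cases considered here it is routine. I would state these as mild assumptions rather than belabour them. A brief remark worth including: the formula extends verbatim to any prior on $\mu_z$, and in the diffusion setting one applies it with $z = x_t$, $\mu_z = \sqrt{\overline{\alpha}_t}\,x_0$, and $\Sigma_z = (1-\overline{\alpha}_t)I$, which is precisely what connects the learned score $\nabla_{x_t}\log p(x_t)$ to the denoising prediction of $x_0$, establishing the equivalence advertised before the theorem.
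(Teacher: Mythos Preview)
Your derivation is correct and is the standard proof of Tweedie's formula: differentiate the Gaussian likelihood to get $(\mu_z - z)\,\varphi(z-\mu_z) = \Sigma_z \nabla_z \varphi(z-\mu_z)$, integrate against the prior, swap derivative and integral, and divide by $p(z)$ to recognise the posterior mean via Bayes' rule. Your clarification that the statement should be read in the Bayesian sense (with $\mu_z$ random, $\Sigma_z$ fixed, and $p(z)$ the marginal) is exactly right and fixes an ambiguity in the paper's phrasing.

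Note, however, that the paper does not actually supply a proof of this theorem: it is stated with a citation to the original source and then immediately applied to the diffusion transition $q(x_t\mid x_0)=\mathcal{N}(\sqrt{\overline{\alpha}_t}\,x_0,(1-\overline{\alpha}_t)I)$. So there is nothing to compare against beyond that application, which you also sketched in your closing remark. Your write-up therefore goes strictly beyond what the paper offers, and does so correctly; the only cosmetic point is that the dominated-convergence justification you flag is indeed routine here and can be stated as a one-line hypothesis rather than discussed at length.
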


As derived in C. Luo \cite{luo_understanding_2022}, the samples $x_{t}$ are drawn from the distribution $q(x_{t}|x_{0})=\mathcal{N}(\mu_{x_{t}},\Sigma_{x_{t}})$ where $\mu_{x_{t}}=\sqrt{\overline{\alpha}_{t}}x_{0}$ and $\Sigma_{x_{t}}=(1-\overline{\alpha}_{t}) I$. Applying Tweedie's formula to this distribution yields:\par

$\sqrt{\overline{\alpha}_{t}}x_{0}=\E \left [ \mu_{x_{t}}|x_{t}\right ]=x_{t}+(1-\overline{\alpha}_{t}) \nabla_{x_{t}}\log(p(x_{t}))\implies x_{0}=\frac{x_{t}+(1-\overline{\alpha}_{t}) \nabla_{x_{t}}\log(p(x_{t}))}{\sqrt{\overline{\alpha}_{t}}}$. Therefore,\par

$\mu_{q}(x_{t},x_{0})=\frac{\sqrt{\alpha_{t}}(1-\overline{\alpha}_{t-1})x_{t}+\sqrt{\overline{\alpha}_{t-1}}(1-\alpha_{t})x_{0}}{1-\overline{\alpha}_{t}}=\frac{1}{\sqrt{\alpha_{t}}}x_{t}+\frac{1-\alpha_{t}}{\sqrt{\alpha_{t}}}\nabla_{x_{t}}\log\left (p(x_{t})\right )$\par

More information, including detailed derivation with all the steps, could be found in the literature \cite{luo_understanding_2022}.\par

We can train a neural network, denoted as $\mu_{\theta}(x_{t},t)$, to approximate the mean of the denoising transition. This approximation is achieved by estimating the score function using a Noise Conditional Score Network (NCSN), represented as $s_{\theta}(x_{t},t)$, and can be expressed as:\par

$\mu_{\theta}(x_{t},t)=\frac{1}{\sqrt{\alpha_{t}}}x_{t}+\frac{1-\alpha_{t}}{\sqrt{\alpha_{t}}}s_{\theta}(x_{t},t)$\par

The neural network $s_{\theta}$ is designed to predict the gradient of the score of $p(x_{t})$ in the data space where $x_{t}$ is a point subjected to a specific level of injected noise, denoted as $t$. Consequently, the optimization problems take the following form:\par

\begin{align*}
 & \underset{\theta}{\text{arg min }}D_{KL}\left ( q(x_{t-1}|x_{t},x_{0}) || p_{\theta}(x_{t-1}|x_{t}) \right )\\
 &= \underset{\theta}{\text{arg min }}D_{KL}\left ( \mathcal{N}(\mu_{q}, \Sigma_{q}(t)) || \mathcal{N}(\mu_{\theta}, \Sigma_{q}(t)) \right )\\
 &= \underset{\theta}{\text{arg min }}\frac{1}{2\sigma_{q}^{2}(t)} \left [ \left \| \frac{1}{\sqrt{\alpha_{t}}}x_{t}+\frac{1-\alpha_{t}}{\sqrt{\alpha_{t}}}\nabla_{x_{t}}\log(p(x_{t}) - \frac{1}{\sqrt{\alpha_{t}}}x_{t}-\frac{1-\alpha_{t}}{\sqrt{\alpha_{t}}}s_{\theta}(x_{t},t) \right \|_{2}^{2} \right ]\\
 &= \underset{\theta}{\text{arg min }}\frac{(1-\alpha_{t})^{2}}{2\sigma_{q}^{2}(t)\alpha_{t}} \left [ \left \| \nabla_{x_{t}}\log\left ( p(x_{t})\right ) - s_{\theta}(x_{t},t) \right \|_{2}^{2} \right ]
\end{align*}

Training our model consists of predicting the score. Thus, this concludes the analogy to score-based generative models.

\subsection{Score-based Generative Modelling through Stochastic Differential Equations}
\label{score_sde}

Recent works have showcased the integration of DDPM and SMLD under a unified framework known as score-based generative modelling through stochastic differential equations (SDE) \cite{song2021scorebased}. This framework has been applied to graph generation, as exemplified by GDSS (Graph Diffusion via the System of Stochastic Differential Equations) \cite{jo2022scorebased}. The core concept involves transforming data from its original distribution to a noise distribution, effectively treated as the prior, through the use of a Stochastic Differential Equation (SDE). Subsequently, the generation process is done by reversing the same SDE or reversing the associated probability flow Ordinary Differential Equation (ODE). The reverse-time SDE (RSDE) and the probability flow ODE are obtained by estimating the score. Before introducing the mathematics behind this idea, we introduce a definition of a diffusion process, as well as a reminder of what is a standard Wiener process (or standard Brownian movement).

\begin{definition}[Diffusion process]
A diffusion process of length $T\in\mathbb{R}_{+}$, on a data distribution $p_{0}$ (or $p_{\text{data}}$) with a prior $p_{T}$ is a stochastic process $(x_{t})_{t\in [0, T]}$ where $t$ is a continuous time variable, such that $p_{0}$ is made of independent and identically distributed samples, $p_{T}$ is tractable, $x(0)\sim p_{0}$, and $x(T)\sim p_{T}$.
\end{definition}

\begin{definition}[Standard Wiener Process]
A standard Wiener process $W=(W_{t})_{t\in\mathbb{R}_{+}}$ \cite{Revuz1999}, or standard Brownian movement, is a stochastic process such that:\par

\begin{itemize}
    \item $W_{0}=0$ almost surely,
    \item $W$ has \textit{independent increments}, which means that for every $m\in\mathbb{N}$, for every $t_{0},\ldots,t_{m}$ such that $t_{0}<\ldots <t_{m}$, the random variables $\left ( W_{t_{i+1}}-W_{t_{i}}\right )_{i=0}^{m-1}$ are stochastically independent,
    \item For all $s, t\in\mathbb{R}_{+}$, $W_{t+s}-W_{t}\sim\mathcal{N}(0,s)$,
    \item $t\mapsto W_{t}$ is almost surely continuous.
\end{itemize}
\end{definition}

Let $(x_{t})_{t\in [0, T]}$ be a diffusion process. We can model it as the solution to an Itô stochastic differential equation:\par

$dx=f_{t}(x)dt+g_{t}dW$, where $W$ is the standard Wiener process (or Brownian motion), $f_{t}(\cdot):\mathbb{R}^{d}\rightarrow \mathbb{R}^{d}$ is referred to as the \textit{drift coefficient}\footnotemark{}, and $g_{(\cdot)}:\mathbb{R}\rightarrow \mathbb{R}$ is a scalar function known as the \textit{diffusion coefficient} of $x(t)$.\par

\footnotetext{The notation $f_{t}(\cdot):=f(\cdot,t)$ is used to write a function of space and time.}

For the generation of samples, we begin by sampling a noisy object from our prior, $x(T)\sim p_{T}$. Then, we employ a result from Anderson \cite{anderson}. We reverse the diffusion process in time by following the reverse-time SDE associated with the forward SDE mentioned earlier:\par

$dx=\left [f_{t}(x)-g_{t}^{2}\nabla_{x}\log\left ( p_{t}(x) \right ) \right ]d\tilde{t} + g_{t}d\tilde{W}$, where $\tilde{W}$ is a standard Wiener process when time flows backwards, from $T$ to $0$, and where $d\tilde{t}$ is an infinitesimal \textit{negative} timestep. To implement this generative modelling method, we only require a way of estimating the score of all marginal distributions $\nabla_{x}\log \left ( p_{t}(x) \right )$ for all $t\in [0,T]$.\par

For further elaboration on this method, refer to Song et al. \cite{song2021scorebased} and our framework, CCSD, introduced in Section \ref{CCSD}. The training objective relies on score-matching \cite{JMLR:v6:hyvarinen05a,pmlr-v115-song20a,song2021scorebased}, with the derivation extensively detailed in previous work \cite[Section 3.3]{song2021scorebased}. Different types of SDEs explored in our framework, namely VE, VP, and sub-VP SDE, are presented in Song et al. along with their derivations \cite[Appendix B.]{song2021scorebased}.\par

This concludes the diffusion preliminaries section.\par

\section{Topological Deep Learning}
\label{Topological_Deep_Learning}

The notations and object definitions presented in this chapter are primarily drawn from the works of Papillon et al. \cite{papillon2023architectures} and Hajij et al. \cite{hajij_topological_2023}. Our initial focus is on establishing the concept of a topological space through the lens of neighbourhoods, rather than relying on open sets. Subsequently, we delve into the core objects employed in Topological Deep Learning, starting with graphs and extending to the more abstract and versatile combinatorial complexes that we aim to generate in our work. Lastly, we present the notion of \textit{lifting} within the context of topological deep learning, complemented by two illustrative examples of lifting procedures utilized in our experiments.\par

\subsection{From the Graphs to the Combinatorial Complexes}

Here, we provide definitions of a topological space to foster a comprehensive understanding of topology in the context of this work. In our context, topology pertains to the structure and connectivity of the objects manipulated in our experiments. An overview of these different objects that will be presented below is presented in Figure \ref{fig:topo}.\par

\begin{definition}[Neighborhood function]
Let $S$ be a non-empty set. A neighborhood function on $S$ is a function $\mathcal{N}: S \rightarrow \mathcal{P}(S)$ that assigns to each point $x$ in $S$ a non-empty subset $\mathcal{N}(x)$ of the powerset $\mathcal{P}(S)$ of $S$. The elements of $\mathcal{N}(x)$ are called neighbourhoods of $x$ with respect to $\mathcal{N}$.
\end{definition}

\begin{definition}[Neighborhood topology]
Let $\mathcal{N}$ be a neighbourhood function on a set $S$. $\mathcal{N}$ is called a neighborhood topology on $S$ if it satisfies the following axioms:

\begin{itemize}
    \item If $N$ is a neighborhood of $x$, then $x\in N$.
    \item If $N$ is a subset of $S$ containing a neighborhood of $x$, then $N$ is a neighborhood of $x$.
    \item The intersection of two neighborhoods of a point $x$ in $S$ is a neighborhood of $x$.
    \item Any neighborhood $N$ of a point $x$ in $S$ contains a neighborhood $M$ of $x$ such
that $N$ is a neighborhood of each point of $M$.
\end{itemize}
\end{definition}

\begin{definition}[Topological space]
Let $S$ be a non-empty set. A topological space is a pair $(S, \mathcal{N})$ where $\mathcal{N}$ is a neighbourhood topology on $S$.
\end{definition}

\begin{definition}[Undirected Graph]
Let $S$ be a non-empty set. A graph on $S$ is a pair $(S, E)$ where $E$ is a set of non-empty subsets of \textbf{size 2} of the powerset $\mathcal{P}(S)$ of $S$, which are called edges. Elements of $S$ are called vertices.\par

A graph is said to be undirected if for all $(u, v)\in E$, $(v, u)\in E$.
\end{definition}

In this thesis, we will refer to undirected graphs when mentioning graphs. Moving beyond this, we introduce more abstract topological structures that offer generalizations of graphs. Hypergraphs provide the advantage of providing relations between nodes or entities that extend beyond pairwise interactions. Simplicial complexes and regular cell complexes introduce order among sets of nodes. Combinatorial complexes encompass both of these properties, making them more versatile.

\begin{definition}[Hypergraph]
Let $S$ be a non-empty set. A hypergraph on $S$ is a pair $(S, X)$, where $X$ is a set of non-empty subsets of the powerset $\mathcal{P}(S)$ of $S$, which are called hyperedges. Elements of $S$ are called vertices.
\end{definition}

\begin{definition}[Simplicial complex]
An abstract simplicial complex on a nonempty set $S$ is a pair $(S, X)$, where $X$ is a subset of $\mathcal{P}(S) \backslash \{\emptyset \}$ such that, for all $x\in X$, for all $y \in \mathcal{P}(S)$, $y\subseteq x$ implies $y\in X$. Elements of $X$ are called simplices.
\end{definition}

\begin{remark}
This can be interpreted as a generalization of triangles in a more abstract space.
\end{remark}

\begin{definition}[Regular cell complex (CW complex)]
A regular cell complex is a topological space $(S, \mathcal{T})$ with a partition into sub-spaces (cells) $(x_{\alpha})_{\alpha\in P_{S}}$, where $P_{S}$ is an index set, satisfying the following conditions:

\begin{itemize}
\item $S = \cup_{\alpha\in P_{S}} int(x_{\alpha})$, where $int(x)$ denotes the interior of cell $x$
\item For each $\alpha\in P_{S}$, there exists a homeomorphism $\phi$ from $x_{alpha}$ to $\mathbb{R}^{n_{\alpha}}$ for some $n_{\alpha}\in\mathbb{N}$, called the dimension $n_{\alpha}$ of cell $x_{\alpha}$
\item For each cell $x_{\alpha}$, the boundary $\partial x_{\alpha}$ is a union of finitely many cells, each having a dimension less than that of $x_{\alpha}$.
\end{itemize}
\end{definition}

\begin{remark}A graph is a 1-dimensional CW complex in which the 0-cells are the vertices and the 1-cells are the edges.
\end{remark}

Last but not least, below is the definition of a combinatorial complex.

\begin{definition}[Combinatorial Complex]
A combinatorial complex (CC) is a triple $(S, \mathcal{X}, rk)$ consisting of a set $S$, a subset $\mathcal{X}$ of $\mathcal{P}(S) \backslash \{\emptyset \}$, and a function $rk: \mathcal{X} \rightarrow \mathbb{N}$ with the following properties:
\begin{itemize}
    \item $\forall s\in S, \{s\}\in \mathcal{X}$
    \item the function $rk$ is order-preserving, which means that if $x, y\in \mathcal{X}$ satisfy $x \subseteq y$, then $rk(x) \leq rk(y)$.
\end{itemize}

The elements of $S$ are called entities or vertices, the elements of $\mathcal{X}$ are called relations or cells, and $rk$ is called the rank function of the CC. The dimension of a CC is $\text{dim}(CC)=\text{max}(rk(\mathcal{X}))$ and, for all $r\in\llbracket 0, \text{dim}(CC)\rrbracket$, we note $\mathcal{X}_{r}$ the set of all cells or rank $r$ ($\mathcal{X}_{r}=rk^{-1}(r)$). In this thesis, we will often denote $R=\text{dim}(CC)$.
\end{definition}

\begin{figure}
\centering
\includegraphics[width=0.9\linewidth]{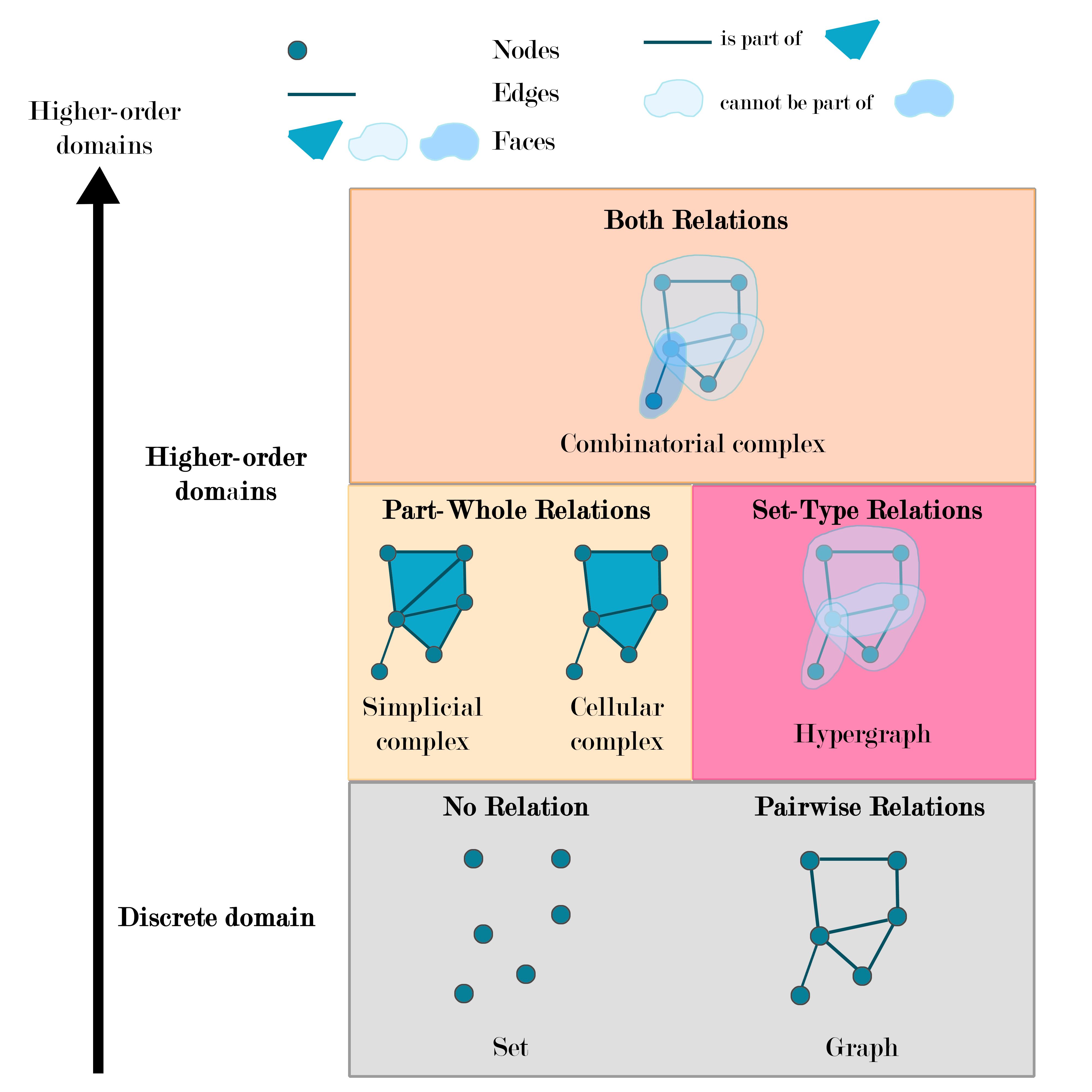}
\captionof{figure}{\textbf{Overview of different topological structures.} From the sets and graphs to the combinatorial complex, this figure presents the hierarchy of some topological structures in function of how they incorporate higher-order relations as part of their definitions. Combinatorial complexes generalize all these objects as they both have part-whole relations and set-type relations \cite{hajij_topological_2023}. The figure has been adapted from Papillon et al. \cite{papillon2023architectures} et Hajij et al. \cite{hajij_topological_2023}}
\label{fig:topo}
\end{figure}

With these foundational objects introduced, we can now delve into lifting procedures that consist of transforming lower-dimensional objects into higher-order ones. For instance, we will use later lifting procedures to convert graph datasets into combinatorial complex datasets to apply our framework.\par

\subsection{Lifting}

Lifting denotes the process of mapping a featured domain to another featured domain through a well-defined procedure \cite{papillon2023architectures, hajij_topological_2023}. For example, the incorporation of rank-2 cells onto a graph, transforming it into a combinatorial complex, represents a lifting procedure. In this work, we employ two specific lifting procedures outlined in \cite{hajij_topological_2023}: the loop-based and the path-based methods. We have slightly modified the path-based procedure to accommodate multiple paths and multiple source nodes. Below, Figure \ref{fig:molecule_lift} (resp. Figure \ref{fig:path_lift}) illustrates the loop-based (resp. path-based) lifting procedure applied to a molecule.\par

\begin{definition}[Loop-based CC of a graph]
Let $G = (S, E)$ be a graph. We associate a CC structure with $G$ that considers loops in $G$. We define a loop-based CC of $G$ \cite{hajij_topological_2023}, denoted by $CC_{loop}(G)$, to be a CC consisting of 0-cells, 1-cells and 2-cells specified as follows. First, we set $\mathcal{X}_{0}$ and $\mathcal{X}_{1}$ in $CC_{loop}(G)$ to be the nodes and edges of $G$, respectively. We now explain how to construct a 2-cell in $CC_{loop}(G)$. A 2-cell in $CC_{loop}(G)$ is a set $C = \{x_{0}^{1},\ldots,x_{0}^{k}\} \subset\mathcal{X}_{0}$ such that for all $i\in\llbracket 1, k-1\rrbracket$, $\{x_{0}^{i},x_{0}^{i+1}\}$ and $\{x_{0}^{k},x_{0}^{1}\}$ are the only edges in $\mathcal{X}_{1}\cap C$. The set $\mathcal{X}_{2}$ in $CC_{loop}(G)$ is a nonempty collection of elements $C$. It is easy to verify that $CC_{loop}(G)$ is a CC with $\text{dim}(CC_{loop}(G)) = 2$. Note that the sequence $(x_{0}^{1},\ldots,x_{0}^{k})$ defines a loop in $G$. This loop is called the loop that characterizes the 2-cell $C = \{x_{0}^{1},\ldots,x_{0}^{k}\}$.
\end{definition}

\begin{remark}
When the graph is extracted from a molecule, the loops or cycles will refer to the rings of the molecule. Therefore, we will also refer to this method as a \textbf{ring-based lifting procedure}.
\end{remark}

\begin{figure}[H]
\centering
\includegraphics[width=0.8\linewidth]{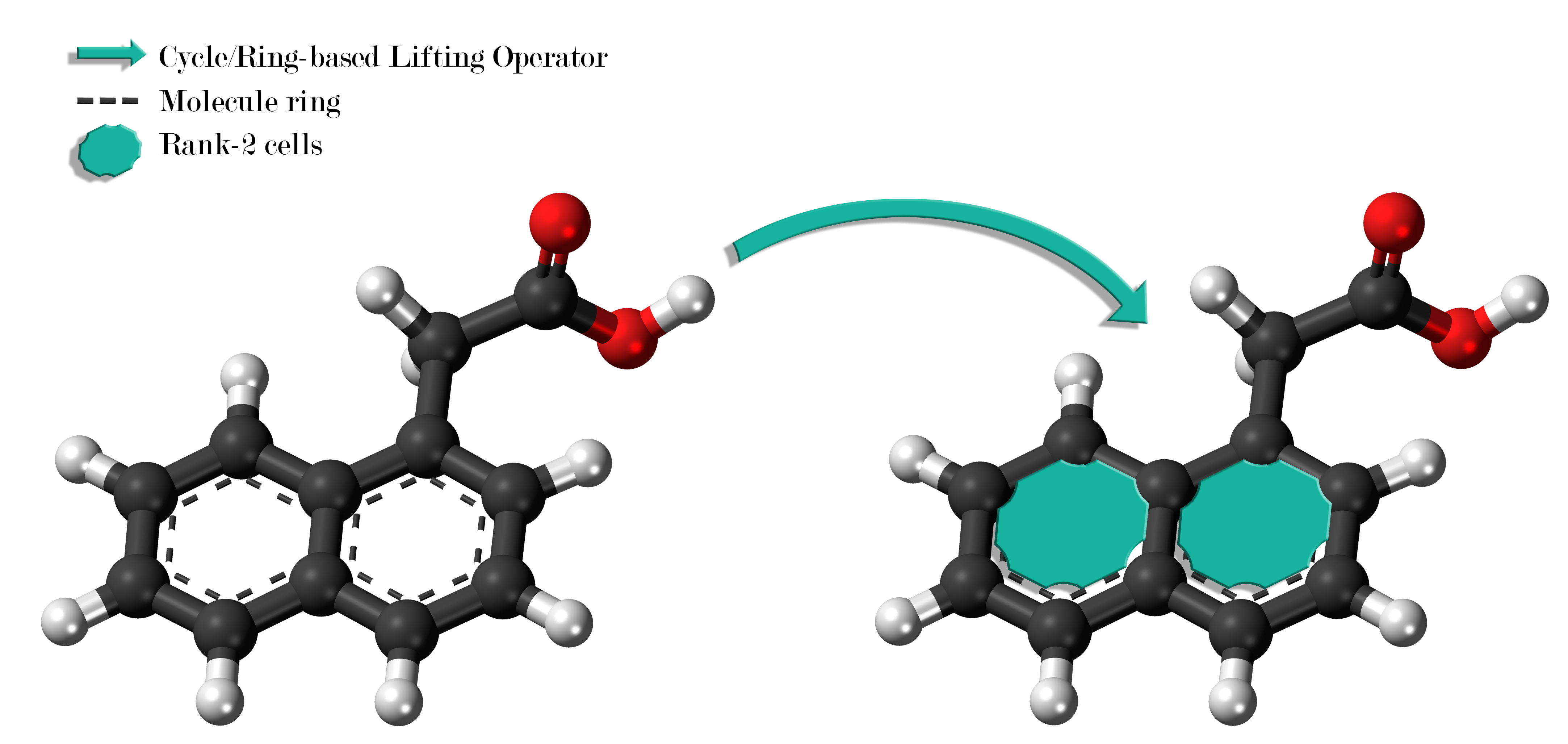}
\captionof{figure}{\textbf{Overview on the ring-based lifting procedure.} We start from the graph representation of a molecule, here a 1-naphthaleneacetic acid. Once the nodes belonging to a ring are identified, we group them to form a rank-2 cell that is added to create a combinatorial complex.}
\label{fig:molecule_lift}
\end{figure}

\begin{definition}[Path-based CC of a graph]
Let $G = (S, E)$ be a graph. We associate a CC structure with $G$ that considers paths in $G$. We define a loop-based CC of $G$, denoted by $CC_{P}(G)$, to be a CC consisting of 0-cells, 1-cells and 2-cells specified as follows. First, we set $\mathcal{X}_{0}$ and $\mathcal{X}_{1}$ in $CC_{P}(G)$ to be the nodes and edges of $G$, respectively. We now explain how to construct a 2-cell in $CC_{P}(G)$. Let $\mathcal{S}$ be a set of nodes that we will call sources nodes and $k\geq 1$ be a path length. Both objects are parameters. Let $\mathcal{P}$ be the set of all paths in $G$ starting from a node that belongs to $\mathcal{S}$ and that has exactly $k$ different nodes. A 2-cell in $CC_{P}(G)$ is a set $C = \{x_{0}^{1},\ldots,x_{0}^{k}\} \subset\mathcal{X}_{0}$ such that for all $x=(x_{0}^{1},\ldots,x_{0}^{k})\in C$, it exists a permutation $\pi_{k}\in P_{k}$ such that $\pi_{k}(x)\in\mathcal{P}$ and such that for all $i\in\llbracket 1, k\rrbracket$, $(\pi_{k}(x)_{i}, \pi_{k}(x)_{(i+1) \% k})\in\mathcal{X}_{1}$. It is easy to verify too that $CC_{P}(G)$ is a CC with $\text{dim}(CC_{P}(G)) = 2$.
\end{definition}

\begin{figure}[H]
  \centering
  \includegraphics[width=0.9\linewidth]{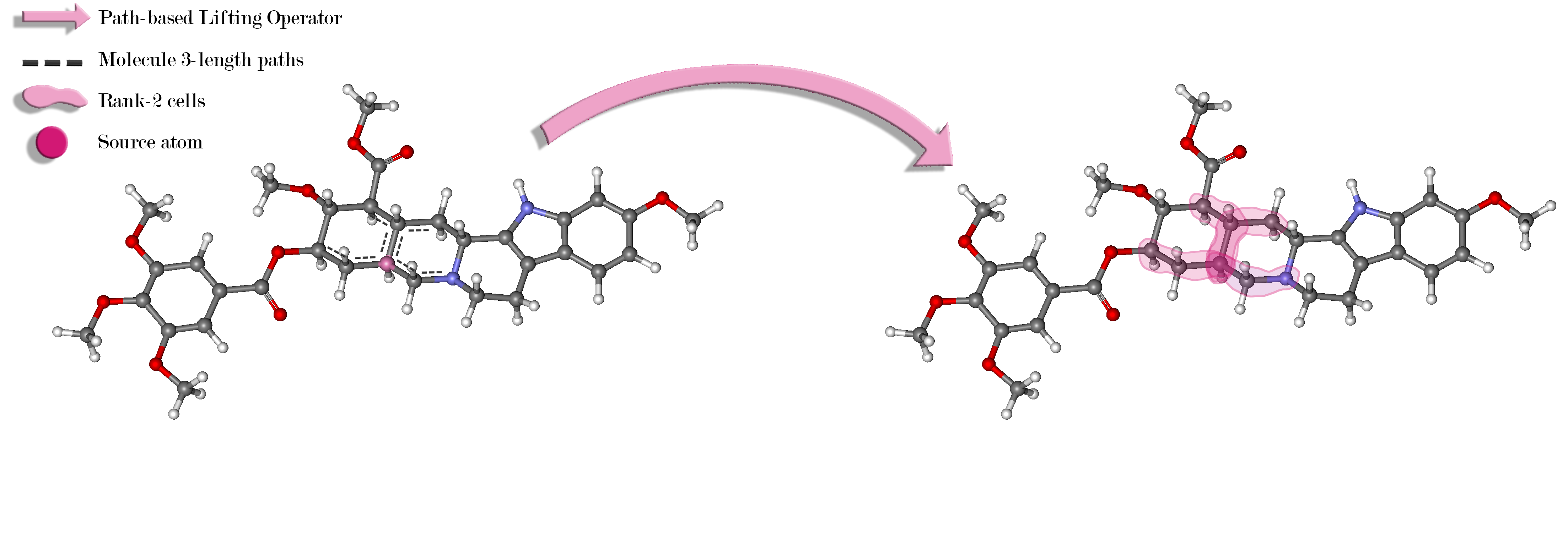}
  \captionof{figure}{\textbf{Overview on the path-based lifting procedure.} We start from the graph representation of a molecule, here an adelphan acid (more precisely, Reserpine). We start with one or many source node(s) and a path length $k\geq 1$. We identify the nodes belonging to the same paths of length $k$ in the graphs and that start with a node that belongs to the set of source nodes. We group them together to form a rank-2 cell that is added to create a combinatorial complex.}
  \label{fig:path_lift}
\end{figure}

\section{Graphs and Molecule Generation Metrics}
\label{Graphs_molecule_generation}

To effectively evaluate generative models, it is crucial to establish a quantitative measure of the proximity between generated samples and the original data distribution. However, evaluating complex structures such as graphs, molecules, and combinatorial complexes presents a challenge due to the absence of a well-defined distribution for these objects, as they are not mere numerical values. A viable approach involves assessing distributions of specific metrics for a given sample of these objects and a reference sample, subsequently evaluating the dissimilarity between these distributions. To do so, numerous metrics have been developed in previous works and are commonly employed to assess methods that generate graphs and molecules. In this section, we present the metrics used for benchmarking our models.\par

Later on, in section \ref{Metrics}, we will introduce novel metrics tailored to evaluate the quality of higher-order topological structures like combinatorial complexes. To the best of our knowledge, such an endeavour has not been previously undertaken.\par

\subsection{Evaluate distributions}

Maximum Mean Discrepancy (MMD) is used to quantify the difference or dissimilarity between two probability distributions $\mathcal{P}$ and $\mathcal{Q}$. Below, we present the definition of a kernel, of the discrepancy function, before introducing the definition of the MMD.\par

\begin{definition}[Kernel]
A kernel is a function $K:\mathbb{R}^{d}\mapsto \mathbb{R}$ integrable on $\mathbb{R}^{d}$ such that $\int_{\mathbb{R}^{d}}K(x)dx=1$, where $d\in\mathbb{N}^{*}$.
\end{definition}

\begin{remark}
In practice, the kernels are often chosen to be positive and symmetric.
\end{remark}

\begin{definition}[Discrepancy]
Let $X=\left ( X_{i}\right )_{1\leq i\leq M}\in\left (\mathbb{R}^{d}\right )^{M}$, $Y=\left ( Y_{j}\right )_{1\leq j\leq N}\in\left (\mathbb{R}^{d}\right )^{N}$, be two sets of histograms of size $d$ where $d,M,N\in\mathbb{N}^{*}$, and $K$ be a kernel function. The discrepancy associated with the kernel $K$, denoted $\mathcal{D}_{K}$, is defined by:\par

$\mathcal{D}_{K}(X,Y)=\sum_{i=1}^{M}\sum_{j=1}^{N}K(X_{i},Y_{j})$.
\end{definition}

\begin{remark}
The discrepancy is a symmetric function.
\end{remark}

\begin{definition}[Maximum Mean Discrepancy]
The Maximum Mean Discrepancy (MMD) between histograms $P$ and $Q$ is defined as:\par

$\text{MMD}(P, Q) = \sup_{f \in \mathcal{F}} \left | \E_{x \sim P}\left [f(x)\right ] - \E_{y \sim Q}\left [ f(y) \right ] \right |$, where:\par

\begin{itemize}
    \item $x$ (resp. $y$) is a random variable drawn from $P$ (resp. $Q$),
    \item $f$ is a function from a function space $\mathcal{F}$ that returns values in $\mathbb{R}$. In practice, it is often a kernel function,
    \item $\E_{x \sim P}\left [ f(x) \right ]$ represents the expected value of $f(x)$ with $x$ being drawn from $P$,
    \item $|\cdot |$ is the absolute value.
\end{itemize}

In practice, we equip the MMD with a kernel function $K(\cdot, \cdot)$, and compute:\par

$\text{MMD}(P, Q)=\mathcal{D}_{K}(P, P)+\mathcal{D}_{K}(Q,Q)-2 \mathcal{D}_{K}(P,Q)$, where $\mathcal{D}_{K}$ is the discrepancy function associated with the kernel $K$.
\end{definition}

Now that we have defined how to compare two distributions, let's define the two kernels, \textit{Gaussian} and \textit{Gaussian EMD} that have been implemented to evaluate the MMD for the different metrics.\par

\begin{definition}[Gaussian Kernel]
Let $\sigma>0$ be a standard deviation. The Gaussian kernel $\mathcal{G}$ between two distributions $x$ and $y$ is defined by:\par

$\mathcal{G}(x,y)=e^{-\frac{\left \|x-y \right \|_{2}^{2}}{2\sigma^{2}}}$.
\end{definition}

\begin{definition}[Earth Mover's Distance (EMD)]
The Earth Mover's Distance (EMD) \cite{monge_memoire_1781,Carrel_Optimal_transport_applied_2019} between two probability distributions $P$ and $Q$ associated with the distance $d$ is defined by:\par

$\text{EMD}(P,Q)=\underset{\gamma\in\Pi (P,Q)}{\text{inf }}\E_{(x,y)\sim\gamma}\left [ d(x,y)\right ]$, where $\Pi (P, Q)$ is the set of all joint distributions whose marginals are $P$ and $Q$.
\end{definition}

\begin{definition}[Gaussian EMD Kernel]
Let $\sigma>0$ be a standard deviation. The Gaussian EMD kernel $\mathcal{G}_{\text{EMD}}$ between two distributions $x$ and $y$ is the Gaussian kernel where the squared distance in the exponential term is replaced by the Earth Mover's distance. More precisely,\par

$\mathcal{G}_{\text{EMD}}(x,y)=e^{-\frac{\text{EMD}(x,y)}{2\sigma^{2}}}$.
\end{definition}

\subsection{Graphs}

For graph generation tasks, we will mainly look at the degree, clustering coefficient and orbit distribution. We define these three metrics below:\par

\begin{definition}[Degree Distribution]
Let $G$ be an undirected graph with $N$ nodes. We note its adjacency matrix $A=(A_{i,j})_{1\leq i,j\leq N}$, without self-loops. We index the nodes from 1 to $N$, same for the rows and columns of $A$.\par

The degree of a node $i\in\llbracket 1,N\rrbracket$ is $\text{deg}_{G}(i)=\sum_{j=1}^{N}A_{i,j}$.\par

Let's note $\text{deg}_{\text{min}}(G)=\underset{1\leq i\leq N}{\text{min }}\text{deg}_{G}(i)$ and $\text{deg}_{\text{max}}(G)=\underset{1\leq i\leq N}{\text{max }}\text{deg}_{G}(i)$. The degree distribution associated with the graph $G$ is the vector $\left ( d_{j} \right )_{\text{deg}_{\text{min}}(G)\leq j\leq \text{deg}_{\text{max}}(G)}$ where for all $j\in\llbracket \text{deg}_{\text{min}}(G),\text{deg}_{\text{max}}(G) \rrbracket$, $d_{j}=\sum_{i=1}^{N}\mathbb{1}_{\{ \text{deg}_{G}(i)=j \}}$.
\end{definition}

\begin{definition}[Clustering Coefficient Distribution]
Let $G$ be an undirected graph with $N$ nodes. We note its adjacency matrix $A=(A_{i,j})_{1\leq i,j\leq N}$, without self-loops. We index the nodes from 1 to $N$, same for the rows and columns of $A$.\par

The clustering coefficient of a node $i\in\llbracket 1,N\rrbracket$ is $C_{G}(i)=\frac{\lambda_{G}(i)}{\tau_{G}(i)}$,\par

where $\lambda_{G}(i)=2 \sum_{j=1}^{N}\sum_{k=1}^{N}\mathbb{1}_{\{ A_{i,j}=1\}\cap\{ A_{i,k}=1\}\cap\{ A_{j,k}=1\}}$ and $\tau_{G}(i)=\text{deg}_{G}(i)\left ( \text{deg}_{G}(i)-1\right )$.\par

$\lambda_{G}(i)$ represents the number of triangles that we can construct with the neighbours of the node $i$, whereas $\tau_{G}(i)$ represents the number of links that could exist among the vertices within the neighbourhood of $i$. A high clustering coefficient thus means that a node is highly connected to its neighbourhood.\par

Let's note $C_{\text{min}}(G)=\underset{1\leq i\leq N}{\text{min }}C_{G}(i)$ and $C_{\text{max}}(G)=\underset{1\leq i\leq N}{\text{max }}C_{G}(i)$. The clustering coefficient distribution associated with the graph $G$ is the vector $\left ( c_{j} \right )_{C_{\text{min}}(G)\leq j\leq C_{\text{max}}(G)}$ where for all $j\in\llbracket C_{\text{min}}(G),C_{\text{max}}(G) \rrbracket$, $c_{j}=\sum_{i=1}^{N}\mathbb{1}_{\{ C_{G}(i)=j \}}$.
\end{definition}

\begin{definition}[Orbit Distribution]
Let $G=(V,E)$ be an undirected graph with $N=|V|$ nodes. The orbit of a node $i\in\llbracket 1,N\rrbracket$ is defined by $\text{Orb}(G,i)=|\{ w\in V | \exists\sigma\in\text{Aut}(G):\sigma(v)=w \}|$ \cite{10.1093/bioinformatics/btx758}, where $\text{Aut}(G)$ is the group group of automorphisms of a graphlet $G$. Graphlets, as introduced by Przulig et al. \cite{prvzulj2004modeling}, are subgraphs that are motifs. More intuitively, $\text{Aut}(G)$ is the group of permutations of the nodes that leaves the edge set unchanged. The orbit usually defines the set of nodes but we will consider the orbit as the cardinal of this set as defined above.\par

Let's note $\text{Orb}_{\text{min}}(G)=\underset{1\leq i\leq N}{\text{min }}\text{Orb}(G,i)$ and $\text{Orb}_{\text{max}}(G)=\underset{1\leq i\leq N}{\text{max }}\text{Orb}(G,i)$. The orbit distribution associated with the graph $G$ is the vector $\left ( o_{j} \right )_{\text{Orb}_{\text{min}}(G)\leq j\leq \text{Orb}_{\text{max}}(G)}$,\par

where for all $j\in\llbracket \text{Orb}_{\text{min}}(G),\text{Orb}_{\text{max}}(G) \rrbracket$, $o_{j}=\sum_{i=1}^{N}\mathbb{1}_{\{ \text{Orb}(G,i)=j \}}$.
\end{definition}

\begin{remark}
We used the tool ORbit Counting Algorithm (ORCA) 
 \cite{hovcevar2014combinatorial} developed in C++ to compute the Orbit distribution efficiently.
\end{remark}

To compute our MMDs, we used the following kernels for each metrics:\par

\begin{itemize}
    \item \textbf{Degree:} Gaussian EMD,
    \item \textbf{Cluster:} Gaussian EMD,
    \item \textbf{Orbit:} Gaussian.
\end{itemize}

\subsection{Molecules}

For the molecule generation task, we will compare several metrics, including the Fréchet ChemNet Distance (FCD) \cite{preuer2018frechet}, the Neighborhood subgraph pairwise distance kernel (NSPDK) MMD \cite{Costa2010FastNS}, Validity (with and without correction), Novelty, and Uniqueness. Additionally, we will compare the inference time required to generate 10,000 molecules. Lastly, we will also compare the average Tanimoto similarity, as detailed in Subsection \ref{Tanimoto}.\par

\begin{definition}[Fréchet ChemNet Distance (FCD)]
The Fréchet ChemNet Distance \cite{preuer2018frechet} is a metric used to compare the similarity between two chemical molecules based on their structural features. It is defined as follows:\par

Let $M_1$ and $M_2$ be two molecular graphs representing the chemical structures of two molecules. The Fréchet ChemNet Distance between $M_1$ and $M_2$, denoted as $FCD(M_1, M_2)$, is defined as the minimum continuous assignment of two continuous functions $f:[0,1]\rightarrow V(M_1)$ and $g:[0,1]\rightarrow V(M_2)$, such that:\par

\begin{align*}
    &f(0) = g(0) = \text{start node}, \\
    &f(1) = g(1) = \text{end node}, \\
    &\text{For all } t\in[0,1], d(f(t),g(t)) \leq \text{radius}(f(t)) + \text{radius}(g(t)),
\end{align*}

where $d(v_1, v_2)$ denotes the Euclidean distance between the coordinates of nodes $v_1$ and $v_2$, and $\text{radius}(v)$ represents the radius associated with a node $v$ in the molecular graph. In summary, the Fréchet ChemNet Distance measures the similarity between the two molecules by finding the minimum \textit{continuous path} between them while taking into account the spatial arrangement of atoms in their structures.
\end{definition}

\begin{definition}[Validity]
Validity is the fraction of the generated molecules that do not violate the chemical valency rule.
\end{definition}

\begin{definition}[Uniqueness]
Uniqueness is the fraction of the generated valid molecules that are unique.
\end{definition}

\begin{definition}[Novelty]
Novelty is the fraction of the valid molecules that are not included in the training set.
\end{definition}

\begin{definition}[Validity w/o correction]
Validity w/o correction is the fraction of valid molecules without valency correction or edge resampling. In this thesis, we allowed atoms to have formal charges when checking their valency following the methodology of Zang \& Wang \cite{Zang_2020} and Jo et al. \cite{jo2022scorebased}. The metric is thus different from the metric used in Shi et al. \cite{shi2020graphaf} and Luo et al. \cite{luo2021graphdf}. We have chosen to also implement this approach as it seems to be more reasonable due to the existence of formal charges in the training molecules.
\end{definition}

\begin{definition}[Sampling Time]
Sampling time measures the time for generating 10,000 molecules in the form of RDKit molecules.
\end{definition}

The \textit{Preliminaries} notions being introduced, we will now in the following chapter of this thesis delve into our theoretical contributions.\par

\chapter{Theoretical Contributions}
\label{Theory}

With the essential foundational material now covered, we can proceed to present our theoretical contributions, encompassing mathematical constructs, theorems, machine learning architectures, algorithms, metrics, and our overarching framework. Our theoretical framework revolves around core concepts and objects that we created and first need to introduce in the \textit{Preamble} section below (Section \ref{Preamble}).\par

\section{Preamble}
\label{Preamble}

As part of the mathematical framework, we introduce novel objects that extend the domain of combinatorial complexes, designed to serve as the fundamental entities in our generative modelling context. Recognizing the potential large search space when diffusing along higher-order dimensions within a combinatorial complex, we introduce Dimension-Constrained Combinatorial Complexes (DCCC). For instance, consider a lifted molecule with up to 15 atoms. This could result in up to $2^{15}=32768$ distinct rank-2 cells. However, if we constrain our focus to rings (cycles in a molecular graph) containing between 3 and 9 atoms, the search space narrows down to $\sum_{k=3}^{9}\binom{15}{k}=27703$ cells. To illustrate the complexity of the lifting procedure for certain datasets, we present in Figure \ref{fig:longest_mol_24} an example from the ZINC250k dataset \cite{irwin_zinc_2012} featuring the longest ring of the dataset with 24 atoms. For such datasets, with a native approach, the search space would be too large. Consequently, we seek to construct a diffusion model capable of generating combinatorial complexes with specific attributes characterizing the size of the higher-order cells. To achieve this, we introduce Dimension-Constrained Combinatorial Complexes (DCCC).\par

\begin{figure}[H]
\centering
\includegraphics[width=0.5\linewidth]{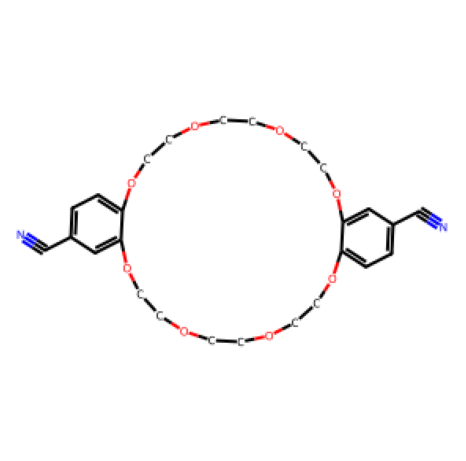}
\captionof{figure}{Molecule with the longest ring in the ZINC250k dataset \cite{irwin_zinc_2012}. The molecule has a ring made of 24 atoms.}
\label{fig:longest_mol_24}
\end{figure}

Moreover, our goal is for these combinatorial complexes to exhibit characteristics that align with the underlying objects they represent. To achieve this, we introduce Featured Combinatorial Complexes (FCC), which involve the attachment of features -or cochains- to the cells within these CCs.\par

Given that the hyperparameters associated with our combinatorial complexes are task-specific, we intend for them to be provided as input parameters for our model.\par

\begin{definition}[Dimension-Constrained Combinatorial Complex]

\par
A Dimension-Constrained Combinatorial Complex (DCCC) is a tuple $(CC, D)$ where $CC=(S, \mathcal{X}, rk)$ is a combinatorial complex and $D=(d^{r}_{min}, d^{r}_{max})_{0\leq r \leq R}$ is a collection of tuples where $R=\text{dim}(CC)$ is the dimension of the combinatorial complex CC, and such that, for all $r\in\llbracket 0, R\rrbracket$, for all $x\in\mathcal{X}_{r}$, $d^{r}_{min}\leq |x| \leq d^{r}_{max}$. $|x|$ represents the cardinal (number of nodes) of the $r$-rank cell $x$.\par

Without loss of generality, we assume that the nodes are rank-0 cells and that the edges are rank-1 cells. Using the notation above, this can be rewritten as $d^{0}_{min}=d^{0}_{max}=1$ and $d^{1}_{min}=d^{1}_{max}=2$.
\end{definition}

\begin{definition}[Featured Combinatorial Complex (FCC)]
A Featured Combinatorial Complex (FCC) is a tuple $(CC, \mathcal{F})$ where $CC=(S, \mathcal{X}, rk)$ is a combinatorial complex and $\mathcal{F}:\mathcal{X}\rightarrow \mathfrak{G}$ is a mapping function that assigns to every cell a \textit{feature} that belongs to a set $\mathfrak{G}$. We assume that it exists an underlying ring $\mathbb{K}$ such that $(\mathfrak{G}, \cdot, *)$ verifies that for all $r\in\llbracket 0, R\rrbracket$ where $R=\text{dim}(CC)$, $(\mathfrak{G}_{|\mathcal{X}_{r}}, \cdot, *)$ is a ring made of one or multiple elements of $\mathbb{K}$. $\mathfrak{G}_{|\mathcal{X}_{r}}$ is the restriction of $\mathfrak{G}$ to the set of rank-$r$ cells $\mathcal{X}_{r}$.\par

This means that we can do operations between features of cells with the same rank and, with some adjustments in terms of the size of the elements, some operations between the features of all the cells of the combinatorial complex.\par

We also define for all $r\in\llbracket 0, R\rrbracket$, $f_{r}=\underset{x\in\mathcal{X}_{r}}{\text{max} }|\mathcal{F}(x)|\in\mathbb{N}^{*}$ where $|x|$ is the cardinal or size of the object $x$. By convention, if for a given $r$ $|\mathcal{X}_{r}|=0$, $f_{r}=1$.
\end{definition}

\begin{remark}
We can map every combinatorial complex as a featured combinatorial complex with $\mathcal{F}=1$ (constant function equal to 1).
\end{remark}

\begin{remark}
In practice, the underlying ring/field $\mathbb{K}$ will often be $\mathbb{Z}$ or $\mathbb{R}$, and for all $r\in\llbracket 0, R\rrbracket$, $\mathfrak{G}_{|\mathcal{X}_{r}}$ will be isomorph to a vector space $\mathbb{K}^{r'}$ where $r'\in\mathbb{N}^{*}$.
\end{remark}

\begin{definition}[Dimension-Constrained Featured Combinatorial Complex (DCFCC)]
A Dimension-Constrained Featured Combinatorial Complex (DCFCC) is a tuple

$(CC, D, \mathcal{F})$ such that $(CC, D)$ is a DCCC and $(CC, \mathcal{F})$ is a FCC.
\end{definition}

\begin{remark}
From now on, we will assimilate a DCFCC as a CC as the difference is essentially from a computational and modelling perspective.
\end{remark}

\begin{definition}[CC structure class, Representation]
As stated in Hajij et al. \cite{hajij_topological_2023}, a \textit{CC structure class} of a combinatorial complex is a set of objects that allows to represent the combinatorial complex up to an isomorphism, according to the definition \cite[Definition.~10]{hajij_topological_2023}.\par

We will also call a CC structure class of a combinatorial complex or a set of combinatorial complexes a \textbf{representation}.\par

This notion of representation can be extended to the generalization or subclass of combinatorial complexes such as the ones introduced before.
\end{definition}

As we want to efficiently and numerically generate combinatorial complexes for all different types of domains, we need to represent them coherently. Hence the theorem below:\par

\begin{theorem}[Representation of Dimension-Constrained Featured Combinatorial Complexes of dimension 2]
\label{theorem:representation}

Every Dimension-Constrained Featured Combinatorial Complexes of dimension 2 $(CC, D, \mathcal{F})$ is entirely defined by three tensors

$(X, A, F)\in\mathcal{M}_{n,f_{0}}(\mathbb{K})\times \mathcal{M}_{n,n,f_{1}}(\mathbb{K})\times \mathcal{M}_{\binom{n}{2},\mathcal{K},f_{2}}(\mathbb{K})$ and the tuple $(d_{min}^{2},d_{max}^{2})$.\par

\begin{itemize}
\item $n=|\mathcal{X}_{0}|$ is the number of rank-0 cells (or nodes),
\item $\binom{n}{2}=\frac{n(n-1)}{2}$ is the maximum number of rank-1 cells,
\item $\forall i,j\in\llbracket 1, N\rrbracket$, $A_{i,j,:}=A_{j,i,:}$ (\textit{i.e.} $A$ must be symmetric along its first two axes),
\item $\mathcal{K}=\sum_{k=d_{min}^{2}}^{d_{max}^{2}}\binom{n}{k}$ is the maximum number of rank-2 cells,
\item $\forall j\in\llbracket 1,\mathcal{K}\rrbracket$, $\exists C_{j}\in\mathbb{K}^{f_{2}}$, $\forall i\in\llbracket 1, \binom{N}{2}\rrbracket$, $F_{i,j,:}\in\left \{ 0_{\mathbb{K}}^{f_{2}},C_{j} \right \}$ (\textit{i.e.} for every columns of $F$, the elements are either null or, if not for some rows, they share the same value/cochain).
\end{itemize}
\end{theorem}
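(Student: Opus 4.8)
The plan is to construct the three tensors explicitly from a given DCFCC of dimension 2 and then verify that the construction can be inverted, so that the data $(X,A,F)$ together with $(d_{min}^{2},d_{max}^{2})$ determines the DCFCC up to the isomorphism implicit in the notion of CC structure class. First I would fix an arbitrary enumeration of the $n=|\mathcal{X}_{0}|$ rank-0 cells (this choice is exactly the point at which ``entirely defined'' is understood modulo relabelling, i.e.\ modulo isomorphism). Given that enumeration, set $X\in\mathcal{M}_{n,f_{0}}(\mathbb{K})$ to be the matrix whose $i$-th row is the feature $\mathcal{F}(\{s_i\})$ of the $i$-th node, padded with $0_{\mathbb{K}}$ to width $f_{0}=\max_{x\in\mathcal{X}_0}|\mathcal{F}(x)|$; this uses the FCC hypothesis that each $\mathfrak{G}_{|\mathcal{X}_r}$ embeds in $\mathbb{K}^{r'}$. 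For the rank-1 part, enumerate the $\binom{n}{2}$ unordered pairs, let $A_{i,j,:}=A_{j,i,:}$ be the feature $\mathcal{F}(\{s_i,s_j\})$ if $\{s_i,s_j\}\in\mathcal{X}_1$ and the zero vector otherwise (padded to width $f_1$); symmetry along the first two axes is immediate by construction, and the diagonal can be fixed by the convention $d^1_{min}=d^1_{max}=2$ (no self-loops).

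The substantive step is the rank-2 tensor $F$. Using the DCCC constraint, every rank-2 cell $x\in\mathcal{X}_2$ has $d_{min}^2\le|x|\le d_{max}^2$, so there are at most $\mathcal{K}=\sum_{k=d_{min}^2}^{d_{max}^2}\binom{n}{k}$ of them; fix an enumeration $C_1,\dots,C_{\mathcal K}$ of all candidate vertex-subsets of those sizes. For the $j$-th candidate subset, if it is an actual rank-2 cell then set $F_{i,j,:}$ equal to its feature value $\mathcal{F}(C_j)\in\mathbb{K}^{f_2}$ for exactly those row indices $i$ corresponding to the rank-1 cells (pairs) contained in $C_j$, and $0_{\mathbb{K}}^{f_2}$ for all other rows; if the $j$-th candidate is not a cell, set the whole column to $0$. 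This makes the support pattern of column $j$ of $F$ encode precisely which edges lie in cell $C_j$, hence (together with the enumeration of candidates) which vertices comprise it, while the nonzero entries all share the single value $C_j:=\mathcal{F}(C_j)$, which is exactly the fourth bullet of the statement.

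For the converse I would show that from $(X,A,F,d_{min}^2,d_{max}^2)$ one recovers: $n$ from the number of rows of $X$; the node features from the rows of $X$; the set $\mathcal{X}_1$ and its features from the nonzero (off-diagonal) slices of $A$; and, for each column $j$ of $F$, either ``$C_j$ is not a cell'' (column identically zero) or ``$C_j$ is a rank-2 cell'' whose vertex set is reconstructed as the union of the endpoint pairs indexed by the nonzero rows, with feature the common nonzero value. Because the enumerations of nodes, pairs and candidate subsets are determined by $n$ alone, this reconstruction is well-defined up to the relabelling of the $n$ nodes, i.e.\ up to isomorphism of DCFCCs; together with the forward construction this establishes the bijection claimed, which is the content of the theorem. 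The main obstacle I anticipate is purely bookkeeping rather than conceptual: one must be careful that the padding-to-width conventions ($f_0,f_1,f_2$ and the by-convention value $1$ when a rank is empty) are consistent, that the ordering conventions on pairs and on size-$k$ subsets are pinned down so that ``nonzero rows of column $j$'' unambiguously names a vertex set, and that degenerate small cells (e.g.\ a rank-2 cell of size $2$, if permitted by $d^2_{min}$) still have a well-defined edge-support — these edge cases are where a careless argument would break, so I would handle them explicitly before claiming the general bijection.
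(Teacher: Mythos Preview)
Your proposal is correct and follows essentially the same route as the paper's own proof: fix an ordering of the nodes, build $X$ row-by-row from padded node features, build $A$ as the symmetric edge-feature tensor (zero on non-edges), enumerate all $\mathcal{K}$ candidate rank-$2$ subsets and fill column $j$ of $F$ with $\mathcal{F}(C_j)$ on the rows indexing the pairs contained in $C_j$; then invert by reading off $n$, the features, and the nonzero columns. The only cosmetic difference is that the paper packages the reconstruction step by explicitly forming the incidence matrices $B_{0,1}$ and $B_{1,2}$ and invoking Proposition~8.1 of Hajij et al.\ to certify that these determine the CC structure class, whereas you argue the recovery directly; your explicit flagging of the ordering conventions and of degenerate small cells is in fact more careful than the paper on those points.
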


\begin{proof}
Let $(CC, D, \mathcal{F})$ be a Dimension-Constrained Featured Combinatorial Complex where $CC=(S,\mathcal{X}, rk)$. For all $x\in\mathcal{X}_{0}$, $|x|=1$ and $|\mathcal{F}(x)|=k_{x}\leq f_{0}$. By isomorphism, we can represent $\mathcal{F}(x)$ as a vector $(x_{1},\ldots ,x_{k_{x}}, 0,\ldots ,0)\in\mathbb{K}^{f_{0}}$. Without loss of generality, let's order these nodes $x_{1},\ldots , x_{n}$.\par

For all $e\in\mathcal{X}_{1}$, $|e|=2$ and $|\mathcal{F}(e)|=k_{e}\leq f_{1}$. By isomorphism, we can represent $\mathcal{F}(e)$ as a vector $(e_{1},\ldots ,e_{k_{e}}, 0,\ldots ,0)\in\mathbb{K}^{f_{1}}$. By definition, it exists $x,y\in\mathcal{X}_{0}$ such that $e=\{ x, y\}=\{ y, x\}$ as it is a set. They then share the same image by $\mathcal{F}$. We order the edges, even those that are not in our combinatorial complex, in ascending order in function of the node indexes as follows:

$\left (e_{1}=(x_{1},x_{2}), e_{2}=(x_{1},x_{3}), \ldots, e_{n-1}=(x_{1},x_{n}), e_{n}=(x_{2},x_{3}), \ldots, e_{\binom{n}{2}}=(x_{n-1},x_{n})\right )$.\\

Finally, for all $h\in\mathcal{X}_{2}$, $|\mathcal{F}(h)|=k_{h}\leq f_{2}$. By isomorphism, we can represent $\mathcal{F}(h)$ as a vector $(l_{1},\ldots ,l_{k_{h}}, 0,\ldots ,0)\in\mathbb{K}^{f_{2}}$. We know that there is at most $\binom{n}{2}$ edges (or rank-1 cells as in our case $d_{max}^{1}=2$), so $0\leq |X_{1}|\leq \binom{n}{2}$. We also know that the dimension of the CC is two so there is at least one rank-2 cell with a cardinal between $d_{min}^{2}$ and $d_{max}^{2}$. Therefore, $1\leq |\mathcal{X}_{2}|\leq K=\sum_{k=d_{min}^{2}}^{d_{max}^{2}}\binom{n}{k}$. All the groups of nodes forming a rank-2 cell are part of a set and thus share the same image by $\mathcal{F}$. For a given $k\in\llbracket d_{min}^{2}, d_{max}^{2}\rrbracket$, we can also order the rank-2 cells of size $k$, if they exist, in ascending order in function of the node indexes. Then, we order the rank-2 cells by concatenating the existing ordered cells of size $d_{min}^{2}$, then $d_{min}^{2}+1$, etc, until $d_{max}^{2}$. The ordered rank-2 cells, including the one not in our combinatorial complex, will be denoted $h_{1}, \ldots, h_{K}$.\par

Below, we extend $\mathcal{F}$ such that, for $x\in\mathcal{P}(S)\backslash\emptyset$, if $x\notin\mathcal{X}$, $\mathcal{F}(x)=(0,\ldots,0)\in\mathbb{K}^{m}$ where $m$ is equal to $f_{1}$ if $|x|=2$, or $f_{2}$ otherwise. $|x|\neq 1$ as the combinatorial complex definition implies that $\forall x\in S, x\in\mathcal{X}$. Below, $0$ represents the additive identity (or zero) of $\mathbb{K}$.\par

We then construct the three tensors:\\

$X=\begin{pmatrix}
\mathcal{F}(x_{1})\\ 
\vdots\\ 
\mathcal{F}(x_{n})
\end{pmatrix}\in\mathcal{M}_{n,f_{0}}(\mathbb{K})$,\par

$A=\begin{pmatrix}
0 & \mathcal{F}((x_{1},x_{2})) & \ldots & \mathcal{F}((x_{1},x_{n}))\\ 
\mathcal{F}((x_{2},x_{1})) & 0 &  & \vdots\\ 
\vdots &  & 0 & \mathcal{F}((x_{n-1},x_{n}))\\ 
\mathcal{F}((x_{n},x_{1})) &  & \mathcal{F}((x_{n},x_{n-1})) & 0
\end{pmatrix}\in\mathcal{M}_{n,n,f_{1}}(\mathbb{K})$,\par

and $F=(m_{i,j})_{i,j\in\llbracket 1,\binom{n}{2}\rrbracket\times\llbracket 1,K\rrbracket}\in\mathcal{M}_{\binom{n}{2},\mathcal{K},f_{2}}(\mathbb{K})$ defined by for all $i\in\llbracket 1,\binom{n}{2}\rrbracket$, for all $j\in\llbracket 1,K\rrbracket$, $m_{i,j}=\begin{cases}
\mathcal{F}(h_{j}) & \text{ if } e_{i}\in h_{j} \\ 
(0,\ldots, 0) & \text{ else }
\end{cases}\in\mathbb{K}^{f_{2}}$.\par

The construction is well-defined. Now let's verify that we can build our original $DCFCC$ from our original tensors and tuple.\par

From $X$, we immediately have our set $S=(0,\ldots, n-1)=\mathcal{X}_{0}$, $f_{0}$, and $\mathcal{F}_{|\mathcal{X}_{0}}$, where $(n,f_{0})$ is the dimension of $X$.\par

$A$ is symmetric along its first two axes, its shape is $(n,n,f_{1})$. For all $i,j\in\llbracket 1, n\rrbracket$ with $i\neq j$, we add $(i,j)$ to the set of rank-1 cells $\mathcal{X}_{1}$ if $A_{i,j}\neq 0$ and we define $\mathcal{F}_{|\mathcal{X}_{1}}:=(i,j)\mapsto A_{i,j}$. $\mathcal{F}_{|\mathcal{X}_{1}}$ is 0 elsewhere.\par

From $X$ and $A$, we can deduce the incidence matrix $B_{0,1}\in\mathcal{M}_{n,\binom{n}{2}}(\mathbb{K})$ that maps the edges to their corresponding nodes and thus, preserves the rank.\par

$F$ verifies the column value property and the shape criteria. We deduce from it $f_{2}$, the rank-2 cells and their nodes by looking at the indexes of the non-zeros column of $F$, $\mathcal{F}_{|\mathcal{X}_{2}}$ by looking at the first non-zero coefficient if it exists, and by the construction of $F$, we also have the rank-2 incidence matrix $B_{1,2}\in\mathcal{M}_{\binom{n}{2},K}(\mathbb{K})$ that maps the rank-2 cells or faces to their corresponding edges and thus, preserves the rank.\par

By applying \cite[Proposition 8.1]{hajij_topological_2023}, $\{ B_{r,r+1}\}_{r=0}^{\text{dim}(CC)-1}=\{ B_{0,1}, B_{1,2}\}$, we are guaranteed that we have the CC structure class of a combinatorial complex $CC=(S, \mathcal{X}, rk)$.\par

From the dimensions constrained on the cells, we also have a DCCC. For all $r\in\llbracket 0, 2\rrbracket$, we verified that $\mathcal{F}_{|\mathcal{X}_{r}}$ is a linear vector subspace of $\mathbb{K}^{f_{r}}\sim\mathfrak{G}_{|\mathcal{X}_{r}}$ and as our constructed function $\mathcal{F}$ is defined on the entire cell domain $\mathcal{X}$, we also have a FCC. Hence the proof.
\end{proof}

\begin{remark}
In our implementation, $d_{min}^{2}$ and $d_{max}^{2}$ are simply referred as $d_{min}$ and $d_{max}$.
\end{remark}

\begin{remark}
\textbf{Crucial Note:} Moving forward in our modelling discussions, we will consider for the rest of this work combinatorial complexes as dimension-constrained featured combinatorial complexes. Moreover, when $f_{r}=1$ for a given $r$, we will simplify our notation and treat the corresponding tensors (such as $A$, $F$, or other incidence matrices) as 2D matrices. For brevity, we may refer to these matrices to mention a CC.
\end{remark}

\begin{corollary}[Representation of Dimension-Constrained Featured Combinatorial Complexes of dimension greater or equal than 2]
Every Dimension-Constrained Featured Combinatorial Complexes of dimension $R\geq 2$ $(CC, D, \mathcal{F})$ is entirely defined by $R+1$ tensors:\par

$(X, A, F, B_{2,3}, \ldots, B_{R-1,R})\in\mathcal{M}_{n,f_{0}}(\mathbb{K})\times \mathcal{M}_{n,n,f_{1}}(\mathbb{K})\times \mathcal{M}_{\binom{n}{2},\mathcal{K}_{2},f_{2}}(\mathbb{K}) \times \mathcal{M}_{\mathcal{K}_{2},\mathcal{K}_{3},f_{3}}(\mathbb{K})\times \ldots \times \mathcal{M}_{\mathcal{K}_{R-1},\mathcal{K}_{R},f_{R}}(\mathbb{K})$ and the tuples $\left \{ (d_{min}^{r},d_{max}^{r}) \right \}_{r\in\llbracket 2, R\rrbracket}$.\par

\begin{itemize}
\item $n=|\mathcal{X}_{0}|$ is the number of rank-0 cells (or nodes),
\item $\binom{n}{2}=\frac{n(n-1)}{2}$ is the maximum number of rank-1 cells,
\item $\forall i,j\in\llbracket 1, N\rrbracket$, $A_{i,j,:}=A_{j,i,:}$ (\textit{i.e.} $A$ must be symmetric along its first two axes),
\item For all $r\in\llbracket 2, R\rrbracket$, $\mathcal{K}_{r}=\sum_{k=d_{min}^{r}}^{d_{max}^{r}}\binom{n}{k}$ is the maximum number of rank-$r$ cells,
\item $\forall j\in\llbracket 1,\mathcal{K}\rrbracket$, $\exists C_{j}\in\mathbb{K}^{f_{2}}$, $\forall i\in\llbracket 1, \binom{N}{2}\rrbracket$, $F_{i,j,:}\in\left \{ 0_{\mathbb{K}}^{f_{2}},C_{j} \right \}$ (\textit{i.e.} for every column of $F$, the elements are either null or, if not for some rows, they share the same value/cochain).
\item $\forall r\in\llbracket 3, R\rrbracket$, $\forall j\in\llbracket 1,\mathcal{K}_{r}\rrbracket$, $\exists C_{r,j}\in\mathbb{K}^{f_{r}}$, $\forall i\in\llbracket 1, \mathcal{K}_{r-1}\rrbracket$, $B_{r-1,r,i,j,:}\in\left \{ 0_{\mathbb{K}}^{f_{r}},C_{r,j} \right \}$ (\textit{i.e.} for every column of $B_{r-1,r}$, the elements are either null or, if not for some rows, they share the same value/cochain).
\end{itemize}
\end{corollary}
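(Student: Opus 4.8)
The plan is to extend the construction in the proof of Theorem \ref{theorem:representation} one rank at a time, rather than to re-prove anything from scratch. The data at ranks $0$, $1$, $2$ is already packaged into $(X, A, F)$ together with $(d_{min}^{2}, d_{max}^{2})$ exactly as in that theorem, so the only genuinely new content is the ranks $r \geq 3$, and for each such $r$ the tensor $B_{r-1,r}$ plays for rank-$r$ cells precisely the role that $F$ plays for rank-$2$ cells. Concretely the work splits into three steps: (i) fix, for every $r\in\llbracket 2,R\rrbracket$ and the given $n=|\mathcal{X}_{0}|$ and $(d_{min}^{r},d_{max}^{r})$, a canonical enumeration $h_{1}^{(r)},\ldots,h_{\mathcal{K}_{r}}^{(r)}$ of all potential rank-$r$ cells by concatenating, for $k$ running from $d_{min}^{r}$ to $d_{max}^{r}$, the lexicographically ordered $k$-subsets of the node set, so that a column index $j$ determines its node content from $n$ and $(d_{min}^{r},d_{max}^{r})$ alone; (ii) build the tensors from $\mathcal{F}$; and (iii) check that the tensors recover the CC structure class.

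For step (ii), extend $\mathcal{F}$ by $0_{\mathbb{K}}^{f_{r}}$ on every absent potential rank-$r$ cell and represent each value $\mathcal{F}(h_{j}^{(r)})$ as a zero-padded vector of $\mathbb{K}^{f_{r}}$ through the isomorphism $\mathfrak{G}_{|\mathcal{X}_{r}}\cong\mathbb{K}^{r'}$; then set, for $r\geq 3$, $B_{r-1,r,i,j,:}=\mathcal{F}(h_{j}^{(r)})$ when the $i$-th potential rank-$(r-1)$ cell is contained in $h_{j}^{(r)}$ and $0_{\mathbb{K}}^{f_{r}}$ otherwise, with $A$ (symmetric along its first two axes) and $F$ built verbatim as in the theorem. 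Since a cell is an unordered set, all node-sets constituting a single rank-$r$ cell share one image under $\mathcal{F}$, so every column of $B_{r-1,r}$ is either identically zero or, on its support, equal to the common cochain $C_{r,j}:=\mathcal{F}(h_{j}^{(r)})$, which is exactly the stated column-value property; the shape bounds $\mathcal{K}_{r}=\sum_{k=d_{min}^{r}}^{d_{max}^{r}}\binom{n}{k}$ follow as before. For step (iii), from such a tensor one reads off $f_{r}$ from the last axis, the present rank-$r$ cells as its nonzero columns (node content supplied by the canonical ordering), $\mathcal{F}_{|\mathcal{X}_{r}}$ as each column's common nonzero value, and hence a genuine rank-$r$ incidence matrix in the sense of Hajij et al.; appending $B_{2,3},\ldots,B_{R-1,R}$ to the $\{B_{0,1},B_{1,2}\}$ extracted from $(X,A,F)$ and invoking \cite[Proposition 8.1]{hajij_topological_2023} on $\{B_{r,r+1}\}_{r=0}^{R-1}$ then yields the CC structure class of $CC=(S,\mathcal{X},rk)$, while the tuples $(d_{min}^{r},d_{max}^{r})$ and the totality of the extended $\mathcal{F}$ on $\mathcal{X}$ recover the DCCC and FCC data, just as in the proof of Theorem \ref{theorem:representation}.

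The main obstacle I expect is reconciling the incidence-only tensors at ranks $r\geq 3$ with the hypotheses of \cite[Proposition 8.1]{hajij_topological_2023}: $B_{r-1,r}$ records only which rank-$(r-1)$ cells lie below a rank-$r$ cell, so reading the node content of a rank-$r$ cell off its column requires both that the index $j$ genuinely determines that content (which the canonical ordering arranges) and that a present cell is never represented by an all-zero column. The latter is automatic once the trivial cochain is the multiplicative identity of $\mathbb{K}$ rather than $0_{\mathbb{K}}$, as the earlier remarks stipulate, but in full generality one may also need the mild gradedness assumption that every rank-$r$ cell contains at least one rank-$(r-1)$ cell, which is precisely what the lifting procedures and the implementation produce. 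The remaining ingredients — the isomorphisms $\mathfrak{G}_{|\mathcal{X}_{r}}\cong\mathbb{K}^{r'}$, the zero-padding, well-definedness of each construction, and the recovery of $B_{0,1}$ and $B_{1,2}$ from $(X,A,F)$ — are routine and identical in spirit to the $R=2$ case.
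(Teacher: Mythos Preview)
Your proposal is correct and follows essentially the same route as the paper: both reduce to the $R=2$ theorem for $(X,A,F)$, build each $B_{r-1,r}$ as a featured incidence tensor exactly analogous to $F$, and invoke \cite[Proposition~8.1]{hajij_topological_2023} on the collection $\{B_{r,r+1}\}_{r=0}^{R-1}$ to recover the CC structure class. The paper phrases this as an induction on $R$ (restricting to the first $R-1$ ranks and appending $B_{R-1,R}$) whereas you construct all the higher-rank tensors directly, but this is only a presentational difference; your additional remarks on the canonical enumeration, the column-value property, and the non-degeneracy of present cells are extra detail the paper's proof leaves implicit.
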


\begin{proof}
We prove this result by induction. $R=2$ is immediate. Let $CC$ be a dimension-constrained featured combinatorial complex of dimension $R>2$. We suppose that our induction hypothesis is verified for all $r<R$. If we restrict $CC$ to its first $R-1$ dimensions, we obtain $CC'$ that has a representation $(X,A,F,B_{2,3},\ldots,B_{R-2,R-1})$.\par

By applying again \cite[Proposition 8.1]{hajij_topological_2023}, we only need to incorporate $B_{R-1,R}$ to obtain a CC structure class as this last matrix will encode our rank-$R$ cells. The shape $\mathcal{K}_{R-1}\times\mathcal{K}_{R}$ is imposed by the dimension contraints. The third dimension of the tensor, $f_{R}$, is also imposed by the features (or cochains) attached to our rank-$R$ cells. This concludes the proof.
\end{proof}

\begin{definition}[Hodge Dual Operator]
Let $n\in\mathbb{N}^{*}$ be a positive integer and $A=\left (A_{i,j} \right )_{1\leq i,j\leq n} \in\mathcal{S}_{n}(\mathbb{R})$ be an adjacency matrix with no self-loops i.e. $A$ verifies:
\begin{itemize}
\item $\forall i,j\in\llbracket 1,n \rrbracket, A_{i,j}\in\mathbb{R}$
\item $\forall i,j\in\llbracket 1,n \rrbracket, A_{i,j}=A_{j,i}$
\item $\forall i\in\llbracket 1,n \rrbracket, A_{i,i}=0$ (no self-loops)
\end{itemize}

We define the Hodge Dual Operator $\mathfrak{H}_{n}$ as the following bijection:\par

$\mathfrak{H}_{n}:= \underset{A=\begin{pmatrix}
a_{1,1} & \ldots & a_{1,j} & \ldots & a_{1,n} \\ 
 &  \ddots &  &  & \vdots \\ 
 &  &  a_{j,j} &  & a_{j,n} \\ 
 &  \ldots &  &  \ddots & \vdots \\ 
 &  &  &  & a_{n,n}
\end{pmatrix}}{S_{n}(\mathbb{R})}\underset{\mapsto}{\longrightarrow}\underset{\begin{pmatrix}
a_{1,2} &  &  &  &  & \\ 
 &  \ddots &  &  & (0) & \\ 
 &  & a_{1,n} &  &  & \\ 
 &  &  & a_{2,3} &  & \\ 
 & (0) &  &  & \ddots & \\
 &  &  &  &  & a_{n-1,n}\\
\end{pmatrix}}{S_{\binom{n}{2}}(\mathbb{R})}$\\

where $\binom{n}{2}=\frac{n(n-1)}{2}$ represents the number the number of possible unique undirected edges in a graph with no self-loops.\par

The resulting matrix will be called \textbf{Hodge dual} or \textbf{Hodge adjacency} matrix with respect to the adjacency matrix $A$.
\end{definition}

\begin{remark}
The bijection of $\mathfrak{H}_{n}$ (resp. $\mathfrak{H}$ if there is no ambiguity) will be written $\mathfrak{H}_{n}^{-1}$ (resp. $\mathfrak{H}^{-1}$).
\end{remark}

\begin{remark}
We can extend the application to a group of batched and channelled adjacency matrices $A=\{ A_{b,c}\}_{b=1,c=1}^{B,C}\in\left (\mathcal{M}_{n}(\mathbb{R})\right )^{B\times C}$ where $B,C\in\mathbb{N}^{*}$ with the following convention: $\mathfrak{H}(A)=\{ \mathfrak{H}(A_{b,c})\}_{b=1,c=1}^{B,C}\in\left (\mathcal{M}_{\binom{n}{2}}(\mathbb{R})\right )^{B\times C}$.
\end{remark}

\begin{definition}[Higher-order adjacency matrix]
Let $A\in\mathcal{S}_{n}(\mathbb{R})$ be an adjacency matrix (symmetric with zeros on the diagonal) and $p\in\mathbb{N}^{*}$. The higher-order adjacency matrix of $A$ of order $p$ is a channelled matrix defined by $A^{p}=\{ (A)^{i}\}_{i=1}^{p}$ where $(A)^{i}$ is $A$ to the power $i$ with $i$ being an integer.\par

These matrices encode paths of certain lengths between nodes through the edges within a given topological object represented by an adjacency matrix.
\end{definition}

In the same way, we can define such an encoding of paths between edges through the rank-2 cells of a combinatorial complex. The resulting channelled matrix will be called a higher-order rank-2 incidence matrix. First, we need to define the Hodge Laplacian. In practice and as detailed below, we adopt a slightly lighter version than the one used in the literature \cite{Schaub_2021,Barbarossa_2020}.

\begin{definition}[Hodge Laplacian\footnotemark{}]
Let $\mathfrak{F}$ be the space of all the possible rank-2 incidence matrices associated with a combinatorial complex of dimension 2. For an element $F\in\mathfrak{F}$, its Hodge Laplacian matrix $\mathcal{H}$ is the result of the Hodge Laplacian operator $\mathcal{L}$ defined by:\par

$\mathcal{L}(F)=F F^{T}\in S_{\binom{n}{2}}(\mathbb{K})$,\par

where $n$ is the number of nodes of the underlying combinatorial complex.\par

We can extend the definition to a combinatorial complex of dimension 2 by applying the Hodge Laplacian operator its rank-2 incidence matrix obtained through its representation.
\end{definition}
\footnotetext{This definition is a simpler version of the original Hodge $k$-Laplacian as presented in Lek-Heng Lim \cite[Proposition 4.3]{doi:10.1137/18M1223101}}

\begin{proposition}
Let $(CC,D,\mathcal{F})$ be a combinatorial complex such that $f_{2}=1$ and $\mathcal{F}_{|\mathcal{X}_{2}}=1$. Then, its Hodge Laplacian corresponds to the number of different paths of length 2 going from one edge to another by going through a common rank-2 cell.
\end{proposition}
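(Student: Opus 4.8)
The plan is to unwind the operator $\mathcal{L}(F)=FF^{T}$ entrywise, using the explicit form of the rank-2 incidence matrix $F$ supplied by Theorem~\ref{theorem:representation}. First I would record what $F$ looks like under the stated hypotheses. Since the Hodge Laplacian is defined for a combinatorial complex of dimension $2$ and $f_{2}=1$, the representation $(X,A,F)$ of $(CC,D,\mathcal{F})$ given by Theorem~\ref{theorem:representation} has $F\in\mathcal{M}_{\binom{n}{2},\mathcal{K}}(\mathbb{K})$, a genuine $\binom{n}{2}\times\mathcal{K}$ matrix (invoking the convention of the Crucial Note that collapses the singleton feature dimension). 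Moreover, by the construction of $F$ in the proof of that theorem, its $(i,j)$ entry equals $\mathcal{F}(h_{j})$ when the edge $e_{i}$ belongs to the rank-2 cell $h_{j}$ and is $0$ otherwise; since $\mathcal{F}_{|\mathcal{X}_{2}}\equiv 1$, this forces $F_{i,j}=1$ if $e_{i}\in h_{j}$ and $F_{i,j}=0$ if $e_{i}\notin h_{j}$. In other words, $F$ is precisely the (unsigned) edge-to-rank-2-cell incidence matrix, so in particular $F_{i,j}\in\{0,1\}$ and $F_{i,j}^{2}=F_{i,j}$.

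Next I would compute, for any two edges $e_{i},e_{k}$ with $1\le i,k\le\binom{n}{2}$,
\[
\bigl(\mathcal{L}(F)\bigr)_{i,k}=(FF^{T})_{i,k}=\sum_{j=1}^{\mathcal{K}}F_{i,j}F_{k,j}=\sum_{j=1}^{\mathcal{K}}\mathbb{1}_{\{e_{i}\in h_{j}\}}\,\mathbb{1}_{\{e_{k}\in h_{j}\}}=\bigl|\{\, j : e_{i}\in h_{j}\ \text{and}\ e_{k}\in h_{j}\,\}\bigr|,
\]
so the $(i,k)$ entry counts exactly the rank-2 cells incident to both $e_{i}$ and $e_{k}$. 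The final step is the combinatorial translation: a path of length $2$ joining $e_{i}$ to $e_{k}$ that passes through a common rank-2 cell is, by definition, a sequence $e_{i}\to h_{j}\to e_{k}$ in which $h_{j}$ is a rank-2 cell incident to both $e_{i}$ and $e_{k}$; there is exactly one such path for each admissible $h_{j}$, hence the number of these paths equals the sum just computed. The diagonal case $i=k$ degenerates consistently: $(\mathcal{L}(F))_{i,i}$ counts the rank-2 cells containing $e_{i}$, i.e. the paths $e_{i}\to h_{j}\to e_{i}$. This is exactly the asserted interpretation of the Hodge Laplacian matrix $\mathcal{H}=\mathcal{L}(F)$.

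I do not expect a genuine obstacle here; the statement is essentially a bookkeeping identity for the product of a $0/1$ matrix with its transpose. The only point that deserves care — and the one I would present in full — is the reduction in the first paragraph: one must check that the hypotheses $f_{2}=1$ and $\mathcal{F}_{|\mathcal{X}_{2}}\equiv 1$ really do collapse $F$ to a $0/1$ incidence matrix rather than one carrying nontrivial cochain values, since the whole interpretation rests on $F_{i,j}F_{k,j}$ being an indicator. Once that is in place, the matrix-product expansion and the path-counting reading are immediate.
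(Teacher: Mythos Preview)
Your proposal is correct and follows essentially the same approach as the paper: both expand $(FF^{T})_{i,k}$ entrywise, use the hypotheses to reduce $F$ to a $0/1$ incidence matrix, and read off each summand as the indicator that a given rank-2 cell contains both edges. Your write-up is in fact slightly more thorough, as you explicitly spell out the path-of-length-2 interpretation and the diagonal case, whereas the paper stops at ``number of common rank-2 cells''.
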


\begin{proof}
Let $(CC,D,\mathcal{F})$ be a combinatorial complex with the hypothesis mentioned above verified. Let $i,j\in\llbracket 1, \binom{n}{2}\rrbracket$ be the two indexes of two edges that could exist in the combinatorial complex ($i$ and $j$ could be the same).\par

With $\mathcal{H}=\mathcal{L}(F)$, by following the notation from the representation theorem, we have:\par

$\mathcal{H}_{i,j}=\sum_{k=1}^{\mathcal{K}}\mathcal{F}(h_{k}) \mathbb{1}_{\{ h_{k}\}}(e_{i})  \mathcal{F}(h_{k}) \mathbb{1}_{\{ h_{k}\}}(e_{j})$ where $\mathbb{1}$ is the indicator function\footnotemark{} and $(h_{i})_{1\leq i\leq\mathcal{K}}$ are the ordered potential rank-2 cells.

\footnotetext{Let $A$ be a set and $a$ a mathematical object. We define the indicator function on $A$ by: $\mathbb{1}_{A}:=\begin{cases}
1 & \text{ if } a\in A \\
0 & \text{ else }
\end{cases}$. By convention, if $A=\emptyset$, $\mathbb{1}_{A}=0$.}

If one of the edges and/or the rank-2 cells $h_{i}$ is not in the combinatorial complex, the indicator functions are equal to 0. Thus, because $\mathcal{F}=1$, for a given $k\in\llbracket 1, \mathcal{K}\rrbracket$,\par

$\mathcal{F}(h_{k}) \mathbb{1}_{\{ h_{k}\}}(e_{i}) \mathcal{F}(h_{k}) \mathbb{1}_{\{ h_{k}\}}(e_{j})=\begin{cases}
1 & \text{ if } e_{i}\in h_{k} \text{ and } e_{j}\in h_{k} \\ 
0 & \text{ else }
\end{cases}$.\\

Therefore, the coefficient $\mathcal{H}_{i,j}$ of $\mathcal{H}$ is the number of common rank-2 cells that belong to the CC between the edges $i$ and $j$ if they are in the CC too.
\end{proof}

\begin{definition}[Higher-order rank-2 incidence matrix]
Let $F\in\mathcal{M}_{\binom{n}{2},\mathcal{K}}(\mathbb{R})$ be an rank-2 incidence matrix and $p\in\mathbb{N}^{*}$. The higher-order rank-2 incidence matrix of $F$ of order $p$ is a channelled matrix defined by $F^{p}=\{ (\mathcal{H})^{i}F\}_{i=0}^{p-1}$ where $(\mathcal{H})^{i}$ is $\mathcal{H}=\mathcal{L}(F)$ to the power $i$ with $i$ being an integer.
\end{definition}

This concludes the \textit{Preamble} section.

\section{CCSD - Proposed framework}
\label{CCSD}

We introduce \textbf{CCSD}, a \textit{Combinatorial Complex Score-based Diffusion model through Stochastic Differential Equations}. We present below the general formulation that leverages concepts introduced in GDSS \cite{jo2022scorebased} for graph generation.

\subsection{CCSD Framework}

\subsubsection{Forward process, reverse-time system of SDEs, training objectives, etc}

A combinatorial complex $CC$ of dimension $R\in\mathbb{N}^{*}$ with $n\in\mathbb{N}^{*}$ nodes will be represented using theorem ... as $R+1$ matrices $(X, A, F, B_{2,3}, \ldots, B_{R-1,R})\in\mathcal{M}_{n,f_{0}}(\mathbb{K})\times \mathcal{M}_{n,n,f_{1}}(\mathbb{K})\times \mathcal{M}_{\binom{n}{2},\mathcal{K}_{2},f_{2}}(\mathbb{K}) \times \mathcal{M}_{\mathcal{K}_{2},\mathcal{K}_{3},f_{3}}(\mathbb{K})\times \ldots \times \mathcal{M}_{\mathcal{K}_{R-1},\mathcal{K}_{R},f_{R}}(\mathbb{K})$ that will be written $\left ( \Omega_{r}\right )_{r\in\llbracket 0, R\rrbracket}$. The set $\mathcal{CC}$ will be the set of all the combinatorial complexes of dimension $R$.\par

Let $T\in\mathbb{R}_{+}$ and let $p_{T}$ be a Gaussian prior, which is tractable, and $p_{0}$ be an original distribution of combinatorial complexes. The diffusion process of a combinatorial complex of length $T$ will be denoted as: $CC_{t}=((\Omega_{r,t})_{0\leq r < R})_{t\in [0, T]}$\par

The diffusion process can be modelled by the same Itô stochastic differential equation presented in Subsection \ref{score_sde}, and adapted to our problem:\par

$dCC_{t}=f_{t}(CC_{t})dt + g_{t}(CC_{t})dW$, where $CC_{0}\sim p_{data}$, $f_{t}:\mathcal{CC}\rightarrow\mathcal{CC}$ is the linear drift coefficient, $g_{t}:\mathcal{CC}\rightarrow\mathbb{R}$ is the diffusion coefficient, and $W$ is the standard Wiener process (or standard Brownian motion). The coefficients or functions $f_{t}$\footnotemark{} and $g_{t}$ need to be chosen such that $CC_{T}\sim p_{T}$. We have chosen $g_{t}$ to be a scalar function similarly to \cite{song2021scorebased,jo2022scorebased}.\par

\footnotetext{Similarly to Subsection \ref{score_sde}, the notation $f_{t}(\cdot):=f(\cdot,t)$ is used to write a function of space and time.}

To generate a new combinatorial complex, the process involves sampling a noisy combinatorial complex from the distribution $p_{T}$ and then following the diffusion process backwards in time. This reverse-time diffusion process is referred to as the reverse-time Stochastic Differential Equation (RSDE) and is described in detail by Anderson and Song \cite{anderson, song2021scorebased}.\par

$dCC_{t}=\left [f_{t}(CC_{t})-g_{t}^{2}\nabla_{CC_{t}}\log\left ( p_{t}(CC_{t}) \right ) \right ]d\tilde{t}+g_{t}d\tilde{W}$ where $p_{t}$ denotes the marginal distribution under the forward diffusion process at time $t$, $\tilde{W}$ is a reverse-time standard Wiener process, and $d\tilde{t}$ is an infinitesimal \textit{negative} time step. However, as mentioned in Jo et al. \cite{jo2022scorebased}, solving this reverse-time SDE requires us to compute $\nabla_{CC_{t}}\log \left ( p_{t}(CC_{t}) \right )\in\mathcal{M}_{n,f_{0}}(\mathbb{K})\times \mathcal{M}_{n,n,f_{1}}(\mathbb{K})\times \mathcal{M}_{\binom{n}{2},\mathcal{K}_{2},f_{2}}(\mathbb{K}) \times \mathcal{M}_{\mathcal{K}_{2},\mathcal{K}_{3},f_{3}}(\mathbb{K})\times \ldots \times \mathcal{M}_{\mathcal{K}_{R-1},\mathcal{K}_{R},f_{R}}(\mathbb{K})$ which is computationnaly expensive to the point that the method could be considered untractable.\par

To bypass that, we generalize \cite[Eq 3.]{jo2022scorebased} using the following method.\par

Let $f_{0,t},\ldots,f_{R,t}$ be linear drift coefficients such that $f_{t}(CC)=(f_{0,t}(\Omega_{0}),\ldots, f_{R,t}(\Omega_{R}))$, $g_{0,t},\ldots, g_{R,t}$ be scalar diffusion coefficients, and $\tilde{w}_{0},\ldots, \tilde{w}_{R}$ be reverse-time standard Wiener processes. Then, the reverse-time diffusion process is given by the following system of equations:\par

$\begin{cases}
d\Omega_{0,t}=\left (f_{0,t}(\Omega_{0,t})-g_{0,t}^{2}\nabla_{\Omega_{0,t}}\log \left ( p_{t}(CC_{t})\right ) \right )d\tilde{t} + g_{0,t}d\tilde{W}_{0} \\ 
\ldots \\
d\Omega_{R,t}=\left (f_{R,t}(\Omega_{R,t})-g_{R,t}^{2}\nabla_{\Omega_{R,t}}\log \left ( p_{t}(CC_{t})\right ) \right )d\tilde{t} + g_{R,t}d\tilde{W}_{R}
\end{cases}$\\

These $R+1$ diffusion processes are related to each other through the partial score functions $\left ( \nabla_{\Omega_{r,t}}\log \left (p_{t}(CC_{t}) \right ) \right )_{0\leq r\leq R}$\par

Regarding the training objectives, we extend the objectives defined in \cite[Eq. 7]{jo2022scorebased}. In addition to minimizing the Euclidean distance between the partial score functions and the approximations predicted by neural networks in a tractable manner, we also extend the new objectives that generalize the score matching \cite{JMLR:v6:hyvarinen05a,pmlr-v115-song20a,song2021scorebased} and incorporate the concept of denoising score matching \cite{Vincent2011ACB, song2021scorebased}. For a detailed derivation of the objective function, we invite the reader to follow \cite[Appendix A.1., A.2.]{jo2022scorebased}.\par

$\begin{cases}
\underset{\theta_{0}}{\text{min }}\E_{t}\left [ \lambda_{0}(t)\E_{CC_{0}}\left [\E_{CC_{t}|CC_{0}}\left [ \left \| s_{\theta_{0},t}(CC_{t}) - \nabla_{\Omega_{0,t}}\log \left ( p_{0t}(CC_{t}|CC_{0}) \right )\right \|_{2}^{2} \right ] \right ]\right ]\\
\ldots\\
\underset{\theta_{R}}{\text{min }}\E_{t}\left [ \lambda_{R}(t)\E_{CC_{0}}\left [\E_{CC_{t}|CC_{0}}\left [ \left \| s_{\theta_{R},t}(CC_{t}) - \nabla_{\Omega_{R,t}}\log \left ( p_{0t}(CC_{t}|CC_{0}) \right )\right \|_{2}^{2} \right ] \right ]\right ]
\end{cases}$\par

where for all $r\in\llbracket 0, R\rrbracket$, $\lambda_{r} : [0,T]\rightarrow \mathbb{R}_{+}$ is a positive weighting function and $t$ is uniformly sampled from $[0, T]$.\par

The expectations are taken over $CC_{0} \sim p_{0}$ and $CC_{t} \sim p_{0t}(CC_{t}|CC_{0})$, where $p_{0t}(CC_{t}|CC_{0})$ represents the transition distribution from $p_{0}$ to $p_{t}$ induced by the forward diffusion process. Given our choice of linear drift coefficients, this transition distribution $p_{0t}(CC_{t}|CC_{0})$ can be separated as follows:\par

$p_{0t}(CC_{t}|CC_{0})=\prod_{r=0}^{R}p_{0t}(\Omega_{r,t}|\Omega_{r,0})$.\par

With sufficient data and model capacity, score matching ensures that the optimal solution to the training objectives, denoted by $\left\{ s_{\theta_{r},t}(CC_{t}) \right\}_{0\leq r\leq R}$, is equivalent to $\left\{ \nabla_{\Omega_{r,t}}\log \left( p_{0t}(CC_{t}|CC_{0}) \right) \right\}_{0\leq r\leq R}$ for all $CC\in\mathcal{CC}$ and $t$.\par

This defines an easier sampling procedure, as we can sample each components of $\Omega$ separately. The coefficients of the forward diffusion process are tractable, given that $\left( p_{0t}(\Omega_{r,t}|\Omega_{r,0}) \right)_{0\leq r\leq R}$ follow Gaussian distributions \cite{särkkä2019applied}.\par

While we primarily employ denoising score matching, it's worth noting that other score-matching objectives, such as sliced score matching \cite{pmlr-v115-song20a} and finite-difference score matching \cite{pang2020efficient}, can also be applied in our framework.\par

Typically, for all $r\in\llbracket 1, R\rrbracket$, we choose $\lambda_{r}: \underset{t}{[0,T]}\underset{\longrightarrow}{\mapsto}\underset{\frac{C}{\E\left [ \left \| \nabla_{\Omega_{r,t}}\log \left ( p_{0t}(CC_{t}|CC_{0}) \right ) \right \|_{2}^{2} \right ]}}{\mathbb{R}_{+}}$ where $C$ is a positive constant.\par

Figure \ref{fig:diffusion_ccsd} presents a visual representation of our approach, with the reverse-time process and the partial score functions.\par

\begin{figure}[H]
\centering
\includegraphics[width=1\linewidth]{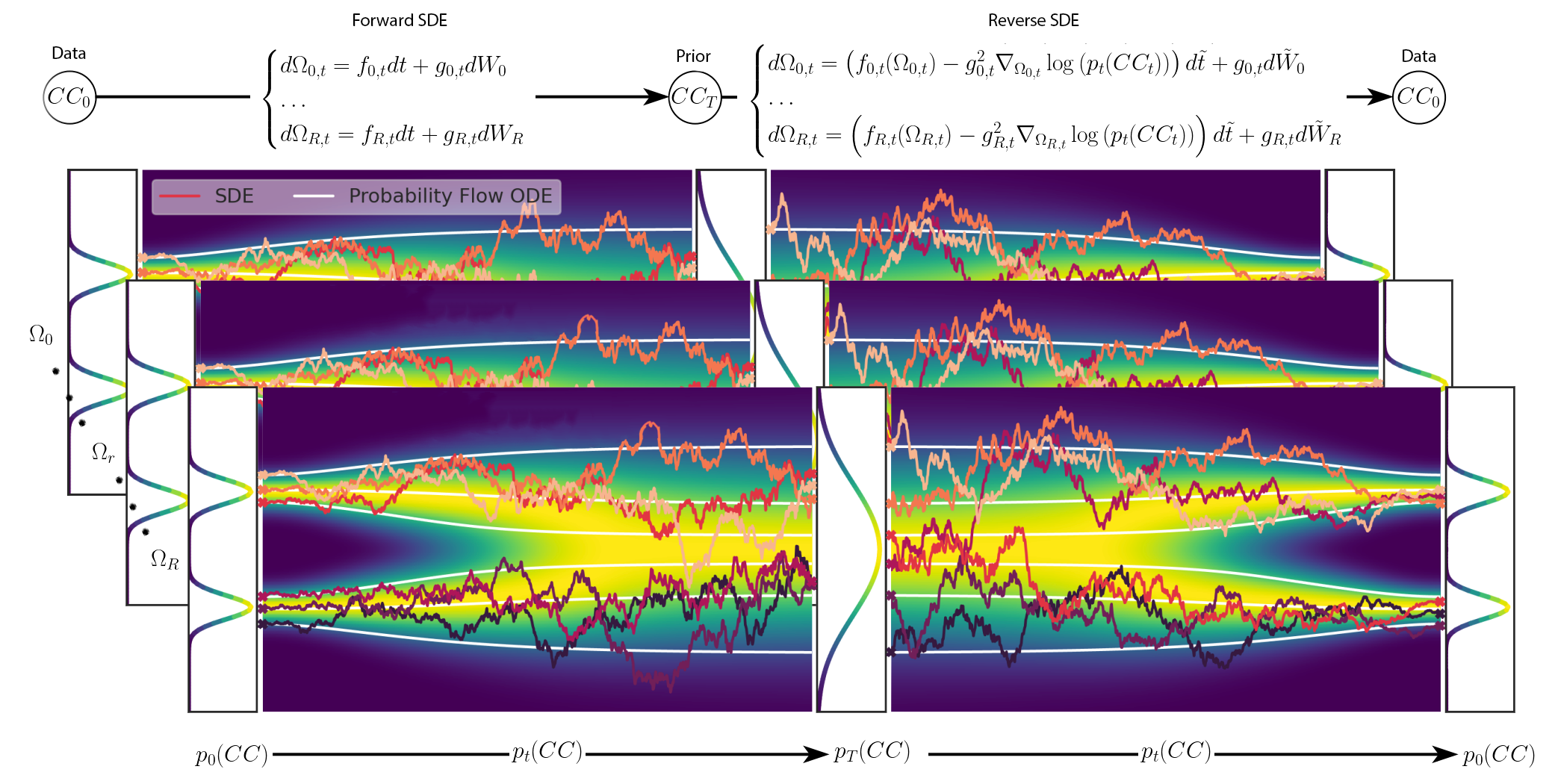}
\captionof{figure}{\textbf{Overview of CCSD.} We can map an original combinatorial complex to a noise distribution (the prior) with an SDE, and reverse this SDE for generative modelling. We can also reverse the associated probability flow ODE, which yields a deterministic process that samples from the same distribution as the SDE. Both the reverse-time SDE and probability flow ODE can be obtained by estimating the partial score functions $\left ( \nabla_{\Omega_{r,t}} \log \left ( p_{t}(CC_{t})\right )\right )_{0\leq r\leq R}$. The image of a diffusion background has been adapted from \cite[Figure 2.]{song2021scorebased}.}
\label{fig:diffusion_ccsd}
\end{figure}

Regarding the loss, in practice, we follow the derivation in C. Luo \cite[Equation 151]{luo_understanding_2022}, so that learning to model the partial score function is equivalent to modelling the negative of the source noise injected (up to a scaling factor that scales with time). Mathematically, we leverage during the training the fact that $\nabla_{CC_{t}}\log\left (p(CC_{t}) \right )=-\frac{1}{\sqrt{1-\overline{\alpha}_{t}}}\epsilon_{0}$.\par

Now, the next step is to determine the models that will enable us to predict the partial score functions of the joint probability across time. For detailed information on these models and the layers used, please refer to the Models and Layers section (Section \ref{Models_and_layers}) below.\par

\subsubsection{Generating samples from the reverse diffusion process}

Generating samples from the reverse diffusion process consists of following the procedure below:

\begin{itemize}
\item First, we sample the number of nodes, denoted as $N$, from the empirical distribution representing the number of nodes in the training dataset. This approach aligns with the methods employed by Li et al. \cite{li2018multiobjective} and Niu et al. \cite{niu2020permutation}. We also retrieve the size of the combinatorial complexes that we want to generate, $R$, as well as the dimension constraints tuples that define $\left (\mathcal{K}_{r} \right )_{2\leq r\leq R}$.
\item Then, we sample the noise with a batch size of $B$ from the prior distribution $p_{T}\left ( \Omega_{0,T},\ldots, \Omega_{R,T}\right )$. Here, $X_{T}=\Omega_{0,T}$ has dimensions $B\times N\times f_{0}$, $A_{T}=\Omega_{1,T}$ has dimensions $B\times N\times N\times f_{1}$, $F_{T}=\Omega_{2,T}$ has dimensions $B\times \binom{N}{2}\times \mathcal{K}_{2}\times f_{2}$, and for all $r\in\llbracket 3, R\rrbracket$, $B_{r-1,r,T}=\Omega_{r,T}$ has dimensions $B\times \mathcal{K}_{r-1}\times \mathcal{K}_{r}\times f_{r}$.
\item Using this sampled noise, we simulate the reverse-time system of stochastic differential equations to obtain the solution $CC_{0}=\left ( \Omega_{0,0},\ldots, \Omega_{R,0}\right )$. More information about the solver to solve the system of SDEs below in Subsubsection \ref{solver_sde}.
\item Finally, we apply quantization operations to $CC_{0}$ based on the specific requirements of the underlying generation tasks and the conditions the tensors must satisfy to form a valid combinatorial complex (mainly, we want to preserve the property of the rank function $rk$).
\end{itemize}

\subsubsection{Solving the reverse-time system of stochastic differential equations}
\label{solver_sde}

To complete this sampling procedure, one needs to simulate the reverse-time system of SDEs. An intractable solution can be derived as follows:\par

If we define two operators,

$F=\left ( f_{r,t}(\Omega_{r,t})d\tilde{t} + g_{r,t}d\tilde{W}_{r} \right )_{0\leq r\leq R}^{T}$ and $S=\left ( -g_{r,t}^{2}s_{\theta_{r},t}(CC_{t})d\tilde{t} \right )_{0\leq r\leq R}^{T}$, then the system of reverse-time SDEs can be rewritten:\par

$dCC_{t}=F-S$.\par

By denoting the marginal joint distribution of the equation above at time $t$ as $\tilde{p}_{t}(CC_{t})$, we obtain a partial differential equation, more precisely a \textit{Fokker-Planck equation}, that rules the evolution of $\tilde{p}_{t}$ through time:\par

$\frac{\partial \tilde{p}_{t}(CC_{t})}{\partial t}=-\nabla_{CC_{t}}\left [ f_{t}(CC_{t})\tilde{p}_{t}(CC_{t})-\frac{1}{2}g_{t}^{2}\tilde{p}_{t}(CC_{t})\nabla_{CC_{t}}\log \left ( \tilde{p}_{t}(CC_{t})\right )-g_{t}^{2}s_{t}(CC_{t})\tilde{p}_{t}(CC_{t}) \right ]$,\par

where $s_{t}(CC_{t})=\left ( s_{\theta_{r},t}(CC_{t}) \right )_{0\leq r\leq R}$ is a vector made of the partial score functions. The Fokker-Planck equation can be rewritten using the Fokker-Planck operators as follows:\par

$\frac{\partial \tilde{p}_{t}(CC_{t})}{\partial t}=\left ( \hat{\mathcal{L}}_{F}^{*}+ \hat{\mathcal{L}}_{S}^{*}\right )\tilde{p}_{t}(CC_{t})$. The actions of the Fokker-Plank operators on a function $\mathcal{J}$ that takes as an input CCs are defined by:

$\bullet$ $\hat{\mathcal{L}}_{F}^{*}\left ( \mathcal{J} \right ):= CC_{t}\mapsto -\nabla_{CC_{t}}\left ( f_{t}(CC_{t})\mathcal{J}(CC_{t})-\frac{1}{2}g_{t}^{2}\mathcal{J}(CC_{t})\nabla_{CC_{t}}\log \left ( \mathcal{J}(CC_{t})\right ) \right )$\par

$\bullet$ $\hat{\mathcal{L}}_{S}^{*}\left ( \mathcal{J} \right ):= CC_{t}\mapsto -\nabla_{CC_{t}}\left (-g_{t}^{2}s_{t}(CC_{t})\mathcal{J}(CC_{t}) \right )$\par

We can then derive from the Fokker-Planck equation an intractable solution to the system of reverse-time SDEs:\par

$\overline{CC}_{t}=e^{t\left (\hat{\mathcal{L}}_{F}^{*} + \hat{\mathcal{L}}_{S}^{*} \right )}\overline{CC}_{0}$,\par

where for all $t\in [0, T]$, $\overline{CC}_{t}=CC_{T-t}$. This intractable solution is called the \textbf{classical propagator} as it propagates the actions of our two operators. Now that we have this form, we can apply approximation solvers of these differential equations to sample from our model. In this thesis, we adapted the PC Sampler and S4 solvers detailed and derived in Jo et al. \cite{jo2022scorebased} and Song et al. \cite{song2021scorebased}. Later on, in our implementation, we will provide details on which solver has been used for each dataset, as reported in Table \ref{tabular:hyperparameters_CCSD}.\par

This concludes the presentation of our framework. Below, we introduce some variations and other applications possible with our framework.

\subsection{Conditional sampling}
\label{conditional_sampling}

In this subsection, we present conditional sampling within the CCSD framework presented earlier. We remind that the forward SDE diffusion process is defined as:\par

$dCC_{t}=f_{t}(CC_{t})dt+g_{t}(CC_{t})dW$.\par

Let's assume that the initial state distribution is conditioned by a variable $y$, denoted as $p_{0}(CC_{0}|y)$. Consequently, the probability density at time $t$ is also conditioned on $y$, expressed as $p_{t}(CC_{t}|y)$. Employing Anderson \cite{anderson} and Song et al. \cite[Appendix I]{song2021scorebased}, we can derive the reverse-time SDE as follows:\par

$\begin{cases}
  d\Omega_{0,t}=\left (f_{0,t}(\Omega_{0,t})-g_{0,t}^{2}\nabla_{\Omega_{0,t}}\log \left ( p_{t}(CC_{t}|y)\right ) \right )d\tilde{t} + g_{0,t}d\tilde{W}_{0} \\
  \ldots \\
  d\Omega_{R,t}=\left (f_{R,t}(\Omega_{R,t})-g_{R,t}^{2}\nabla_{\Omega_{R,t}}\log \left ( p_{t}(CC_{t}|y)\right ) \right )d\tilde{t} + g_{R,t}d\tilde{W}_{R}
\end{cases}$\par

By applying Bayes' rule, we can express $p_{t}(CC_{t}|y)$ as proportional to $p_{t}(CC_{t})p_{t}(y|CC_{t})$. Consequently, for all $r\in\llbracket 0, R\rrbracket$, we have:\par

$\nabla_{\Omega_{r,t}}\log \left ( p_{t}(CC_{t}|y)\right )=\nabla_{\Omega_{r,t}}\log \left ( p_{t}(CC_{t})\right )+\nabla_{\Omega_{r,t}}\log \left ( p_{t}(y|CC_{t})\right )$.\par

To train the second term, the $\nabla_{\Omega_{r,t}}\log \left ( p_{t}(y|CC_{t})\right )$ part of this model, we can employ a time-dependent classifier $p_{t}(y|CC_{t})$ with distinct cross-entropy loss functions for different timesteps.\par

\subsection{Imputation}
\label{imputation}

Imputation, a specialized form of conditional sampling, is a well-known technique in computer vision. However, it can find applications in various domains, such as completing missing connections in existing social networks or filling in missing nodes within mesh structures. We thus derive it for our framework based on the formulation made by Song et al. \cite{song2021scorebased} as follows:\par

We denote $\Delta(CC)$ and $\overline{\Delta}(CC)$ the known and unknown dimensions of the combinatorial complex $CC$, respectively, and also define $f_{\overline{\Delta},t}$ and $g_{\overline{\Delta},t}$ the restrictions of $f_{t}$ and $g_{t}$ to the unknown dimensions. If a function is element-wise, it is applied exclusively to the unknown dimension. If it is a diagonal matrix, we restrict the sub-matrix to the unknown dimensions. Our objective is to sample solely along the unknown dimensions, which can be formulated as:\par

$p\left ( \overline{\Delta}(CC_{0})|\Delta(CC_{0})=y\right )$. Introducing $z(t)=\overline{\Delta}(CC_{t})$, we can derive the following SDE:\par

$dz=f_{\overline{\Delta},t}(z(t))dt+g_{\overline{\Delta},t}(z(t))dW$\par

This allows us to apply the same principles as in the previous \textit{Conditional Sampling} subsection (Subsection \ref{conditional_sampling}), where we conditioned on the unknown dimensions for controllable generation purposes. An approximation and a reparametrization trick, as detailed in Song et al. \cite[Appendix I.1.2]{song2021scorebased}, provide a generally tractable form for the score:\par

$p_{t}\left ( z(t)|\Delta(CC_{0})=y \right )\approx p_{t}\left ( z(t)|\hat{\Delta}(CC_{t}) \right )$ where $\hat{\Delta}(CC_{t})$ is a random sample drawn from the generally tractable distribution $p_{t}\left ( \Delta(CC_{t})|\Delta(CC_{0})=y \right )$.

\subsection{Penalization of higher-order cell generation}
\label{penalization}

In most applications, higher-dimensional cells within a combinatorial complex are relatively rare. Conversely, the number of possible rank-$r$ cells for $r\geq 2$ grows exponentially with the number of nodes, resulting in sparse incidence matrices and correspondingly sparse partial score functions. To address this sparsity issue, we propose an alternative objective function that includes an $L_{2}$ penalization term. This term helps restrict the number of non-zero entries in the partial score functions associated with higher-order incidence matrices. The objective function is structured as follows:\par

$\begin{cases}
\underset{\theta_{0}}{\text{min }}\E_{t}\left [ \lambda_{0}(t)\E_{CC_{0}}\left [\E_{CC_{t}|CC_{0}}\left [ \left \| s_{\theta_{0},t}(CC_{t}) - \nabla_{\Omega_{0,t}}\log \left ( p_{0t}(CC_{t}|CC_{0}) \right )\right \|_{2}^{2} \right ] \right ]\right ]+\gamma_{0}\left \| s_{\theta_{0},t}(CC_{t}) \right \|_{2}^{2} \\
\ldots\\
\underset{\theta_{R}}{\text{min }}\E_{t}\left [ \lambda_{R}(t)\E_{CC_{0}}\left [\E_{CC_{t}|CC_{0}}\left [ \left \| s_{\theta_{R},t}(CC_{t}) - \nabla_{\Omega_{R,t}}\log \left ( p_{0t}(CC_{t}|CC_{0}) \right )\right \|_{2}^{2} \right ] \right ]\right ]+\gamma_{R}\left \| s_{\theta_{R},t}(CC_{t}) \right \|_{2}^{2}
\end{cases}$\par

In this context, $\gamma_{0},\ldots,\gamma_{R}\in\mathbb{R}_{+}$ represent penalization or regularization hyperparameters that should be tailored to the specific data distribution being learned. These hyperparameters should be inversely proportional to the sparsity of higher-order cells.

\subsection{Probability flow and neural ODEs}

Similar to Song et al. \cite{song2021scorebased}, our score-based model provides an alternative numerical approach for solving the reverse-time Stochastic Differential Equation (SDE). For all diffusion processes, there exists a corresponding deterministic process whose trajectories share the same marginal probability densities $\left ( p_{t}(CC_{t})\right )_{0\leq t\leq T}$ as those of the SDE. This deterministic process is governed by the following Ordinary Differential Equation (ODE):\par

$dCC_{t}=\left (d\Omega_{0,t},\ldots,d\Omega_{R,t} \right )^{T}$\par

where $\forall r\in\llbracket 0, R\rrbracket, \forall t\in [0, T], d\Omega_{r,t}=\left (f_{r,t}-\frac{1}{2}g_{r,t}^{2}\nabla_{\Omega_{r,t}}\log\left (p_{t}(CC_{t}) \right ) \right )dt$\par

This ODE can be determined from the SDE once the scores are known. It is commonly referred to as the \textit{probability flow ODE}. score-based model, which is typically implemented as a neural network, this corresponds to a neural ODE \cite{chen2019neural}.\par

The connection to the probability flow ODE offers several advantages, including the ability to compute the exact likelihood for any input data, create latent representations of data points from the initial distribution $p_{0}$ to the prior distribution $p_{T}$, implement an efficient sampling procedure, and establish a unique identifiable encoding procedure provided there is sufficient training data \cite{roeder2020linear}.

\section{Models and layers}
\label{Models_and_layers}

This section defines the layers and models implemented to predict our partial score functions. We adopted some notations from Jo et al. \cite{jo2022scorebased} and we provided a more detailed explanation regarding the components of each layer and model. Due to the permutation-invariant nature of graphs, we would like ideally to build a permutation-equivariant score-based model. This has not yet been demonstrated -or proved wrong- for the models ScoreNetworkA\_CC, ScoreNetworkA\_Base\_CC and ScoreNetworkF.

\begin{layer}[MLP]
MLP is the Multi-Layer Perceptron \cite{Hinton1989ConnectionistLP,Sanger1958ThePA}. We added optional Batch Normalization \cite{ioffe2015batch} layers between the layers.
\end{layer}

\begin{layer}[GNN]
GNN stands for Graph Neural Network. Here, we follow the Graph Convolutional Network layer architecture presented by Thomas N. Kipf and Max Welling \cite{kipf2017semisupervised}. For a tuple $(X, A)$ of a node feature matrix $X$ and an adjacency matrix $A$, both representing a graph (resp. a dimension $\geq 1$ CC) with $n$ nodes (resp. rank-0 cells), we have:\par

$\text{GNN}(X,A)=X'=\hat{D}^{-\frac{1}{2}}\hat{A}\hat{D}^{-\frac{1}{2}}X\Theta$ where $\hat{A}=A+I$ is the adjacency matrix with inserted self-loops, $\hat{D}$ is a diagonal degree matrix defined by $\forall i\in\llbracket 1, n\rrbracket, \hat{D}_{i,i}=\sum_{j=0}\hat{A}_{i,j}$, and $\Theta$ are learnable parameters.
\end{layer}

\begin{model}[ScoreNetworkX]

$\text{ScoreNetworkX}(CC_{t})=s_{\theta_{0},t}(CC_{t})\approx \nabla_{X_{t}}\log (p(CC_{t}))$ has been introduced in \cite{jo2022scorebased} and is defined by:\par

$\text{ScoreNetworkX}(CC_{t})=\text{MLP}\left ( \left [ \{ H_{i} \}_{i=0}^{L} \right ] \right )$ where $H_{0}=X_{t}$ and $H_{i+1}=\text{GNN}\left (H_{i}, A_{t} \right )$.\par

The brackets $\{ .\}$ represent the concatenation operation along a \textit{channel} axis.
\end{model}

\begin{layer}[GMH]
GMH is a Graph Multi-Head Attention layer adapted from Baek et al. \cite{baek2021accurate} and used in Jo et al. \cite{jo2022scorebased}. For matrices $(X,A)$, the attention blocks \textit{GMH} are defined by:\par

Value = $\text{GNN}(X, A)$.\par

Attention = $\frac{1}{\sqrt{\text{dim}_{\text{out}}}}\text{Query}\times \text{Key}^{T}$ where Query and Key are the output of two GNNs applied on $X$ and $A$ split along a channel dimension. The Attention matrix is then symmetrized.\par
\end{layer}

\begin{layer}[Attention Layer (Att)]
The attention layer comes from Jo et al. \cite{jo2022scorebased}. It consists of combining $L\in\mathbb{N}^{*}$ GMH layers: for all $i\in\llbracket 1, L\rrbracket$, $(\text{Value}_{i}, \text{Attention}_{i})=\text{GMH}(X,A^{i})$ where $\{ (A)^{i}\}_{i=1}^{p}=A^{p}$ are the higher-order adjacency matrices of the adjacency matrix $A$.\par

Then, the attention layer is defined by:\par

$\text{Att}(X,A^{p})=(X',A')$ where $X'=\tanh \left (\text{MLP}\left (\left \{\text{Value}_{i} \right \}_{i=1}^{L} \right )\right )$ and $A'=\left (\text{MLP}\left (\left \{\text{Attention}_{i} \right \}_{i=1}^{L} \right )\right )$
\end{layer}

\begin{layer}[HCN]
We adapted the aforementioned Graph Neural Network to take as an input a tuple $(H,F)$ where $H$ is the Hodge dual of an adjacency matrix $A$ and $F$ is a rank-2 incidence matrix. We thus present Hodge Convolutional Networks (HCN). For a tuple $(H, F)$ we have:

$\text{HCN}(H,F)=F'=\hat{D}^{-\frac{1}{2}}H\hat{D}^{-\frac{1}{2}}F\Theta$ where $\hat{D}$ is a diagonal degree matrix defined by $\forall i\in\llbracket 1, \binom{n}{2}\rrbracket, \hat{D}_{i,i}=\sum_{j=0}\hat{H}_{i,j}$, and $\Theta$ are learnable parameters.
\end{layer}

\begin{layer}[HCCMH]
HCCMH stands for Hodge Combinatorial Complexes Multi-Head Attention layer and is a layer designed for higher-order objects. For matrices $(H,F)$, the Hodge attention blocks \textit{HCCMH} are defined by:\par

Value = $H\times F$.\par

Hodge attention = $\frac{1}{\sqrt{\text{dim}_{\text{out}}}}\text{Query}\times \text{Key}^{T}$ where Query and Key are the output of two GNNs applied on $H$ and $F$ split along a channel dimension. The Hodge attention matrix is then symmetrized.\par
\end{layer}

\begin{layer}[Hodge Attention Layer (HodgeAtt)]
The HodgeAtt layer that we developed consists of $L\in\mathbb{N}^{*}$ HCCMH layers: for all $i\in\llbracket 1, L\rrbracket$, $(\text{Value}_{i}, \text{Hodge attention}_{i})=\text{HCCMH}(H^{i},F)$ where $\{ (H)^{i}\}_{i=1}^{p}=H^{p}=\mathfrak{H}(A^{p})$ is the Hodge dual of the higher-order adjacency matrices of the adjacency matrix $A$.\par

Then, the Hodge attention layer is defined by:\par

$\text{HodgeAtt}(H^{p},F)=(H',F')$ where $H'=\tanh \left (\text{MLP}\left (\left \{\text{Hodge attention}_{i} \right \}_{i=1}^{L} \right )\right )$ and \par

$F'=\left (\text{MLP}\left (\left \{\text{Value}_{i} \right \}_{i=1}^{L} \right )\right )$.
\end{layer}

These layers allow us to define the new model ScoreNetworkA\_CC that can compute the partial score function with respect to the adjacency matrix $A$ by taking into account the higher-order structure of the CCs.\par

\begin{model}[ScoreNetworkA\_CC]

$s_{\theta_{0},t}(CC_{t})\approx \nabla_{X_{t}}\log (p(CC_{t}))=\text{ScoreNetworkA\_CC}(CC_{t})$ such that:\par

$\text{ScoreNetworkA\_CC}(CC_{t})=\text{MLP}\left ( \left [ \left \{ G_{t,i}[1] \right \}_{i=0}^{L_{att}}, \left \{ \mathfrak{H}^{-1}\left ( W_{t,i}[1]\right ) \right \}_{i=0}^{L_{Hodge att}} \right ] \right )$.\par

$G_{t,0}=(X_{t}, A_{t}^{p})$ and for all $i\in\llbracket 0, L_{att}-1\rrbracket$, $G_{t,i+1}=\text{Att}(G_{t,i})$. $G_{t,i}[1]$ means that we only access the second element which is the modified higher-order adjacency matrix. $W_{t,0}=(H_{t}^{p}, F_{t})$ where $H_{t}^{p}=\mathfrak{H}\left (A_{t}^{p}\right )$ and for all $i\in\llbracket 0, L_{Hodge att}-1\rrbracket$, $W_{t,i+1}=\text{HodgeAtt}(W_{t,i})$.
\end{model}

\begin{remark}
By setting $L_{Hodge att}=0$ to remove higher-order dependencies, we find the ScoreNetworkA architecture that is found in the model GDSS \cite{jo2022scorebased}.
\end{remark}

\begin{model}[ScoreNetworkA\_Base\_CC (Baseline)]

\par
$\text{ScoreNetworkA\_Base\_CC}(CC_{t})=s_{\theta_{1},t}(CC_{t})\approx \nabla_{A_{t}}\log (p(CC_{t}))=$ is derived from ScoreNetworkA\_CC and is used for our ablation study and as a potential alternative to our attention-based model ScoreNetworkA\_CC. We replace the HodgeAtt layers with \textit{HodgeBaselineLayer} layers made of \textit{BaselineBlock} that consist of replacing the attention mechanisms through HCNs by MLP layers.
\end{model}

\begin{layer}[HodgeNetwork]
A HodgeNetwork is a simple neural network architecture that consists of a MLP applied on the channels axis of a higher-order rank-2 incidence matrix. Mathematically,\par

$\text{HodgeHodgeNetwork}(F_{t}^{p})=\text{MLP}(\left [ \left \{ F_{t}^{i} \right \}_{i=1}^{p} \right ])$.
\end{layer}

\begin{model}[ScoreNetworkF]

\par
$\text{ScoreNetworkF}(CC_{t})=s_{\theta_{2},t}(CC_{t})\approx \nabla_{F_{t}}\log (p_{t}(CC_{t}))=\text{MLP}\left ( \left [ \left \{ K_{i} \right \}_{i=0}^{L-1} \right ] \right )$ where $K_{i+1}=\text{HodgeNetwork}(K_{i})$ and $K_{0}=\left \{ F_{t}^{i} \right \}_{i\geq 1}$ where $\left \{ F_{t}^{i} \right \}_{i\geq 1}$ are the higher-order rank-2 incidences matrices defined by $F_{t}^{1}=F_{t}$, $\mathcal{H}_{t}=\mathcal{L}(F_{t})$, and $\forall i\geq 1$, $F_{t}^{i+1}=\mathcal{H}_{t} F_{t}^{i}$.
\end{model}

\begin{remark}
Since GNNs' message-passing operations and GMH's attention function are permutation equivariant \cite{keriven2019universal}, the proposed score-based models ScoreNetworkX and ScoreNetworkA inherently exhibit equivariance. Therefore, based on the findings of Niu et al. \cite{niu2020permutation}, the log-likelihood implicitly defined by these models is also guaranteed to be permutation-invariant. However, as highlighted in the section Future Work (Section \ref{future}), more work needs to be done to assess the permutation equivariance of the other score network models that we proposed, or to create new models that encompass this property.
\end{remark}

\section{Evaluation Metrics}
\label{Metrics}

In this section, we present pioneering metrics specially designed to evaluate the quality of generated combinatorial complexes. These metrics offer a comprehensive assessment of various facets of the generated complexes, shedding light on their fidelity to the original distribution. To our knowledge, this work represents the first exploration of generative AI for objects beyond graphs, making these metrics groundbreaking in the realm of generative AI for higher-order topological structures.\par

\begin{definition}[Hodge Laplacians Distance]
Let $\left (H_{r}\right )_{1\leq r\leq R}$ (resp, $\left (\hat{H}_{r}\right )_{1\leq r\leq R}$) be the Hodge Laplacians for each dimension $r$ of a combinatorial complex $CC$ from an original distribution (resp. a generated combinatorial complex $\hat{CC}$). Then, the Hodge Laplacians distance between the two CCs is defined as:\par

$D_{\text{Hodge}}(CC,\hat{CC})=\frac{1}{R}\sum_{r=1}^{R}\underset{P_{\pi,r}\in \mathcal{P}_{r}}{\text{min }}d\left ( P_{\pi,r}^{T}H_{r}P_{\pi,r}, \hat{H}_{r} \right )$, where for all $r\in\llbracket 1,R\rrbracket$, $\mathcal{P}_{r}$ is the set of all the permutation matrices that swaps rank-$r$ cells when we swap nodes.
\end{definition}

However, the Hodge Laplacians distance is in practice not tractable due to the vast number of permutations. Thus, we propose another metric, the Hodge Laplacian spectrum, inspired by \cite{Thongprayoon_2023}, that could be considered as a proxy for the metric previously defined. It consists of calculating sets of eigenvalues that could then be compared together. The eigenvalues capturing the structural information, we could envision that the resulting vector encompasses the structure of the combinatorial complex too.\par

\begin{definition}[Hodge Laplacian Spectrum]
Let $CC=\left (\Omega_{0},\ldots\Omega_{R} \right )$ be a combinatorial complex of dimension $R\geq 2$. The Hodge Laplacian spectrum of the combinatorial complex $CC$, denoted $\text{Spec}_{\text{Hodge}}(CC)$, is defined by:\par

$\text{Spec}_{\text{Hodge}}(CC)=\left ( \text{Spec}_{\text{Hodge},r}(CC)\right )_{2\leq r\leq R}$, where for all $r\in\llbracket 2, R\rrbracket$, $\text{Spec}_{\text{Hodge},r}(CC)=\text{Spec}\left ( \mathcal{L}\left (\Omega_{r}\right ) \right )$ where $\mathcal{L}$ is the Hodge Laplacian operator defined in Subsection \ref{Preamble} and $\text{Spec}$ is the function that returns the vector of the eigenvalues of an endomorphism or a squared matrix, with their order of multiplicity.
\end{definition}

To compare two combinatorial complexes using the Hodge Laplacian spectrum, we can employ a distance measure such as Maximum Mean Discrepancy (MMD) (see Section \ref{Graphs_molecule_generation}) to compare the distributions of eigenvalues. This comparison can accommodate complexes of different dimensions by zero-padding the higher dimensions of the lower-dimensional complexes.\par

Finally, we introduce the rank-$r$ metric. It assesses the distribution of features or sizes of rank-$r$ cells within a combinatorial complex. This metric allows us to compare the sizes and properties of generated complexes with the original ones. The definition of the metric depends on whether feature information is attached to rank-$r$ cells as follows:\par

\begin{itemize}
\item If $f_{r}=1$, the metric computes the distribution of sizes of the rank-$r$ cells (how many nodes per cell).
\item If $f_{r}\neq 1$, the metric computes the distribution of features attached to rank-$r$ cells.
\end{itemize}

Mathematically, we can write the following definition:\\

\begin{definition}[Rank-$r$ metric]
Let $CC=\left (\Omega_{0},\ldots\Omega_{R} \right )$ be a combinatorial complex of dimension $R\geq 2$. The Rank-$r$ metric of the combinatorial complex $CC$, denoted $\beta_{r}(CC)$, is defined by:\par

$\bullet$ If $f_{r}=1$, $\beta_{r}(CC)=\left ( \beta_{r,i} \right )_{d_{min}^{r}\leq i \leq d_{max}^{r}}$, where $\forall i\in\llbracket d_{min}^{r}, d_{max}^{r}\rrbracket$, $\beta_{r,i}=\sum_{j=1}^{|\mathcal{X}_{r}|}\mathbb{1}_{\{ |x_{j}^{r}|=i\}}$.\par

The parameter $\beta_{r,i}$ represents the number of rank-$r$ cells of size $i$. $|\cdot|$ is the operator that returns the cardinal (number of nodes) of a cell.\par

$\bullet$ If $f_{r}\neq 1$, $\beta_{r}(CC)=\left ( \beta_{r,k} \right )_{1\leq k \leq f_{r}}$, where $\forall k\in\llbracket 1, f_{r}\rrbracket$, $\beta_{r,k}=\sum_{j=1}^{|\mathcal{X}_{r}|}\mathbb{1}_{\{ \Omega_{r}[:,j,k]\neq 0\}}$.\par

The parameter $\beta_{r,k}$ represents the number of rank-$r$ cells with a feature equal to $k$.\par
\end{definition}

In both cases, the metric provides valuable insights into the structural characteristics of the combinatorial complexes, enabling comparisons between generated and original distributions.\par

The theoretical aspect of this thesis being presented, we can now delve into our implementation.\par


\chapter{Implementation}
\label{Implementation}

All the datasets employed in our experiments (Section \ref{Experiments}) can be easily and naturally elevated to combinatorial complexes of dimension 2 and can be efficiently represented numerically using the representation (Theorem \ref{theorem:representation} in Subsection \ref{Preamble}). Henceforth, throughout the remainder of this thesis, we will exclusively focus on combinatorial complexes of dimension 2.\par

Our models, layers, preprocessing scripts, metrics, plots, and diffusion framework have been meticulously implemented using the Python programming language \cite{van1995python}. The orbit metric \cite{jo2022scorebased} was developed in C++ \cite{ISO:1998:IIP} for optimal performance. In the subsequent section, Section \ref{Experiments}, we offer an in-depth account of our experiments. Following that, in Section \ref{library}, we present the software components developed for this thesis.\par

\section{Experiments}
\label{Experiments}

For our experiments, we carefully selected the datasets to have varying sizes and characteristics, for example, synthetic graphs, real-world graphs, social graphs or biochemical graphs, and to be able to compare the results with different existing models and approaches for the sub-problem that is graph generation. We also made sure that these datasets have been evaluated on other methods to benchmark our framework.\par

We compared our proposed method against several general graph generative models, each employing different architectural approaches. DeepGMG \cite{li2018learning} and GraphRNN \cite{you2018graphrnn} employ RNN-based architectures while GraphAF \cite{shi2020graphaf}, GraphDF \cite{luo2021graphdf} utilize flow-based architectures, and GRAPHARM \cite{kong2023autoregressive} adopts diffusion based architecture. These models are all autoregressive, generating graphs step by step. On the other hand, GraphVAE \cite{simonovsky2018graphvae}, GraphEBM \cite{liu2021graphebm}, GDSS \cite{jo2022scorebased}, EDP-GNN \cite{niu2020permutation} and SGGM+SLD \cite{yang2023scorebased} utilize VAE, and EBM, and score-based models respectively. GNF \cite{NEURIPS2019_1e44fdf9} and MoFlow \cite{Zang_2020} employ a flow-based model. The above models, as well as our framework CCSD, are all one-shot, generating the entire graph in one step.\par

We adopted three types of SDEs, VESDE, VPSDE, and sub-VP SDE, as introduced by Song et al. \cite{song2021scorebased}, for the diffusion processes of each component. Additionally, we employed either the PC sampler or the S4 solver to solve the system of SDEs. Further implementation details for each dataset and the associated SDEs can be found in Table \ref{tabular:hyperparameters_CCSD} and Table \ref{tabular:hyperparameters_CCSD_Base}. Similarly to GDSS, we created a script to compute the Frobenius norm of the Jacobian of our models to assess the complexity of learning partial scores, especially with respect to the higher-order matrices.\par

As the training can take several days for a single dataset, we only trained our models one time and sampled from them once. We acknowledge that this is a limitation of our work but it could be easily fixed with more time. For the GDSS baseline, we used the hyperparameters given by the original work.\par

\subsection{Molecule Generation}

For our molecule generation task, we utilized the QM9 dataset \cite{ramakrishnan_quantum_2014}, which comprises 133,885 molecules, each with a varying atom count ranging from 1 to 9. These molecules consist of Carbon (C), Fluorine (F), Oxygen (O), and Nitrogen (N) atoms, with implicit Hydrogen atoms. The bonds in these molecules can be single, double, or triple. 10\% of the dataset is set apart for testing purposes.\par

To prepare the molecules for processing, we converted each one into a graph representation. These graphs have node features denoted as $X\in \{0, 1\}^{N\times f_{0}}$ and an adjacency matrix represented as $A\in {0, 1, 2, 3}^{N×N}$. Here, $N$ signifies the maximum number of atoms found in any molecule in the QM9 dataset (which is 9 for QM9), and $f_{0}$ represents the number of possible bond types/entries for the adjacency matrix (4 for QM9). The entries in the adjacency matrix $A$ signify the types of bonds between atoms, such as single, double, or triple bonds. We lifted the graphs into CCs by transforming the ring into rank-2 cells (see Figure \ref{fig:molecule_lift}).\par

We followed a standard preprocessing procedure \cite{shi2020graphaf,luo2021graphdf} for these molecules. This involved kekulization of the molecules using the RDKit Python library \cite{Landrum2016RDKit2016_09_4} and removing hydrogen atoms. We also applied a valency correction method proposed by Zang and Wang \cite{Zang_2020}.\par

For our approach, we utilized the signal-to-noise ratio (SNR) and scale coefficient obtained through a grid search in GDSS \cite{jo2022scorebased} as the basis for our molecule generation. The optimization process in GDSS aimed to find the best FCD (Fréchet ChemNet Distance) value among those that achieved a novelty score exceeding 85\%. This procedure has been chosen by the authors as a low novelty value could potentially lead to low FCD and NSPDK MMD values.\par

After generating samples using the reverse diffusion process, we quantized the adjacency matrices to values in the set $\{0, 1, 2, 3\}$ by mapping them as follows:\par

\begin{itemize}
\item Values in the range $]-\infty, 0.5]$ were mapped to $0$.
\item Values in the range $[0.5, 1.5[$ were mapped to $1$.
\item Values in the range $[1.5, 2.5[$ were mapped to $2$.
\item Values in the range $[2.5, +\infty[$ were mapped to $3$.
\end{itemize}

For evaluation, we used the MMD distance to compare the distributions of the NSPDK statistic between the same number of generated and test molecules. We compared also the molecular metrics introduced in Section \ref{Graphs_molecule_generation}, such as validity, novelty, FCD, etc.

\subsection{Graph Generation}

We evaluated the performance of CCSD by examining the quality of generated samples on a diverse set of graph datasets, encompassing both synthetic and real-world graphs of varying sizes. Our evaluation datasets include:

\begin{itemize}
\item \textbf{Ego-small:} A collection of 200 small ego graphs, extracted from the larger Citeseer network dataset \cite{sen_collective_2008}.
\item \textbf{Community-small:} A set of 100 randomly generated community graphs, following the methodology introduced by Niu et al. \cite{niu2020permutation}.
\item \textbf{Enzymes small:} Comprising 35 graphs extracted from a pool of 587 protein graphs (original Enzymes dataset), where we selected the graphs with fewer than 12 nodes. These protein graphs represent the tertiary structures of enzymes from the BRENDA database \cite{schomburg}
\item \textbf{Grid small:} A collection of 100 standard 2D grid graphs with varying dimensions, ranging from 4x4 to 7x7 rows and columns (equivalent to 10x10 to 19x19 rows and columns in the original grid dataset).
\end{itemize}

The grid dataset generation and graph manipulation were carried out using the NetworkX Python library \cite{hagberg2008exploring}. We lifted the graphs into CCs by either applying a loop-based lift procedure like for the molecules or a path-based lift procedure (see Figure \ref{fig:path_lift}). More information about which lifting procedure is applied to which dataset can be found in the parameter table (Table \ref{tabular:hyperparameters_CCSD}).\par

To ensure a fair comparison, we followed the experimental and evaluation settings outlined by You et al. \cite{you2018graphrnn}, including the same train/test split. We employed the Maximum Mean Discrepancy (MMD) distance to compare the distributions of various graph statistics between the generated samples and the test graphs. The statistics we analyzed include degree distributions, clustering coefficients, and the occurrences of 4-node orbits (to capture higher-level motifs) \cite{Hocevar2014ACA}. We provided detailed definitions of these metrics in Section \ref{Graphs_molecule_generation}.\par

In line with previous work \cite{jo2022scorebased}, we used the Gaussian Earth Mover's Distance (EMD) kernel for computing MMDs, instead of the total variation (TV) distance employed in some other papers, like Liao et al. \cite{liao2020efficient}. This choice is made to avoid an indefinite kernel and undefined behaviour \cite{obray2022evaluation}.\par

For a fair evaluation of the generic graph generation task, we adhered to the standard settings established by existing works \cite{you2018graphrnn,NEURIPS2019_1e44fdf9,niu2020permutation}, which cover everything from node features to data splitting. Specifically, for CCSD, we initialized the node features using one-hot encoding based on the degrees of the nodes.\par

Our approach to graph generation also leverages the signal-to-noise ratio (SNR) and scale coefficient determined through an extensive grid search conducted in GDSS \cite{jo2022scorebased}. The optimization process that they used aims to find the optimal MMD value while minimizing the average of three key graph statistics: degree, clustering coefficient, and orbit. Additionally, we also incorporated an exponential moving average (EMA) \cite{NEURIPS2020_92c3b916} for larger graph datasets, such as Enzymes small and Grid small, to reduce the variance and enhance performance.\par

After generating the samples by simulating the reverse diffusion process, we quantized the entries of the adjacency matrices using the $\mathbb{1}_{\{x>0.5\}}$ operator to obtain an adjacency matrix with values in $\{0, 1\}$.\par

\subsection{Tanimoto similarity}
\label{Tanimoto}

The Tanimoto similarity metric plays a crucial role in molecule generation tasks, offering a quantitative means to gauge the structural likeness between newly generated molecules and those present in the training dataset \cite{tanimoto}. This metric relies on the Morgan fingerprints \cite{morgan_generation_1965}, a widely accepted molecular representation method. By leveraging these fingerprints' similarity, the Tanimoto similarity provides an effective way to assess how well the generated molecules align with the structural attributes of the training molecules. A higher Tanimoto similarity score signifies a closer structural resemblance, showcasing the model's ability to produce molecules that closely mirror the characteristics of the original dataset. The Tanimoto Similarity ($\mathcal{T}$) can be mathematically expressed as:\par

$\mathcal{T}(A, B) = \frac{|A \cap B|}{|A \cup B|}$,\par

where $A$ and $B$ are sets of Morgan fingerprints. The resulting similarity score ranges from 0 (indicating complete dissimilarity) to 1 (representing perfect similarity, where the sets of features are identical).\par

\subsection{Ablation Study \& Models}

We conducted experiments employing two different approaches. Firstly, for the partial score function with respect to the adjacency matrix, denoted as $s_{\theta_{1}}(CC_{t})\approx \nabla_{\Omega_{1,t}}\log\left ( p_{t}\left ( CC_{t} \right ) \right )$, we utilized the ScoreNetworkA\_CC model, which incorporates an attention system tailored for our specific generation task. This approach is called \textit{CCSD}. Conversely, we trained an alternative model called \textit{CCSD Base}, which employs the ScoreNetworkA\_Base\_CC model, consisting solely of MLP layers. These model descriptions were detailed in Section \ref{Models_and_layers}.\par

Our ablation study aims to compare the attention-based model with the \textit{vanilla} variant that relies solely on MLP layers. Furthermore, we evaluated the results of these two models alongside GDSS \cite{jo2022scorebased}, the graph generation framework based on stochastic differential equations that served as the foundation for our framework. To ensure a fair comparison, we fine-tuned the parameters of both \textit{CCSD} and \textit{CCSD Base} models to make them relatively similar to those used in GDSS. We also ensured that the total number of trainable parameters was roughly comparable, as illustrated in Table \ref{tabular:nb_parameters}. Additionally, we limited the depth of the MLP layers to minimize disparities in terms of inference speed.\par

\subsection{Parameters}
\label{parameters}

In Table \ref{tabular:nb_parameters}, we provided an overview of the number of parameters in each of the score networks for every dataset. The trainable parameters represent the cumulative count across all three score networks. Additionally, we reported the maximum percentage difference between the models with the fewest and most trainable parameters to assess that they do not differ by an order of magnitude.\par

\begin{table}
\centering
\begin{tabular}{ lcccc }
    \hline
     & GDSS & CCSD & CCSD Base & MAP (\%)$\downarrow$ \\
     \cline{2-5}
    Ego-small & 179806 & 217633 & 144470 & 33.62 \\
    Community-small & 206803 & 227023 & 222838 & 8.91 \\
    Enzymes-small & 386179 & 390410 & 375160 & 3.91 \\
    Grid-small & 373234 & 436536 & 280839 & 35.67 \\
    QM9 & 32389 & 42764 & 34626 & 24.26 \\
    \hline
\end{tabular}
\caption{\textbf{Number of parameters for each dataset.} We compared the number of parameters between \textit{GDSS}, \textit{CCSD}, \textit{CCSD Base}, and calculated the difference in percentage between the models with the fewest and most trainable parameters.}
\label{tabular:nb_parameters}
\end{table}

In Table \ref{tabular:hyperparameters_CCSD} and Table \ref{tabular:hyperparameters_CCSD_Base}, we present the hyperparameters of the two approaches for all the datasets.\par

\begin{table}
\scriptsize
\centering
\begin{tabular}{ llccccc } 
    \hline
     & Hyperparameters & Ego-small & Community-small & Enzymes-small & Grid-small & QM9 \\
    \hline
    \multirow{2}{3em}{General} & Batch Normalization & False & False & False & False & False \\
    & Block layer & GCN & GCN & GCN & GCN & GCN \\
    & Node init method & deg & deg & deg & deg & atom \\
    & Lift procedure & cycles & path\_based & cycles & path\_based & cycles \\
    & Lift kwargs & & 3 & - & 3 & - \\
    & $d_{min}$ & 3 & 3 & 3 & 3 & 3 \\
    & $d_{max}$ & 5 & 3 & 4 & 3 & 9 \\
    & Max node number & 18 & 20 & 12 & 49 & 9 \\
    & Min node value & 1 & 1 & 1 & 1 & 6 \\
    & Max node value & 1 & 1 & 1 & 1 & 9 \\
    & Max feature number & 17 & 10 & 10 & 5 & 4 \\
    & Min edge value & 1 & 1 & 1 & 1 & 1 \\
    & Max edge value & 1 & 1 & 1 & 1 & 3 \\
    \hline
    \multirow{2}{3em}{$s_{\theta_{0}}$} & Number of GCN layers & 2 & 3 & 5 & 4 & 2 \\
    & Hidden dimension & 12 & 32 & 32 & 24 & 10 \\
    \hline
    \multirow{6}{3em}{$s_{\theta_{1}}$} & Number of attention heads & 4 & 4 & 4 & 4 & 4 \\
    & Number of initial channels & 2 & 2 & 2 & 2 & 2 \\
    & Number of hidden channels & 5 & 8 & 8 & 6 & 8 \\
    & Number of final channels & 4 & 4 & 4 & 4 & 4 \\
    & Number of Block layers & 2 & 5 & 6 & 6 & 3 \\
    & Number layers in MLP & 2 & 2 & 2 & 2 & 3 \\
    & Hidden dimension & 12 & 32 & 32 & 24 & 10 \\
    & Hodge Block layer & HCN & HCN & HCN & HCN & HCN \\
    & Number of Hodge Block layers & 1 & 1 & 2 & 1 & 2 \\
    & Number layers in Hodge MLP & 1 & 1 & 2 & 1 & 1 \\
    & Hidden dimension in Hodge MLP & 4 & 4 & 4 & 4 & 4 \\
    & Number of hidden Hodge channels & 2 & 2 & 4 & 2 & 4 \\
    & Number of final Hodge channels & 2 & 2 & 2 & 2 & 2 \\
    & Number of Hodge attention head & 2 & 2 & 2 & 2 & 2 \\
    & Hodge attention dim & 4 & 4 & 4 & 4 & 4 \\
    \hline
    \multirow{3}{3em}{$s_{\theta_{2}}$} & Power higher-order & 2 & 2 & 2 & 2 & 2 \\
    & Number layers in MLP & 1 & 1 & 1 & 1 & 1 \\
    & Apply Hodge masks & True & True & True & True & True \\
    \hline
    \multirow{4}{3em}{SDE for $X$} & Type & VP & VP & VP & VP & VE \\
    & Number of sampling steps & 1000 & 1000 & 1000 & 1000 & 1000 \\
    & $\beta_{min}$ & 0.1 & 0.1 & 0.1 & 0.1 & 0.1 \\
    & $\beta_{max}$ & 1 & 1 & 1 & 1 & 1 \\
    \hline
    \multirow{4}{3em}{SDE for $A$} & Type & VP & VP & VE & VP & VE \\
    & Number of sampling steps & 1000 & 1000 & 1000 & 1000 & 1000 \\
    & $\beta_{min}$ & 0.1 & 0.1 & 0.2 & 0.2 & 0.1 \\
    & $\beta_{max}$ & 1 & 1 & 1 & 0.8 & 1 \\
    \hline
    \multirow{4}{3em}{SDE for $F$} & Type & VP & VP & VE & VP & VE \\
    & Number of sampling steps & 1000 & 1000 & 1000 & 1000 & 1000 \\
    & $\beta_{min}$ & 0.1 & 0.1 & 0.1 & 0.1 & 0.1 \\
    & $\beta_{max}$ & 1 & 1 & 1 & 1 & 1 \\
    \hline
    \multirow{3}{3em}{Solver} & Type & EM & EM + Langevin & S4 & Rev. + Langevin & Rev. + Langevin \\
    & SNR & - & 0.05 & 0.15 & 0.1 & 0.2 \\
    & Scale coefficient & - & 0.7 & 0.7 & 0.7 & 0.7\\
    \hline
    \multirow{6}{3em}{Train} & Optimizer & Adam & Adam & Adam & Adam & Adam \\
    & Learning rate & $1\times 10^{-2}$ & 1 $\times 10^{-2}$ & $1 \times 10^{-2}$ & $1 \times 10^{-2}$ & $5 \times 10^{-3}$ \\
    & Weight decay & $1 \times 10^{-4}$ & $1 \times 10^{-4}$ & $1 \times 10^{-4}$ & $1 \times 10^{-4}$ & $1 \times 10^{-4}$ \\
    & Batch size & 128 & 128 & 64 & 8 & 1024 \\
    & Number of epochs & 5000 & 5000 & 5000 & 5000 & 300 \\
    & EMA & - & - & 0.999 & 0.999 & - \\
    \hline
\end{tabular}
\caption{\textbf{Hyperparameters of CCSD} used in the generic graph generation tasks and the molecule generation tasks. We provide the hyperparameters of the score-based models ($s_{\theta_{0}}=\text{ScoreNetworkX}$, $s_{\theta_{1}}=\text{ScoreNetworkA\_CC}$, and $s_{\theta_{2}}=\text{ScoreNetworkF}$), the diffusion processes (SDE for $X=\Omega_{0}$, $A=\Omega_{1}$, and $F=\Omega_{2}$), the SDE solver, and the training. $d_{min}$ and $d_{max}$ refer to the constrained on the rank-2 matrices generated via $\Omega_{2}=F$.}
\label{tabular:hyperparameters_CCSD}
\end{table}

\begin{table}
\scriptsize
\centering
\begin{tabular}{ llccccc } 
    \hline
     & Hyperparameters & Ego-small & Community-small & Enzymes-small & Grid-small & QM9 \\
    \hline
    \multirow{2}{3em}{$s_{\theta_{0}}$} & Number of GCN layers & 2 & 3 & 5 & 4 & 2 \\
    & Hidden dimension & 32 & 32 & 32 & 24 & 10 \\
    \hline
    \multirow{6}{3em}{$s_{\theta_{1}}$} & Number of attention heads & 4 & 4 & 4 & 4 & 4 \\
    & Number of initial channels & 2 & 2 & 2 & 2 & 4 \\
    & Number of hidden channels & 6 & 8 & 8 & 6 & 8 \\
    & Number of final channels & 4 & 4 & 4 & 4 & 4 \\
    & Number of Block layers & 4 & 5 & 6 & 6 & 3 \\
    & Number layers in MLP & 2 & 2 & 2 & 2 & 3 \\
    & Hidden dimension & 32 & 32 & 32 & 24 & 10 \\
    & Hodge Block layer & HCN & HCN & HCN & HCN & HCN \\
    & Number of Hodge Block layers & 3 & 2 & 2 & 2 & 2 \\
    & Number layers in Hodge MLP & 2 & 2 & 2 & 1 & 2 \\
    & Hidden dimension in Hodge MLP & 4 & 4 & 8 & 4 & 8 \\
    & Number of hidden Hodge channels & 4 & 4 & 8 & 2 & 8 \\
    & Number of final Hodge channels & 6 & 4 & 8 & 4 & 6 \\
    & Hidden dimension in Hodge layers & 6 & 4 & 8 & 2 & 8\\
    \hline
    \multirow{3}{3em}{$s_{\theta_{2}}$} & Power higher-order & 2 & 2 & 2 & 1 & 2 \\
    & Number layers in MLP & 1 & 2 & 2 & 1 & 1 \\
    & Apply Hodge masks & True & True & True & True & True \\
    \hline
\end{tabular}
\caption{\textbf{Hyperparameters of CCSD Baseline} used in the generic graph generation tasks and the molecule generation tasks. This time, the score-based model for the rank-2 cells is $s_{\theta_{1}}=\text{ScoreNetworkA\_Base\_CC}$. The general parameters, the SDEs, the solver and the hyperparameters for training are not displayed as they are the same as CCSD.}
\label{tabular:hyperparameters_CCSD_Base}
\end{table}

\section{Introducing CCSD: A Python library}
\label{library}

In addition to the research outcomes presented above, we also delivered a publicly available \textbf{Python library, \textit{CCSD}}, designed to facilitate the replication of our research results and to allow researchers to extend our work conducted in this thesis. This comprehensive tool empowers users to train and sample combinatorial complexes using the CCSD approach or generate graphs and molecules employing the GDSS approach. The library offers extensive functionality, including the creation of detailed logs and model checkpoints during training and the generation of plots, animations, and object pickle files during sampling. The library is technical and yet easy to use.\par

For easy access and utilization, \textit{CCSD} is available on PyPi at the following link: \url{https://pypi.org/project/ccsd/}. Additionally, its source code and further information can be found on GitHub: \url{https://github.com/AdrienC21/CCSD}.\par

\begin{figure}[H]
\centering
\begin{minipage}{0.5\textwidth}
  \centering
  \includegraphics[width=0.8\linewidth]{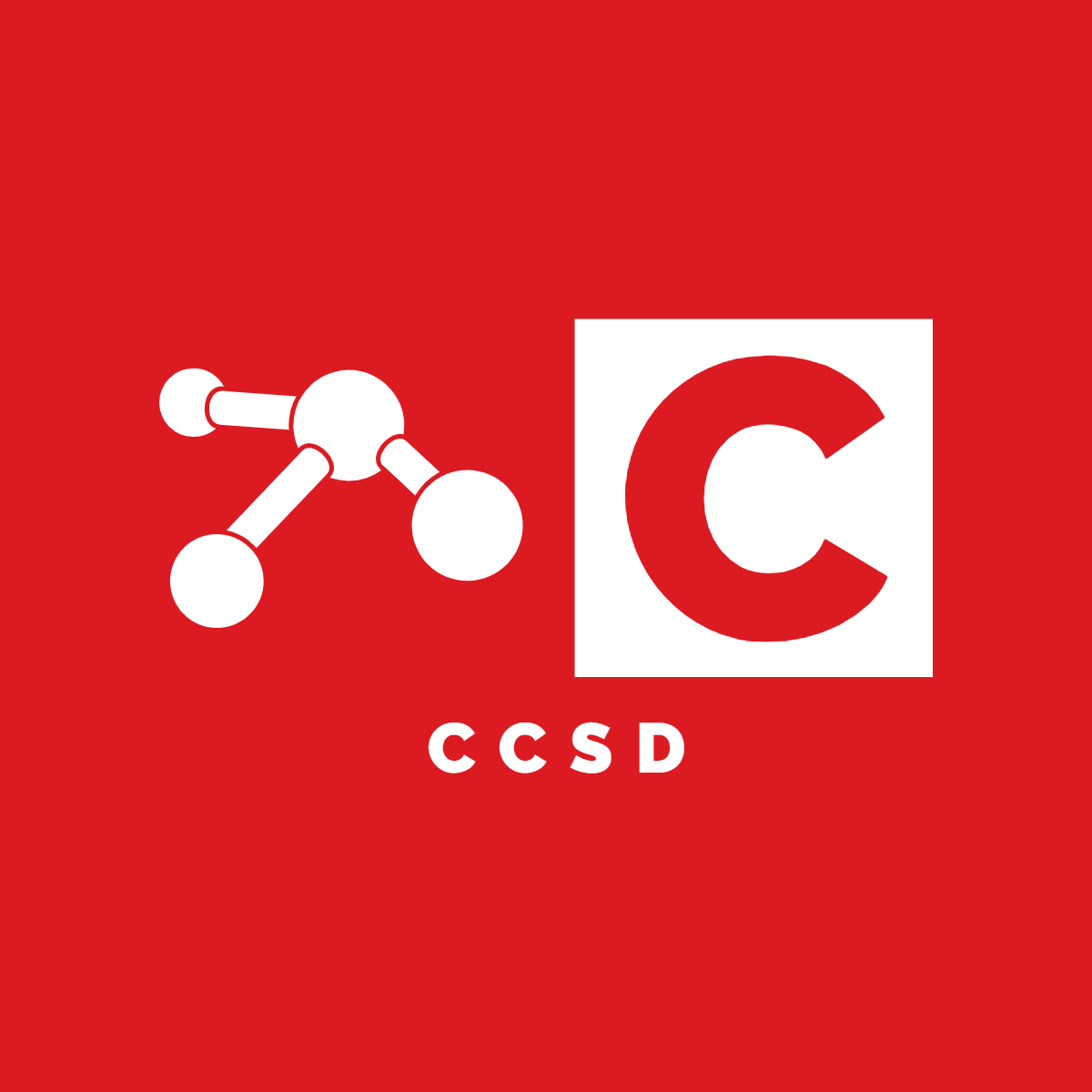}
  \captionof{figure}{CCSD Logo.}
  \label{fig:ccsd-logo}
\end{minipage}%
\begin{minipage}{0.5\textwidth}
  \centering
  \includegraphics[width=0.8\linewidth]{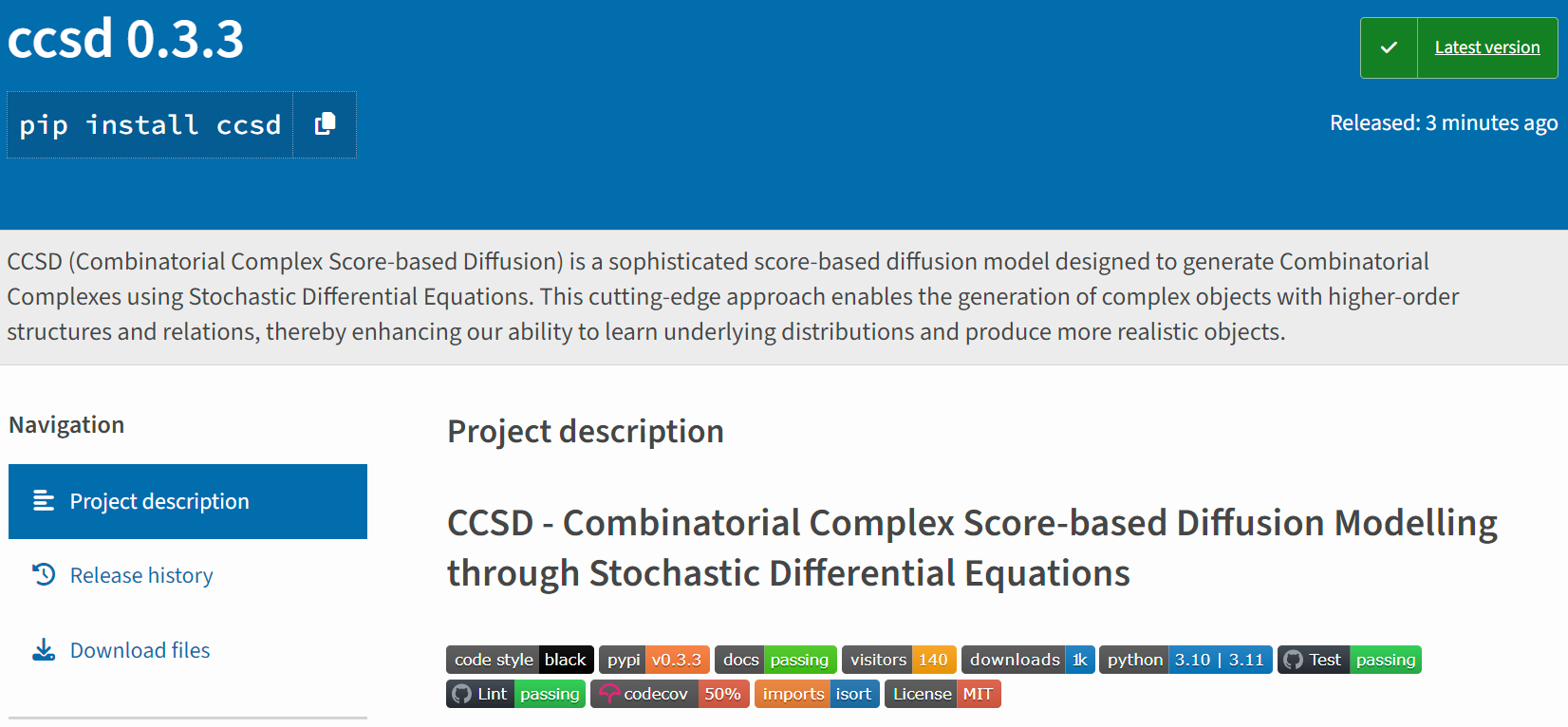}
  \captionof{figure}{PyPi page of the library.}
  \label{fig:pypi}
\end{minipage}
\end{figure}

To ensure the highest quality, our library has undergone meticulous linting, thorough documentation, and extensive unit test coverage. We have strived to offer a user-friendly interface, supported by comprehensive documentation (see Subsection \ref{documentation} below), and implemented a well-documented parser with a detailed log system (Figure \ref{fig:run_script}).\par

All researchers are more than welcome to contribute, whether it is through debugging, adding functionnalities or new score network models.\par

\begin{figure}[H]
\centering
\includegraphics[width=0.6\linewidth]{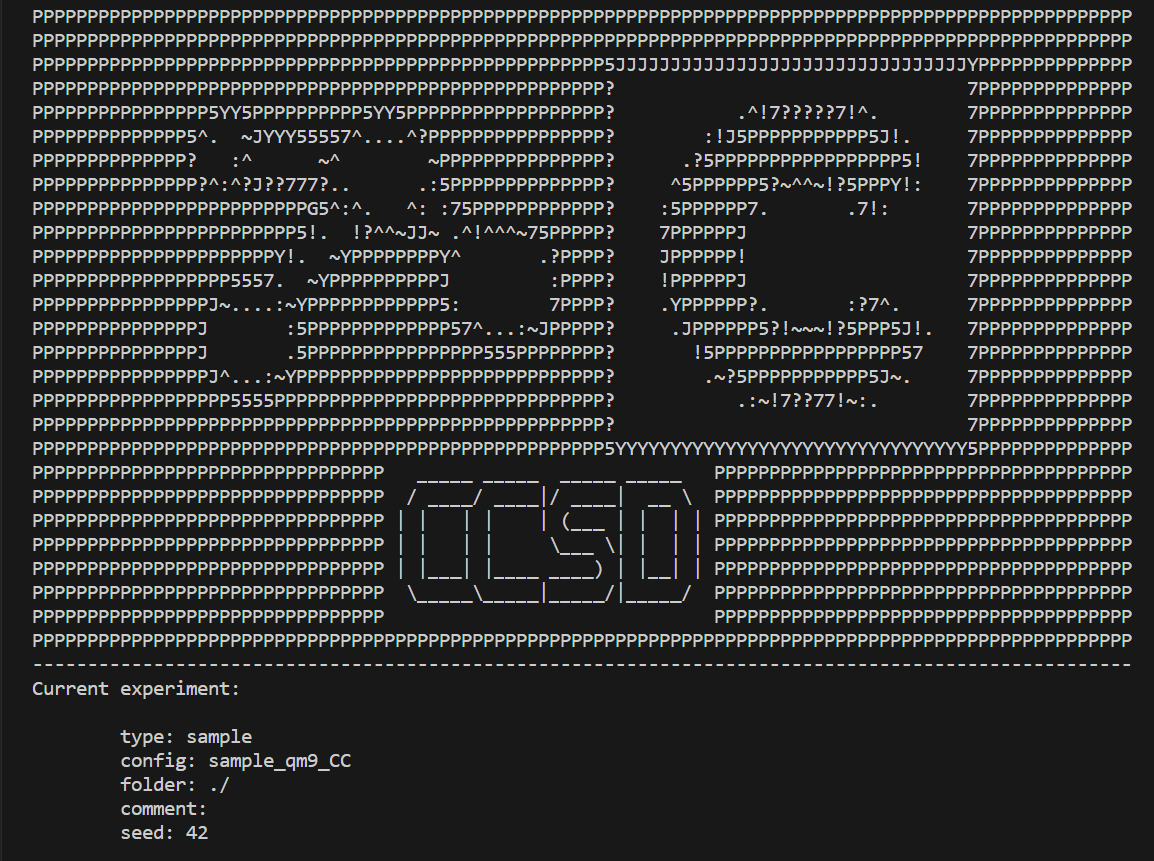}
\captionof{figure}{Logs at the start of a sampling procedure.}
\label{fig:run_script}
\end{figure}

\subsection{Documentation}
\label{documentation}

For complete information on \textit{CCSD}, including installation instructions and usage details, please refer to our comprehensive documentation available at: \url{https://ccsd.readthedocs.io/} (Figure \ref{fig:ccsd-documentation}). This documentation provides in-depth insights into the package, including functions, parameters, input and output types, and customization options. It is designed to assist users in quickly grasping the concepts and making the most of our framework. Further information can also be accessed on our GitHub page; \underline{repository:} AdrienC21/CCSD (Figure \ref{fig:ccsd-github}).\par

\begin{figure}[H]
\centering
\begin{minipage}{0.5\textwidth}
  \centering
  \includegraphics[width=0.9\linewidth]{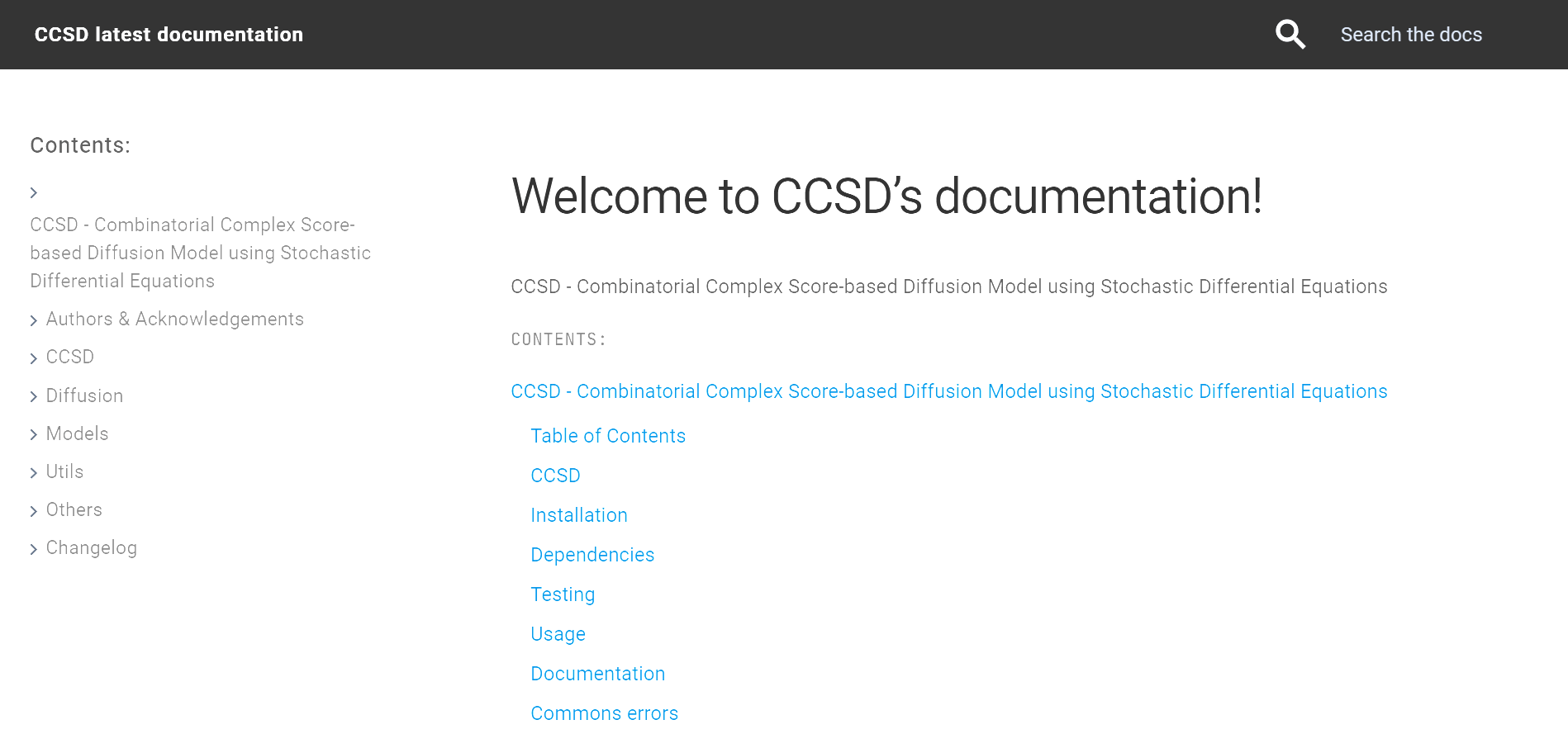}
  \captionof{figure}{CCSD online documentation.}
  \label{fig:ccsd-documentation}
\end{minipage}%
\begin{minipage}{0.5\textwidth}
  \centering
  \includegraphics[width=0.9\linewidth]{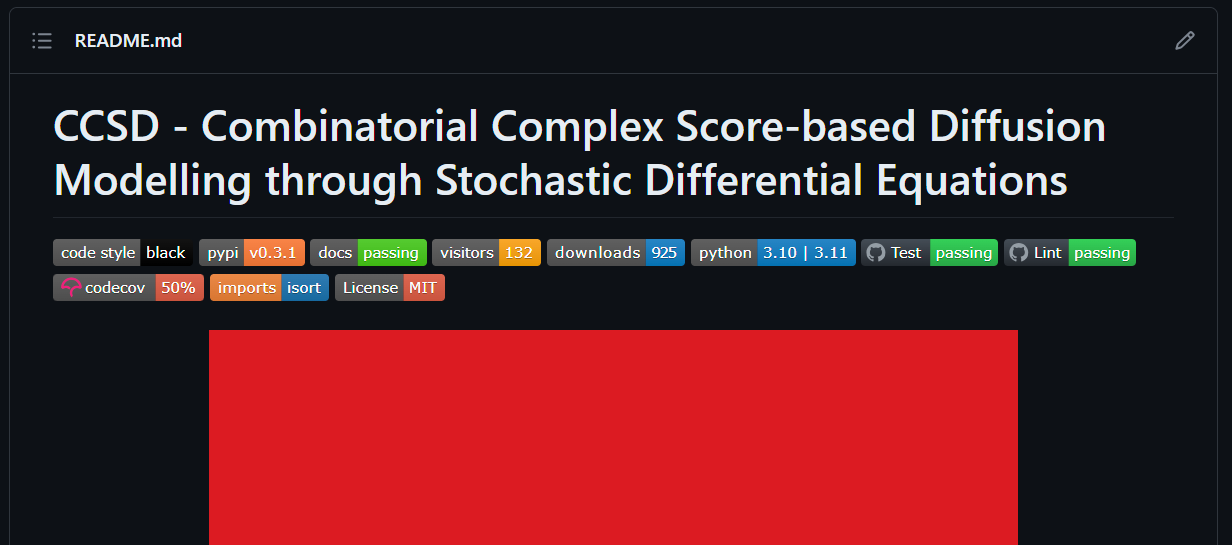}
  \captionof{figure}{GitHub page of the library.}
  \label{fig:ccsd-github}
\end{minipage}
\end{figure}

\subsection{Example 1: Generating molecules by sampling combinatorial complexes}

In this example, we demonstrate how to use the CCSD package to sample molecules from the learned QM9 dataset distribution. First, we import the package, define the experiment parameters, and then sample 10,000 molecules. The logs and SMILES representations of the generated molecules will be saved in a folder named \textit{logs\_sample}, while visual materials such as combinatorial complexes, graphs, molecules, and diffusion animations will be stored in a \textit{sample} folder.\par

Using a terminal, you can use the following bash command after cloning the repository (recommended):\par

\begin{lstlisting}[language=bash]
python main.py
    --type sample
    --config sample_qm9_CC
    --folder "./"
    --comment "Sample molecule through combinatorial complexes"
    --seed 42
\end{lstlisting}

Or, after installing the library using the command \textit{pip install ccsd}, you can run the experiment using Python:\par

\begin{lstlisting}[language=Python]
from ccsd.diffusion import CCSD


params = {
  "type": "sample",
  "config": "sample_qm9_CC",
  "folder": "./",  # assume we are at the source of our project
  "comment": "Sample molecule through combinatorial complexes",
  "seed": 42  # optional
}
diffusion_model = CCSD(**params)  # define the object
diffusion_model.run()  # run the experiment
\end{lstlisting}

\subsection{Example 2: Creating a combinatorial complex dataset}

In this example, we generate a combinatorial complex dataset based on the Community Small graph dataset.\par

After cloning the repository, run the following command:\par

\begin{lstlisting}[language=bash]
python ccsd/data/data_generators.py --dataset community_small --is_cc
\end{lstlisting}

\chapter{Experiments Evaluation \& Results}
\label{Results_eval}

This chapter provides insights into the computational resources employed for our experiments (Section \ref{Computing_Resources}), a quantitative evaluation of our models across various datasets using the previously defined metrics (Section \ref{Evaluation}), and visualizations of the results as part of a more qualitative evaluation (Section \ref{Results}).\par

\section{Computing Resources}
\label{Computing_Resources}

We developed this thesis using PyTorch \cite{paszke2019pytorch} and trained our score network models on a Nvidia RTX 2080 Ti, Nvidia Tesla V100-SMX2-16GB, Nvidia Tesla V100-PCIE-32GB, or Nvidia L4-24GB GPUs. Sampling was performed on the latter three GPUs due to their larger memory capacity. Our code execution utilized OVH Public Cloud (\url{ovh.com/}) and Google Cloud resources (\url{https://console.cloud.google.com/}). We tracked losses using Weights \& Biases (\url{https://wandb.ai/}). All machines and virtual machines featured 8 CPU cores and 16GB of RAM. The IDE employed throughout the thesis was Visual Studio Code (VSCode).

\section{Quantitative Evaluation Results}
\label{Evaluation}

We trained our models, plotted the training curves (refer to Appendix \ref{learning_curves}), and performed sampling on various datasets to evaluate our models.\par

\begin{table}
\tiny
\centering
\begin{tabular}{ llccccccc } 
    \hline
     &  &  \multicolumn{7}{c}{QM9} \\
     \cline{3-9}
     &  &  \multicolumn{7}{c}{Real, $1\leq | V | \leq 9$} \\
    \cline{3-9}
     & Method &  Val. w/o corr. (\%)$\uparrow$  & NSPDK$\downarrow$ & FCD$\downarrow$ & Validity (\%)$\uparrow$ & Uniqueness (\%)$\uparrow$ & Novelty (\%)$\uparrow$ & Time (s)$\downarrow$ \\
    \hline
    \multirow{4}{3em}{Autoreg.} & GraphAF \cite{shi2020graphaf} & 67* & 0.020 & 5.268 & \textbf{100.00}* & 94.51* & 88.83* & $2.52e^{3}$ \\
     & GraphAF+FC & 74.43 & 0.021 & 5.625 & \textbf{100.00} & 88.64 & 86.59 & $2.55e^{3}$ \\
     & GraphDF \cite{luo2021graphdf} & 82.67* & 0.063 & 10.816 & \textbf{100.00}* & 97.62* & 98.10* & $5.35e^{4}$ \\
     & GraphDF+FC & 93.88 & 0.064 & 10.928 & \textbf{100.00} & 98.58 & \textbf{98.54} & $4.91e^{4}$ \\
    \hline
    \multirow{8}{3em}{One shot} & MoFlow \cite{Zang_2020} & 91.36 & 0.017 & 4.467 & \textbf{100.00} & 98.65 & 94.72 & \textbf{4.60} \\
    & EDP-GNN \cite{niu2020permutation} & \underline{47.52} & 0.005 & 2.680 & \textbf{100.00} & 99.25 & 86.58 & $4.40e^{3}$ \\
    & GraphEBM \cite{liu2021graphebm} & \underline{8.22} & 0.030 & 6.143 & \textbf{100.00}* & 97.90* & 97.01* & $3.71e^{1}$ \\
    & GDSS \cite{jo2022scorebased} & 95.72 & 0.003 & 3.096 & \textbf{100.00} & 98.4 & 86.10 & $8.4e^{1}$ \\
    & SGGM+SLD \cite{yang2023scorebased} & \textbf{97.35} & 0.004 & 2.593 & \textbf{100.00} & \textbf{99.41} & 97.49 & ? \\
    & GRAPHARM \cite{kong2023autoregressive} & 90.25 & 0.002 & \textbf{1.22} & ? & 95.62 & 70.39 & $1.52e^{1}$ \\
    \cline{2-9}
    & \textbf{CCSD} (Ours) & 92.74 & \textbf{0.002} & 2.682 & \textbf{100.00} & 98.01 & 77.78 & $5.2e^{3}$\\
    & \textbf{CCSD Base} (Ours) & 92.42 & 0.008 & 4.320 & \textbf{100.00} & 97.13 & 84.26 & $6.5e^{3}$ \\
    \hline
\end{tabular}
\caption{\textbf{Generation results on the QM9 dataset.} Results are taken from the same seed (42). The best results are highlighted in bold. Values denoted by * are taken from the respective original papers. Values denoted by ? are unknown. Other results are taken from Jo et al. \cite{jo2022scorebased} or, for our results and GDSS, have been retrained and calculated by ourselves. Val. w/o corr. denotes the Validity w/o correction metric, and values that do not exceed 50\% are underlined. Results are rounded to 3 or 4 digits.}
\label{tabular:result_mol}
\end{table}

\begin{table}
\tiny
\centering
\begin{tabular}{ llcccccccc } 
    \hline
     &  & \multicolumn{4}{c}{Ego-small} & \multicolumn{4}{c}{Community-small} \\
    \cline{3-10}
     &  & \multicolumn{4}{c}{Real, $4\leq | V | \leq 18$} & \multicolumn{4}{c}{Synthetic, $12\leq | V | \leq 19$} \\
    \cline{3-10}
     &  & Deg.$\downarrow$ & Clus.$\downarrow$ & Orbit$\downarrow$ & Avg.$\downarrow$ & Deg.$\downarrow$ & Clus.$\downarrow$ & Orbit$\downarrow$ & Avg.$\downarrow$ \\
    \hline
    \multirow{4}{3em}{Autoreg.} & DeepGMG \cite{li2018learning} & 0.040 & 0.100 & 0.020 & 0.053 & 0.220 & 0.950 & 0.400 & 0.523 \\
    & GraphRNN \cite{you2018graphrnn} & 0.090 & 0.220 & 0.003 & 0.104 & 0.080 & 0.120 & 0.040 & 0.080 \\
    & GraphAF \cite{shi2020graphaf} & 0.03 & 0.11 & \textbf{0.001} & 0.047 & 0.18 & 0.20 & 0.02 & 0.133 \\
    & GraphDF \cite{luo2021graphdf} & 0.04 & 0.13 & 0.01 & 0.060 & 0.06 & 0.12 & 0.03 & 0.070 \\
    \hline
    \multirow{8}{3em}{One shot} & GraphVAE \cite{simonovsky2018graphvae} & 0.130 & 0.170 & 0.050 & 0.117 & 0.350 & 0.980 & 0.540 & 0.623 \\
    & GNF$\dagger$ \cite{NEURIPS2019_1e44fdf9} & 0.030 & 0.100 & \textbf{0.001} & 0.044 & 0.200 & 0.200 & 0.110 & 0.170 \\
    & EDP-GNN \cite{niu2020permutation} & 0.052 & 0.093 & 0.007 & 0.051 & 0.053 & 0.144 & 0.026 & 0.074 \\
    & GDSS \cite{jo2022scorebased} & 0.021 & 0.024 & 0.007 & 0.017 & 0.077 & 0.064 & 0.013 & 0.051 \\
    & SGGM+SLD \cite{yang2023scorebased} & \textbf{0.014} & 0.019 & 0.007 & \textbf{0.013} & 0.035 & 0.071 & 0.006 & \textbf{0.037} \\
    & GRAPHARM \cite{kong2023autoregressive} & 0.019 & \textbf{0.017} & 0.010 & 0.015 & \textbf{0.034} & 0.082 & \textbf{0.004} & 0.04 \\
    \cline{2-10}
    & \textbf{CCSD} (Ours) & 0.030 & 0.023 & 0.018 & 0.024 & 0.114 & \textbf{0.063} & 0.065 & 0.081 \\
    & \textbf{CCSD Base} (Ours) & ? & ? & ? & ? & 0.053 & \textbf{0.052} & 0.040 & 0.048 \\
    \hline
\end{tabular}

\vspace{1cm}

\begin{tabular}{ llcccccccc } 
    \hline
     &  & \multicolumn{4}{c}{Enzymes-small} & \multicolumn{4}{c}{Grid-small} \\
    \cline{3-10}
     &  & \multicolumn{4}{c}{Real, $4\leq | V | \leq 12$} & \multicolumn{4}{c}{Synthetic, $4\leq | V | \leq 49$} \\
    \cline{3-10}
     &  & Deg.$\downarrow$ & Clus.$\downarrow$ & Orbit$\downarrow$ & Avg.$\downarrow$ & Deg.$\downarrow$ & Clus.$\downarrow$ & Orbit$\downarrow$ & Avg.$\downarrow$ \\
    \hline
    \multirow{3}{3em}{One shot} & GDSS & 0.133 & \textbf{0.147} & \textbf{0.008} & \textbf{0.096} & \textbf{0.013} & \textbf{0.041} & \textbf{0.008} & \textbf{0.021} \\
    \cline{2-10}
    & \textbf{CCSD} (Ours) & 0.234 & 0.155 & 0.013 &  0.134 & ? & ? & ? & ? \\
    & \textbf{CCSD Base} (Ours) & \textbf{0.129} & 0.329 & 0.030 & 0.163 & ? & ? & ? & ? \\
    \hline
\end{tabular}

\caption{\textbf{Generation results on the generic graph datasets.} Results are taken from the same seed (42). We report the MMD distances between the test datasets and generated graphs. The best results are highlighted in bold (the smaller the better). Values denoted by ? are unknown. In the case of our models, it means that we ran out of RAM during the sampling procedure. The results are taken from Jo et al. \cite{jo2022scorebased} or, for our results and GDSS, have been retrained and calculated by ourselves. $\dagger$ indicates unreproducible results. Results are rounded to 3 or 4 digits.}
\label{tabular:result_graph}
\end{table}

\begin{table}
\scriptsize
\centering
\begin{tabular}{ llcc } 
    \hline
     &  & Ego-small & Community-small \\
    \cline{3-4}
     &  & Real, $4\leq | V | \leq 18$ & Synthetic, $12\leq | V | \leq 19$\\
    \cline{3-4}
     &  & Rank-2$\downarrow$ & Hodge.$\downarrow$ \\
    \hline
    \multirow{3}{3em}{One shot} & GDSS & ? & ? \\
    \cline{2-4}
    & \textbf{CCSD} (Ours) & \textbf{0.314} & \textbf{0.668} \\
    & \textbf{CCSD Base} (Ours) & ? & 0.714 \\
    \hline
\end{tabular}

\vspace{1cm}

\begin{tabular}{ llcc } 
    \hline
     &  & \multicolumn{2}{c}{Enzymes-small} \\
    \cline{3-4}
     &  & \multicolumn{2}{c}{Real, $4\leq | V | \leq 12$}\\
    \cline{3-4}
     &  & Rank-2$\downarrow$ & Hodge.$\downarrow$\\
    \hline
    \multirow{3}{3em}{One shot} & GDSS & \textbf{0.017} & \textbf{0.558} \\
    \cline{2-4}
    & \textbf{CCSD} (Ours) & 0.361 & 0.905 \\
    & \textbf{CCSD Base} (Ours) & 0.345 & 0.783 \\
    \hline
\end{tabular}

\vspace{1cm}

\begin{tabular}{ llccccc } 
    \hline
     &  & \multicolumn{5}{c}{QM9} \\
    \cline{3-7}
     &  & \multicolumn{5}{c}{Real, $1\leq | V | \leq 9$}\\
    \cline{3-7}
     &  & Rank-0$\downarrow$ & Rank-1$\downarrow$ & Rank-2$\downarrow$ & Avg.$\downarrow$ & Hodge.$\downarrow$ \\
    \hline
    \multirow{3}{3em}{One shot} & GDSS & 0.004 & \textbf{0.002} & 1.049 & 0.352 & ? \\
    \cline{2-7}
    & \textbf{CCSD} (Ours) & 0.0004 & 0.0023 & 1.3912 & 0.4646 & ? \\
    & \textbf{CCSD Base} (Ours) & \textbf{0.0002} & 0.0076 & \textbf{0.0129} & \textbf{0.0069} & ? \\
    \hline
\end{tabular}

\caption{\textbf{Generation higher-order metrics results on all datasets.} Results are taken from the same seed (42). We report the MMD distances between the test datasets and generated objects. The best results are highlighted in bold (the smaller the better). Values denoted by ? are unknown. In the case of our models, it means that we ran out of RAM during the sampling procedure. All the metrics have been calculated by ourselves. - indicates that it is not relevant for this dataset as there are no features attached for the corresponding dimension. Results are rounded to 3 or 4 digits.}
\label{tabular:result_cc}
\end{table}

In the evaluation of our models on the QM9 dataset (Table \ref{tabular:result_mol}), CCSD demonstrates robust performance across various metrics, aligning closely with state-of-the-art methods. Notably, our method excels in the NSPDK MMD measure, indicating that it effectively generates molecular structures that closely resemble the original distribution. Moreover, our framework exhibits high Uniqueness and Novelty scores, suggesting its capability to also produce novel molecular graphs while maintaining diversity and avoiding excessive similarity between generated molecules.\par

For graph generation tasks (Table \ref{tabular:result_graph}), both CCSD and CCSD Base consistently yield promising results in both real and synthetic datasets. CCSD stands out across all datasets, achieving impressive metric scores and rivalling other methods. Specifically, for the Community small dataset, our two models achieve state-of-the-art performance on the Clustering metric, beating the two latest graph generation models recently published, SGGM+SLD and GRAPHARM, which dominate the leaderboard. These results suggest that our approaches effectively capture the inherent structural characteristics of graphs.\par

In the higher-order metrics assessment (Table \ref{tabular:result_cc}), our approach consistently performs well compared to GDSS. For the GDSS method, we generated graphs and molecules and then lifted them into combinatorial complexes to calculate these higher-order metrics for graph generation methods. This performance extends to the QM9 dataset, where CCSD Base attains exceptionally low values across all metrics, including rank-0, rank-1, rank-2, average, and the Hodge Laplacian spectrum metric. These results underscore CCSD's ability to generate objects that closely align with the ground truth distribution.\par

To measure the quality of our samples, we also employed the Tanimoto similarity, based on Morgan fingerprints obtained using the RDKit library \cite{Landrum2016RDKit2016_09_4} with 1024 bits and a radius of 2 as suggested in Jo et al. \cite{jo2022scorebased} (see Subsection \ref{Tanimoto} for a more detailed definition). We calculated the average Tanimoto similarity of the generated molecules compared to all the training molecules. As depicted in Table \ref{tabular:tanimoto}, CCSD has a higher similarity, thus demonstrating its capability to generate molecules that closely match the structural characteristics of training molecules, while other baseline models including GDSS tend to generate molecules that could deviate from the training distribution.\par

\begin{table}
\centering
\begin{tabular}{ cc }
    \multicolumn{2}{c}{Tanimoto Similarity $\uparrow$}\\
    \cline{2-2}
     \textbf{CCSD (Ours)} & \textbf{0.564} \\
     GDSS & 0.502\\
\end{tabular}
\caption{\textbf{Tanimoto similarity on the QM9 dataset.} We compared the average Tanimoto similarity across the generated molecules by comparing their fingerprints to the entire training dataset. The best results are highlighted in bold (the larger the better). Results are rounded to 3 digits.}
\label{tabular:tanimoto}
\end{table}

Overall, our framework excels in capturing target distributions, with noteworthy performance in the NSPDK MMD metric for molecule generation, surpassing other all the existing approaches. More importantly, our framework performs well even despite its generality, as it is capable of generating not only graphs and molecules but also higher-order topological structures. This broader scope sets CCSD apart from competing approaches, positioning it as a versatile and potent tool for diverse generative AI applications.\par

\section{Qualitative Results}
\label{Results}

In this section, we present visualizations of the generated combinatorial complexes for both generic graph generation tasks and molecule generation tasks. These visualizations include the underlying graphs of the generated combinatorial complexes.\par

\subsection{Molecule Generation}

In Figure \ref{fig:qm9_small_sample}, we showcase both the molecules generated by CCSD and a selection of original molecules from the QM9 dataset. Figure \ref{fig:cc_viz} offers a unique perspective by representing these molecules as hypergraphs, thus illustrating the rings depicted as rank-2 cells. A resemblance is evident between the generated molecules and the original distribution.\par

\begin{figure}[H]
\centering
\begin{minipage}{0.5\textwidth}
  \centering
  \includegraphics[width=0.9\linewidth]{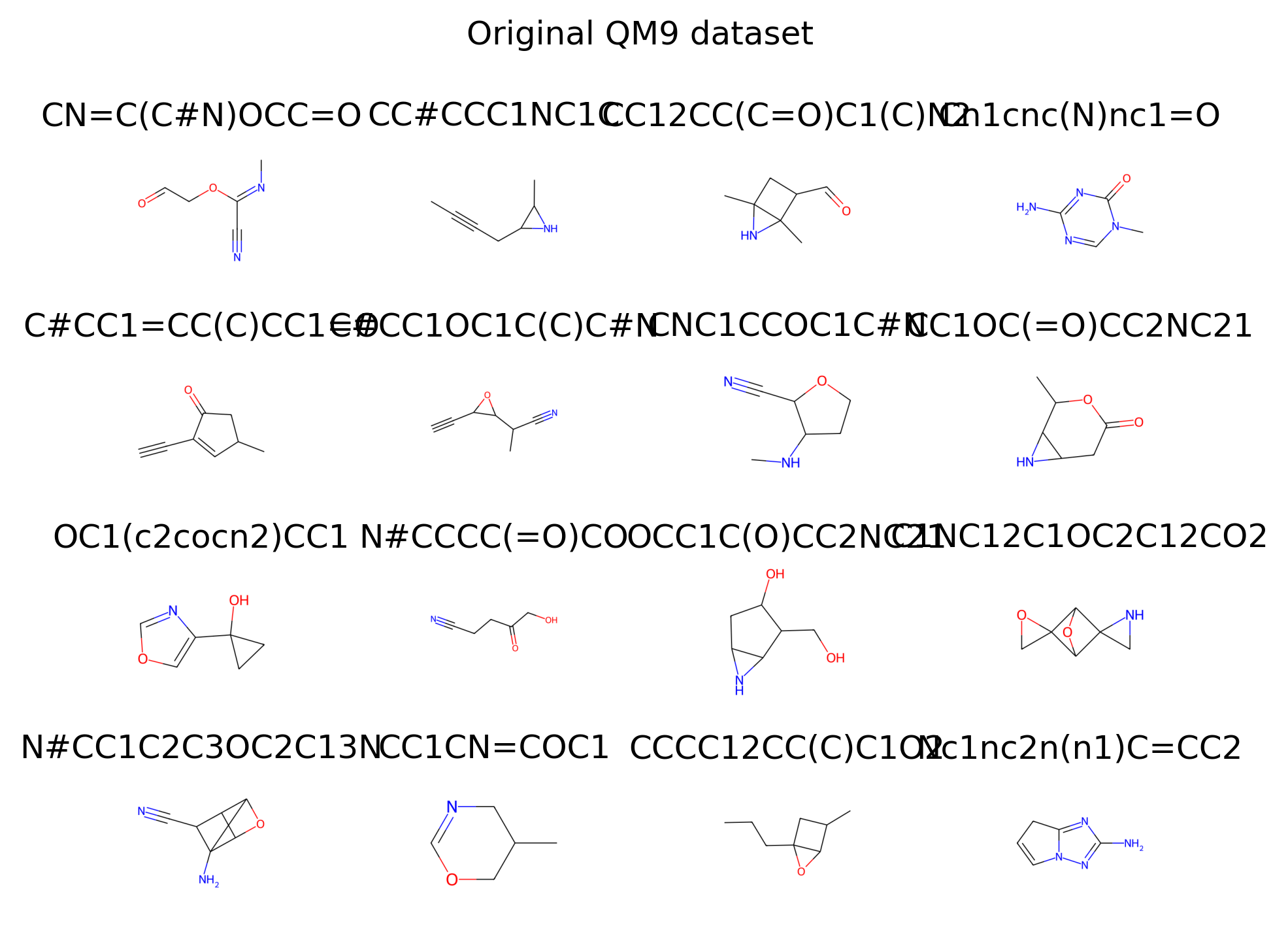}
  \captionof{figure}{Original QM9 molecular dataset.}
  \label{fig:qm9_small_original}
\end{minipage}%
\begin{minipage}{0.5\textwidth}
  \centering
  \includegraphics[width=0.9\linewidth]{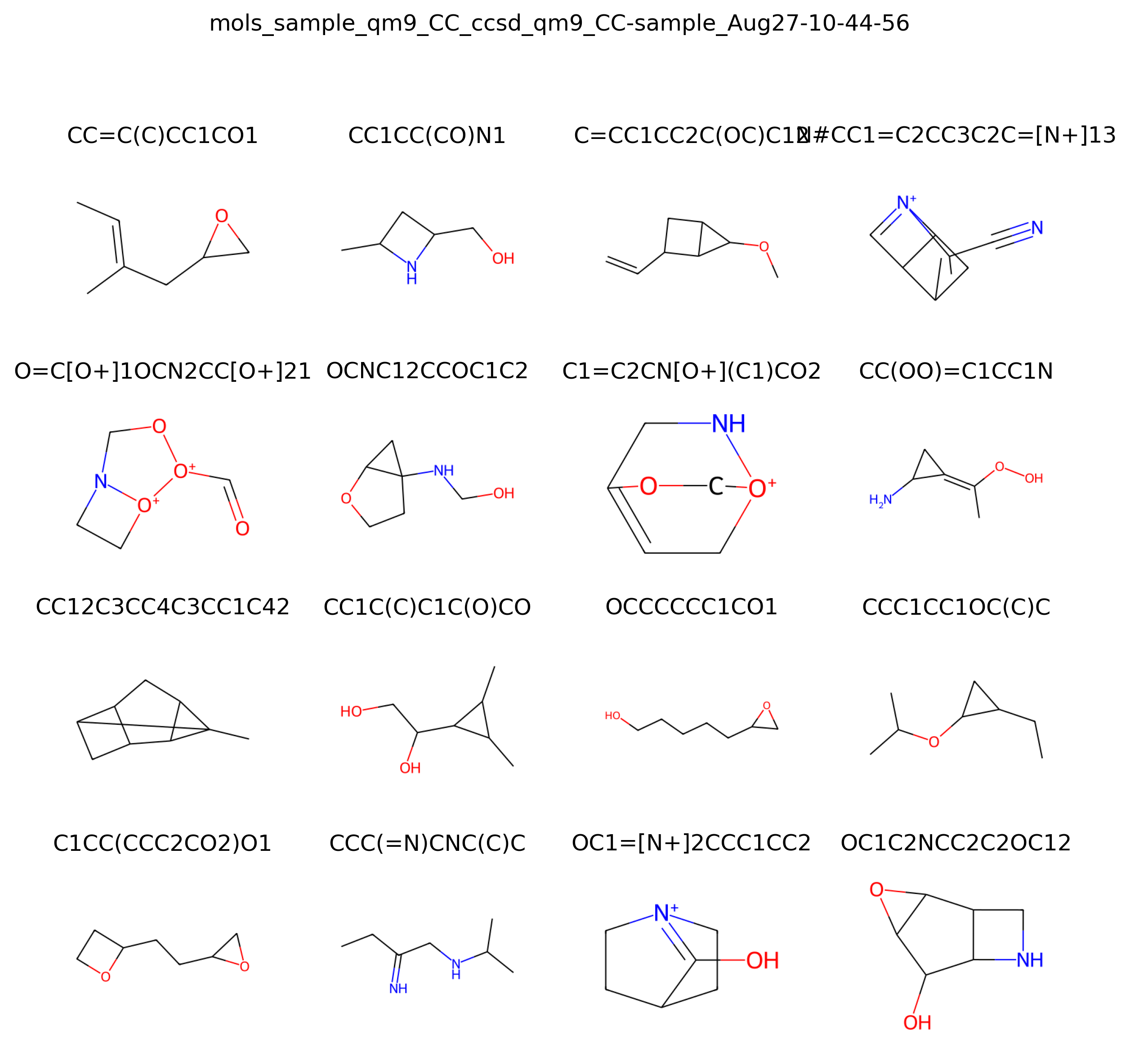}
  \captionof{figure}{Generated QM9 molecules using \textbf{CCSD}.}
  \label{fig:qm9_small_ccsd}
\end{minipage}
\caption{Visualization of the combinatorial complexes (represented as graphs) from the QM9 dataset and the generated objects of CCSD.}
\label{fig:qm9_small_sample}
\end{figure}

\begin{figure}[H]
\centering
\includegraphics[width=0.8\linewidth]{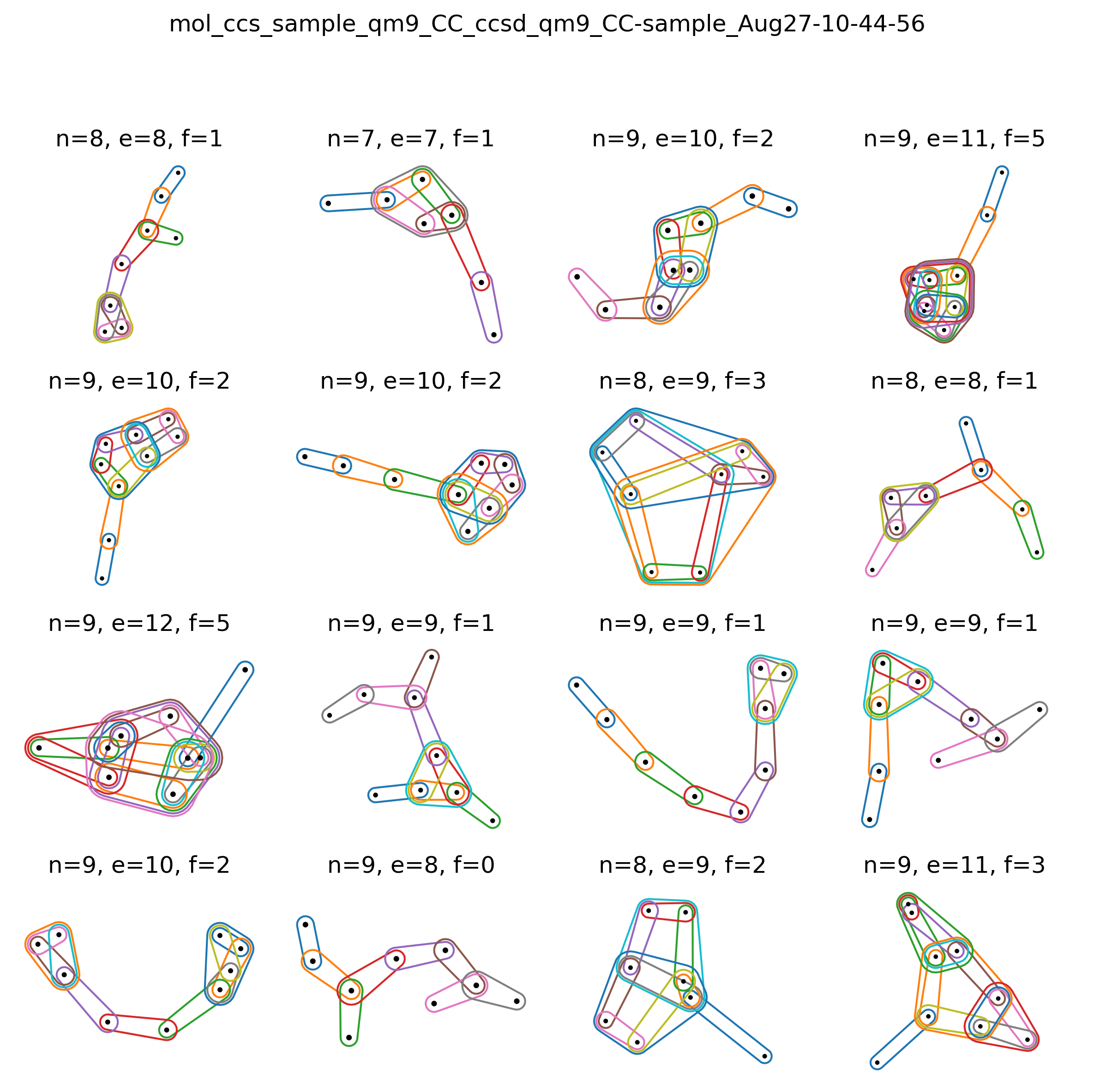}
\captionof{figure}{Visualization of the same combinatorial complexes generated via CCSD in Figure \ref{fig:qm9_small_sample}, represented as hypergraphs to visualize the rank-2 cells. The plot has been realised using the HyperNetX Python library \cite{doecode_22160}.}
\label{fig:cc_viz}
\end{figure}

\subsection{Generic Graph Generation}

Moving on to our generic graph generation tasks, we provide visualizations of the underlying graphs, drawn from both the training datasets and the generated combinatorial complexes produced by CCSD. The displayed graphs are randomly chosen from their respective datasets, accompanied by essential information, including the number of nodes ($n$, representing rank-0 cells), the number of edges ($e$, signifying rank-1 cells), and the count of faces ($f$, corresponding to rank-2 cells) for each combinatorial complex.\par

\begin{figure}[H]
\centering
\begin{minipage}{0.5\textwidth}
  \centering
  \includegraphics[width=0.9\linewidth]{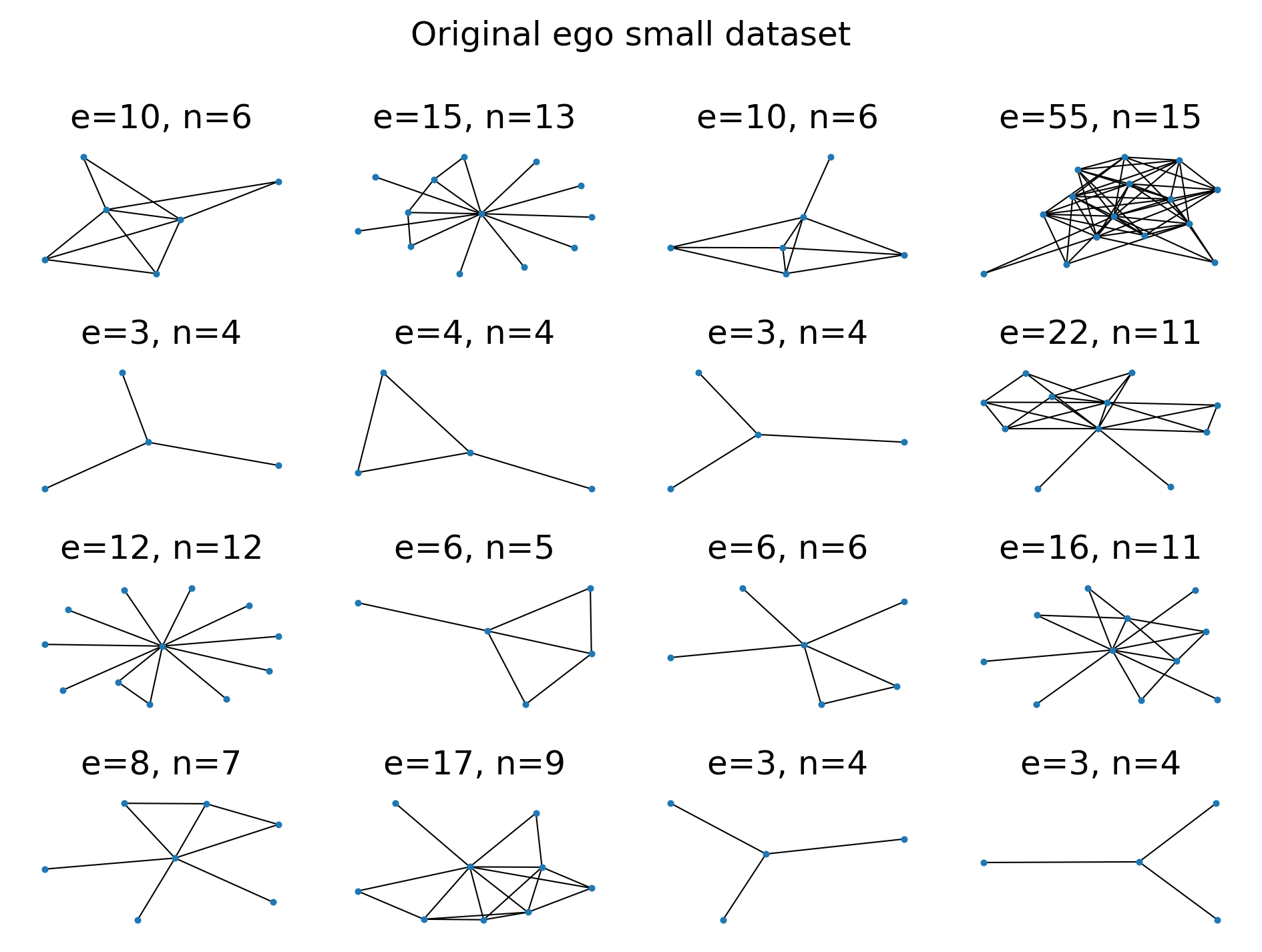}
  \captionof{figure}{Original Ego small graph dataset.}
  \label{fig:ego_small_original}
\end{minipage}%
\begin{minipage}{0.5\textwidth}
  \centering
  \includegraphics[width=0.9\linewidth]{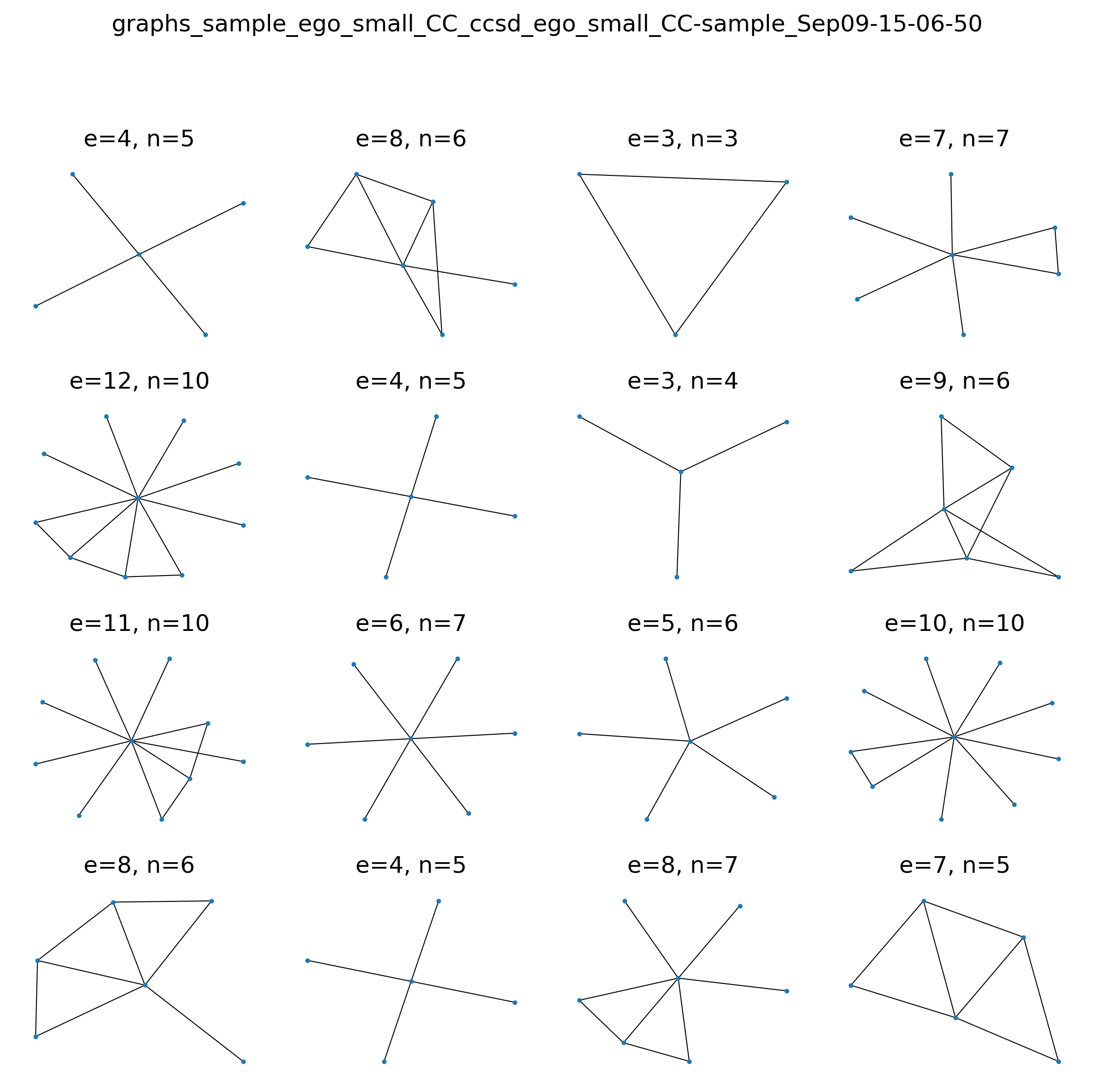}
  \captionof{figure}{Generated Ego small objects using \textbf{CCSD}.}
  \label{fig:ego_small_ccsd}
\end{minipage}
\caption{Visualization of the combinatorial complexes (represented as graphs) from the Ego small dataset and the generated objects of CCSD.}
\label{fig:ego_small_sample}
\end{figure}

\begin{figure}[H]
\centering
\begin{minipage}{0.5\textwidth}
  \centering
  \includegraphics[width=0.9\linewidth]{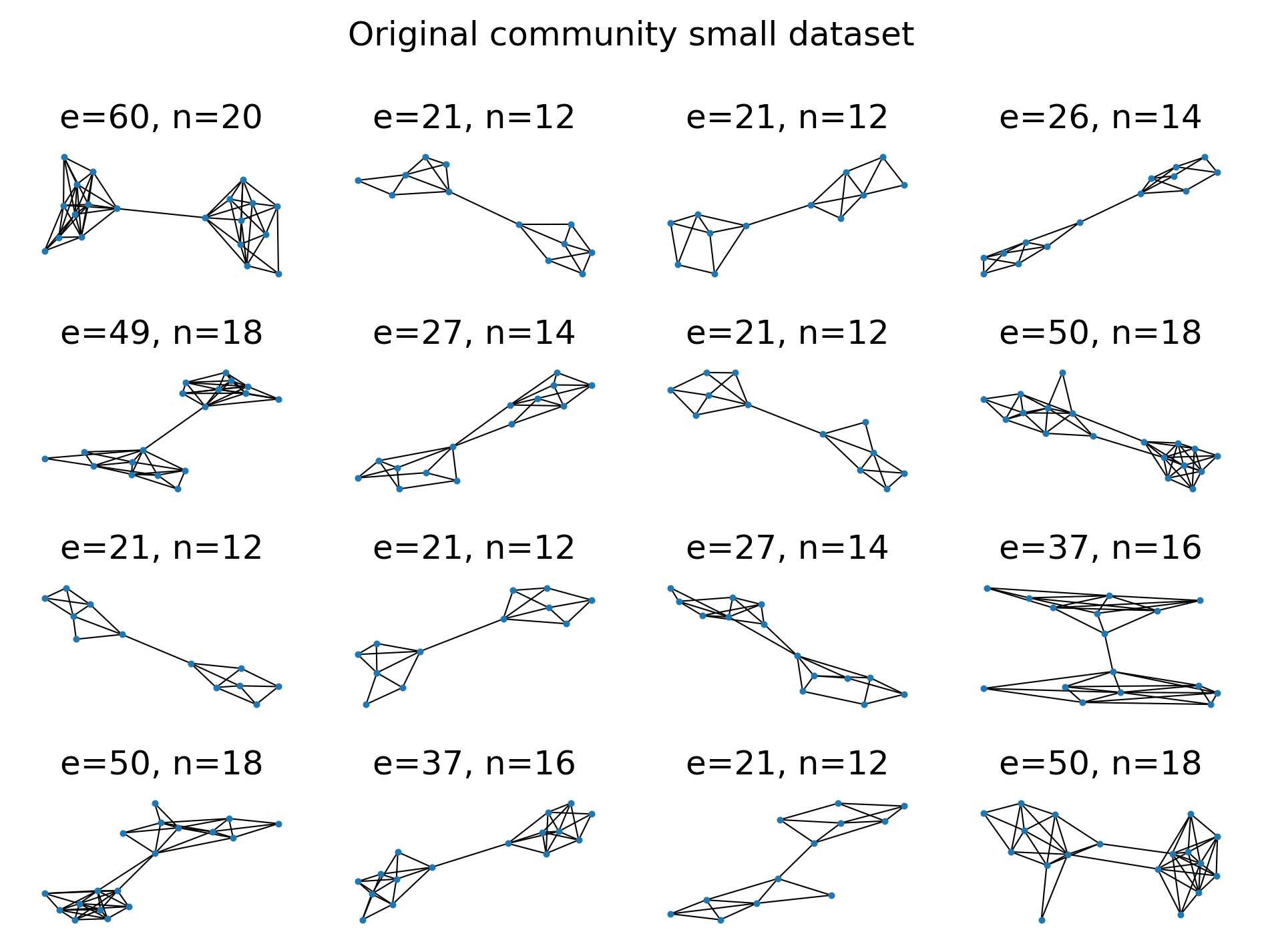}
  \captionof{figure}{Original Community small graph dataset.}
  \label{fig:community_small_original}
\end{minipage}%
\begin{minipage}{0.5\textwidth}
  \centering
  \includegraphics[width=0.9\linewidth]{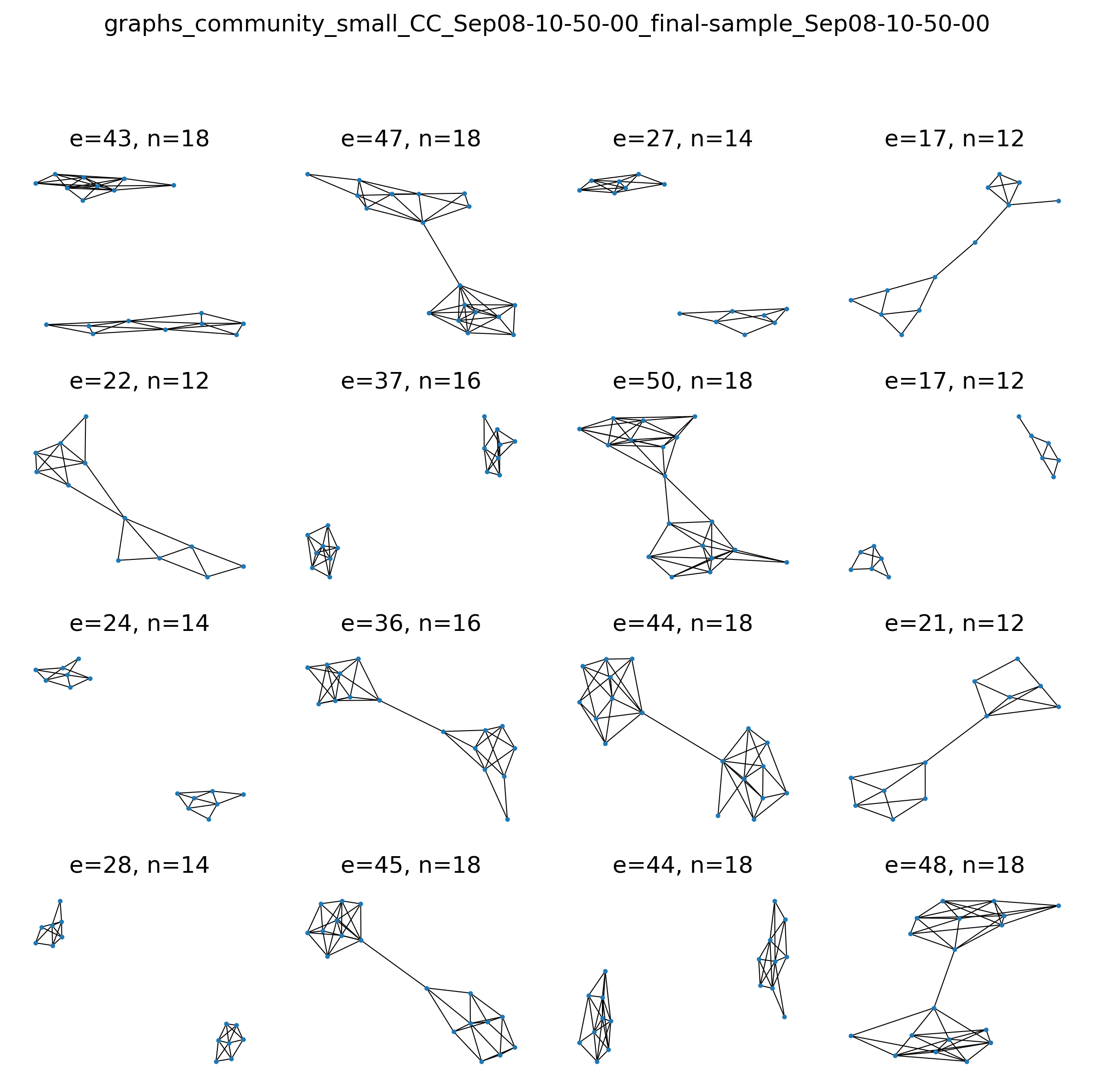}
  \captionof{figure}{Generated Community small objects using CCSD.}
  \label{fig:community_small_ccsd}
\end{minipage}
\caption{Visualization of the combinatorial complexes (represented as graphs) from the Community small dataset and the generated objects of CCSD.}
\label{fig:community_small_sample}
\end{figure}

\begin{figure}[H]
\centering
\begin{minipage}{0.5\textwidth}
  \centering
  \includegraphics[width=0.9\linewidth]{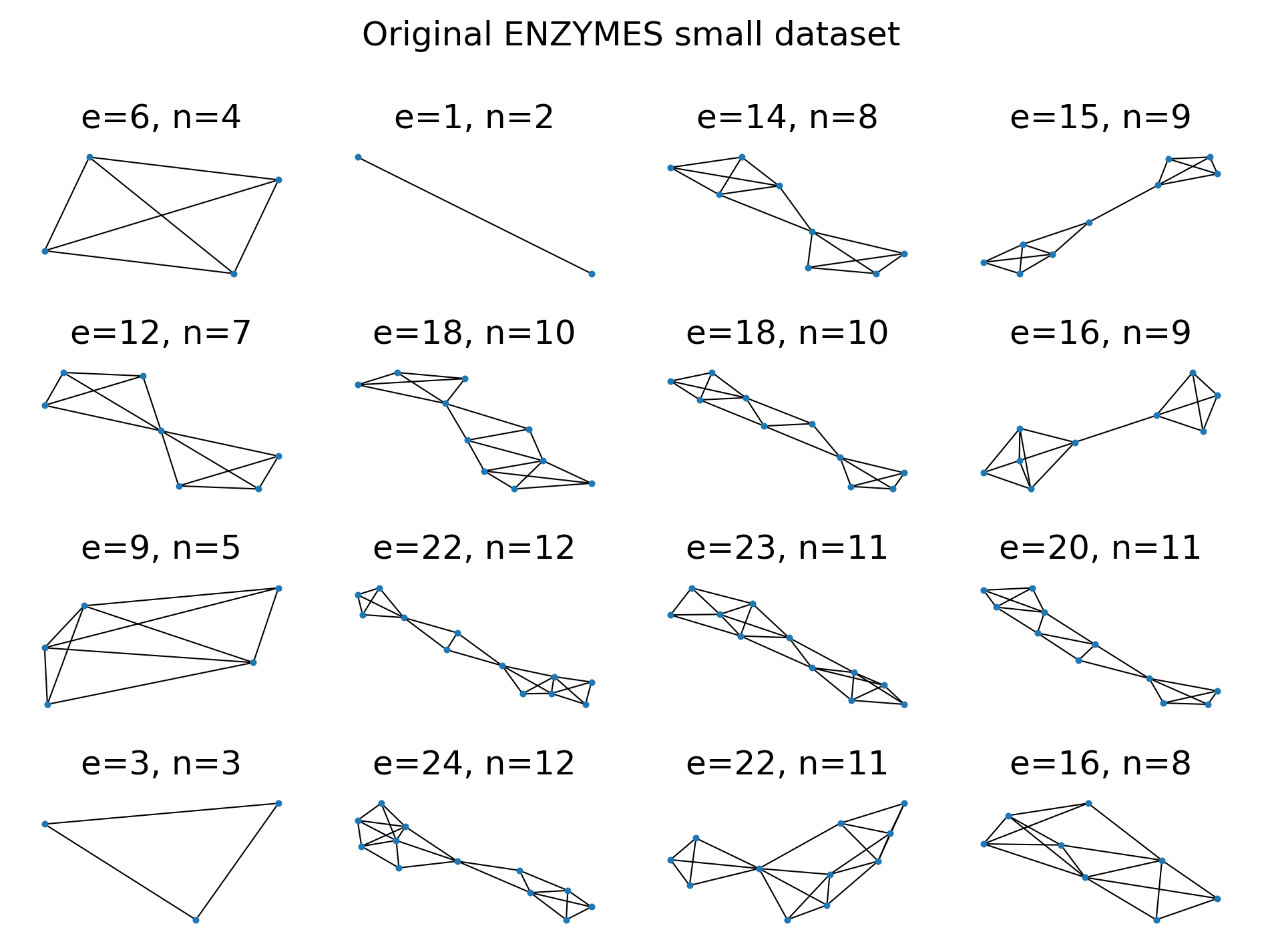}
  \captionof{figure}{Original Enzymes small graph dataset.}
  \label{fig:enzymes_small_original}
\end{minipage}%
\begin{minipage}{0.5\textwidth}
  \centering
  \includegraphics[width=0.9\linewidth]{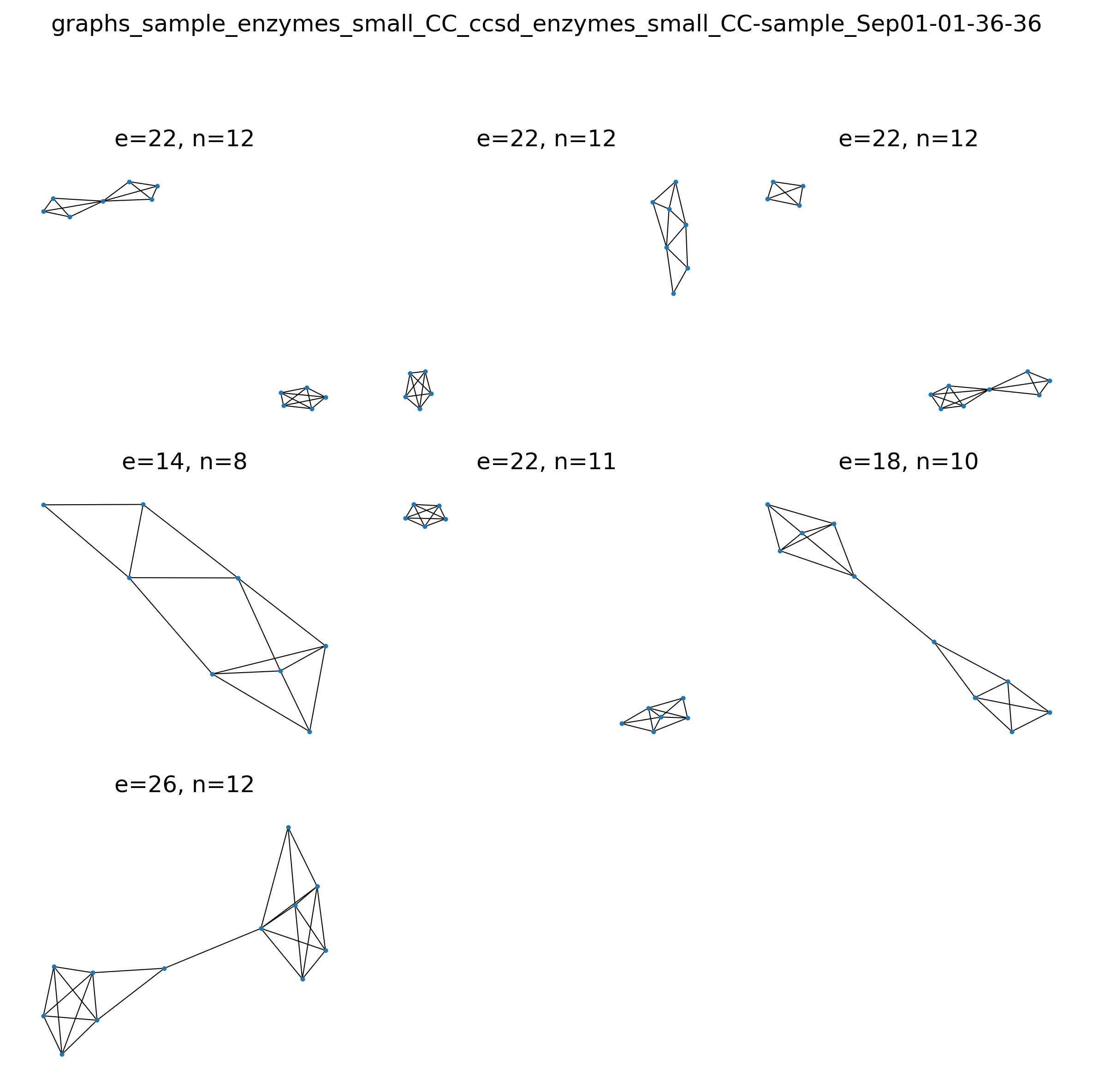}
  \captionof{figure}{Generated Enzymes small objects using CCSD.}
  \label{fig:enzymes_small_ccsd}
\end{minipage}
\caption{Visualization of the combinatorial complexes (represented as graphs) from the Enzymes small dataset and the generated objects of CCSD.}
\label{fig:enzymes_small_sample}
\end{figure}

Likewise, it is clear that the graphs extracted from the generated combinatorial complexes are similar to the original graph distribution.\par


\begin{figure}[H]
\centering
\includegraphics[width=0.6\linewidth]{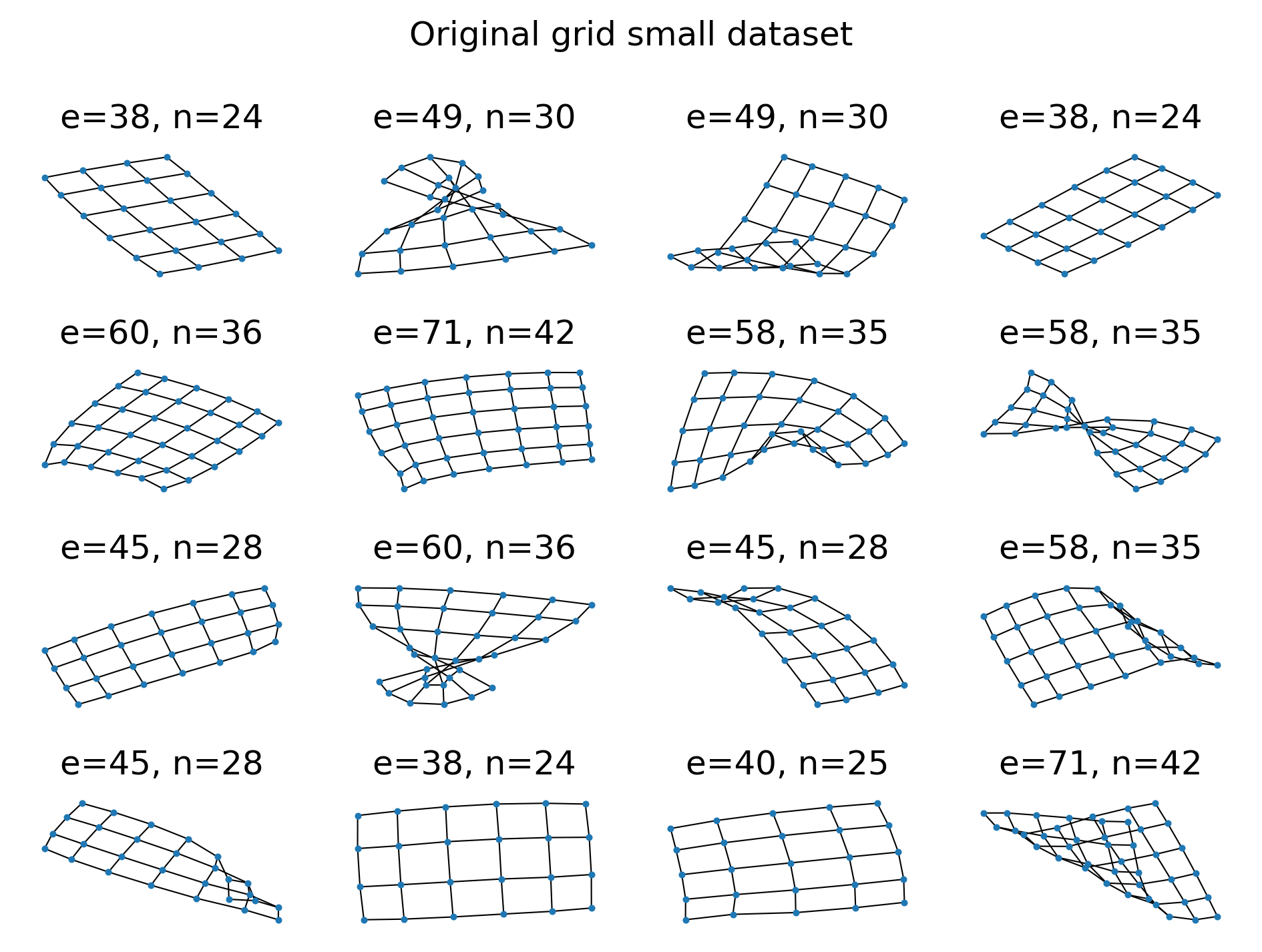}
\captionof{figure}{Original Grid small graph dataset.}
\label{fig:grid_small_original}
\end{figure}

\chapter{Conclusion}
\label{Conclusion_chap}

In this concluding chapter, we address several key aspects of our work. We begin by providing a concise summary of our achievements in Section \ref{achievements}. Then, we present the ethical considerations that have guided our research in Section \ref{Ethical}. Following that, we recognize and discuss the limitations of our work in Section \ref{limitation}. Lastly, we offer valuable insights and directions for future research in Section \ref{future}.\par

\section{Summary of Achievements}
\label{achievements}

Our work, despite its few limitations, stands as a significant contribution to the fields of generative AI and topological deep learning. The key achievements of this endeavour are the following. Firstly, we have successfully designed and implemented CCSD, a score-based diffusion model tailored for generating combinatorial complexes. This innovative approach demonstrates promising outcomes across diverse datasets, encompassing tasks such as molecule and graph generation. Additionally, we have introduced novel objects, layers, and score neural network models that extend the horizons of generative AI by enabling the generation of more abstract structures than previously explored, thus pioneering Generative Topological Deep Learning. Furthermore, we have devised new metrics specifically designed to evaluate the quality of generated combinatorial complexes. Lastly, we have consolidated these accomplishments into a robust Python library, poised to catalyze advancements in the field.\par

\section{Ethical Considerations}
\label{Ethical}

In terms of ethical considerations, our work is conducted in alignment with established ethical guidelines and principles. It centres on the development of a diffusion model for generating combinatorial complexes and poses no direct ethical concerns. Our research solely aims at contributing to the advancement of scientific knowledge and does not involve human or animal subjects. Furthermore, the use of datasets does not raise any data-sharing issues. We have diligently credited the authors of external arguments and sources of inspiration, and we have acknowledged code that is not our own or has been inspired by others' work. Lastly, a strong emphasis has been placed on clarity, both in the thesis and the code, to facilitate reproducibility.\par

\section{Limitations}
\label{limitation}

Our work harnesses diffusion models, a potent tool that also comes with its set of limitations \cite{luo_understanding_2022}. Firstly, one cannot dismiss the fact that the assumption of iteratively denoising random noise, central to diffusion models, probably does not align with how humans model their environment or analyze it. Therefore, the approach of generating samples from noise and then denoising iteratively might not fully encapsulate the intricate structures and complexities inherent in real-world data.\par

In the context of Variational Diffusion Models, the interpretability of latent representations is also a notable limitation. Unlike Variational Autoencoders, where the encoder is optimized to yield structured latent spaces with the KL Divergence term, VDM employs predetermined linear Gaussian encoders at each time step. Consequently, intermediate latent representations in VDM remain as noisy variants of the original input, impeding the model's capacity to capture meaningful and interpretable latent structures. More specific to our score-based approach, interpreting and visualizing score functions related to graphs or other complex objects pose considerable challenges \cite{NEURIPS2019_f3a4ff48, you2018graphrnn}, let alone partial score functions. Another constraint inherent to MHVAE and, consequently, VDM, pertains to the latent space dimensionality. By confining latent representations to the original input's dimensionality, the model's potential to grasp higher-level abstractions, learn meaningful and compressed representation, and generate diverse and expressive samples may be compromised.\par

From a computational perspective, sampling from a diffusion model is expensive, thus diminishing the practicality and scalability of the proposed approach. Specifically, for graphs data, as the number of nodes grows, the cost of generating a new graph through diffusion increases dramatically as highlighted in some papers such as Vignac et al. \cite{vignac_digress_2023}. More specific to our work, the large search space associated with the higher-order incidence matrices of combinatorial complexes exacerbates this already-known limitation.\par

Speaking of applications, most of the existing graph datasets primarily focus on 2D data and/or artificial data, potentially inadequately representing meaningful tasks such as learning from molecular datasets. This dearth of representative graph generation datasets hinders the evaluation and application of diffusion models in this domain \cite{zhang_survey_2023}. Moreover, to the best of our knowledge, there is no natural combinatorial complexes dataset available, making our approach more difficult to benchmark.\par

Lastly, the intricate, irregular structures of graphs, characterized by varying numbers of nodes and edges and heterogeneous properties, present hurdles in designing effective diffusion models capable of capturing their dynamics \cite{veličković2018graph}. Accommodating the idiosyncratic structure of graphs within the diffusion process calls for innovative and complex techniques.\par

\section{Future Work}
\label{future}

Looking to the future, several avenues for further research emerge. Firstly, the exploration of new application domains that can leverage our combinatorial complex modelling framework, such as mesh generation, could lead to promising results. This includes exploring applications based on what we defined along our framework: the conditional sampling of CCs (Subsection \ref{conditional_sampling}), imputation of CCs (Subsection \ref{imputation}), and the generation with alternative training objectives through penalization/regularization (Subsection \ref{penalization}). Extensive research efforts could also be dedicated to conceiving new score network models, accompanied by rigorous mathematical investigations to ensure their permutation and equivariant properties. Furthermore, the development of additional evaluation metrics tailored for generative topological deep learning could substantially enhance the field's progress. From an implementation perspective, optimizing the pipeline to address the sparsity of incidence matrices could also greatly increase the applicability of our approach as it would increase scalability and potentially result in accelerated computations and reduced RAM utilization.\par

\printbibliography[heading=bibintoc,title={Bibliography}]

\appendix

\chapter{Appendix}


\section{Learning curves}
\label{learning_curves}

This section includes the learning curves we obtained for each dataset, presenting the training and testing losses for each partial score function. As a reminder, the loss is measured as the $L2$ distance between the predicted score and the actual score.

\subsection{QM9}

\begin{figure}[H]
\centering
\begin{minipage}{0.33\textwidth}
  \centering
  \includegraphics[width=1\linewidth]{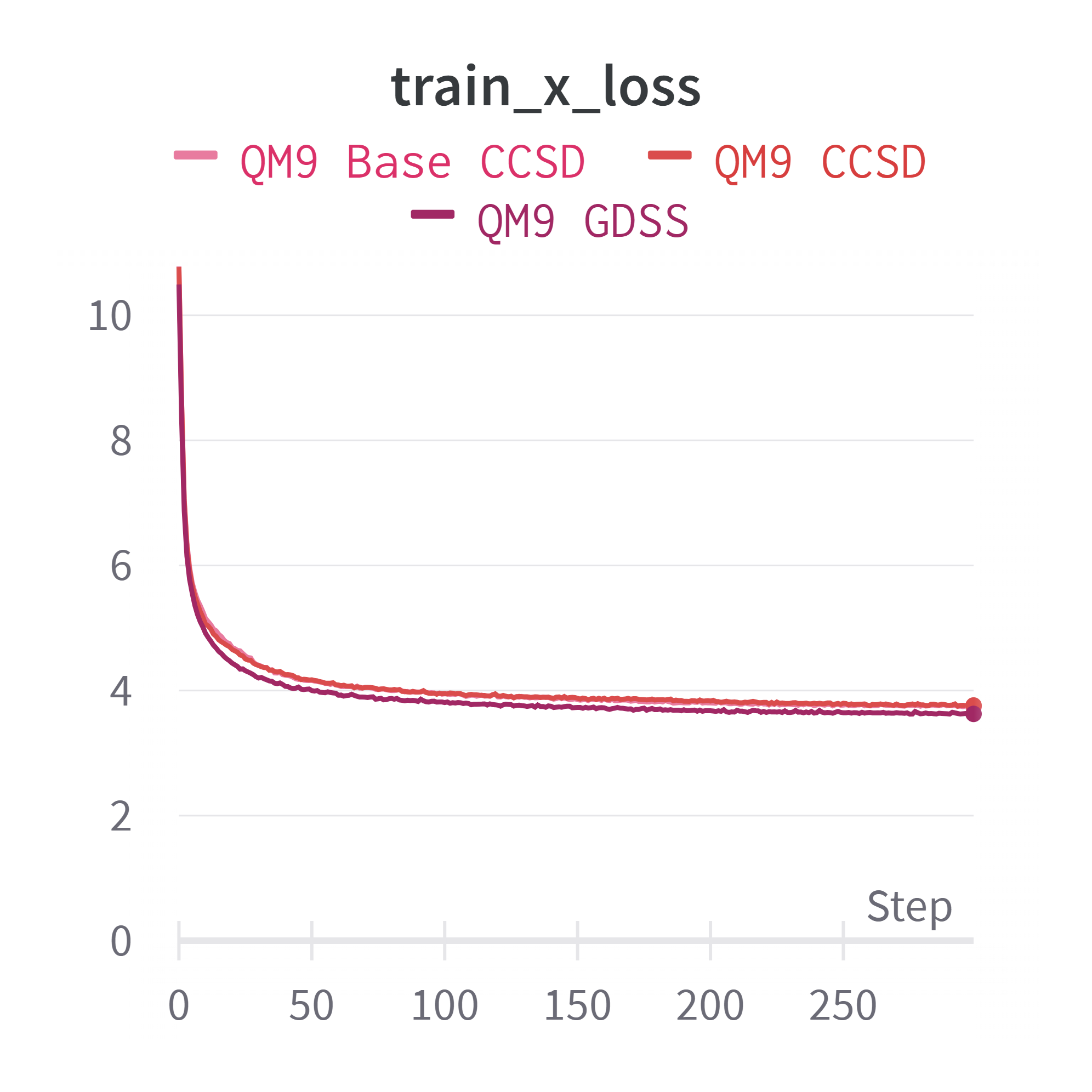}
\end{minipage}%
\begin{minipage}{0.33\textwidth}
  \centering
  \includegraphics[width=1\linewidth]{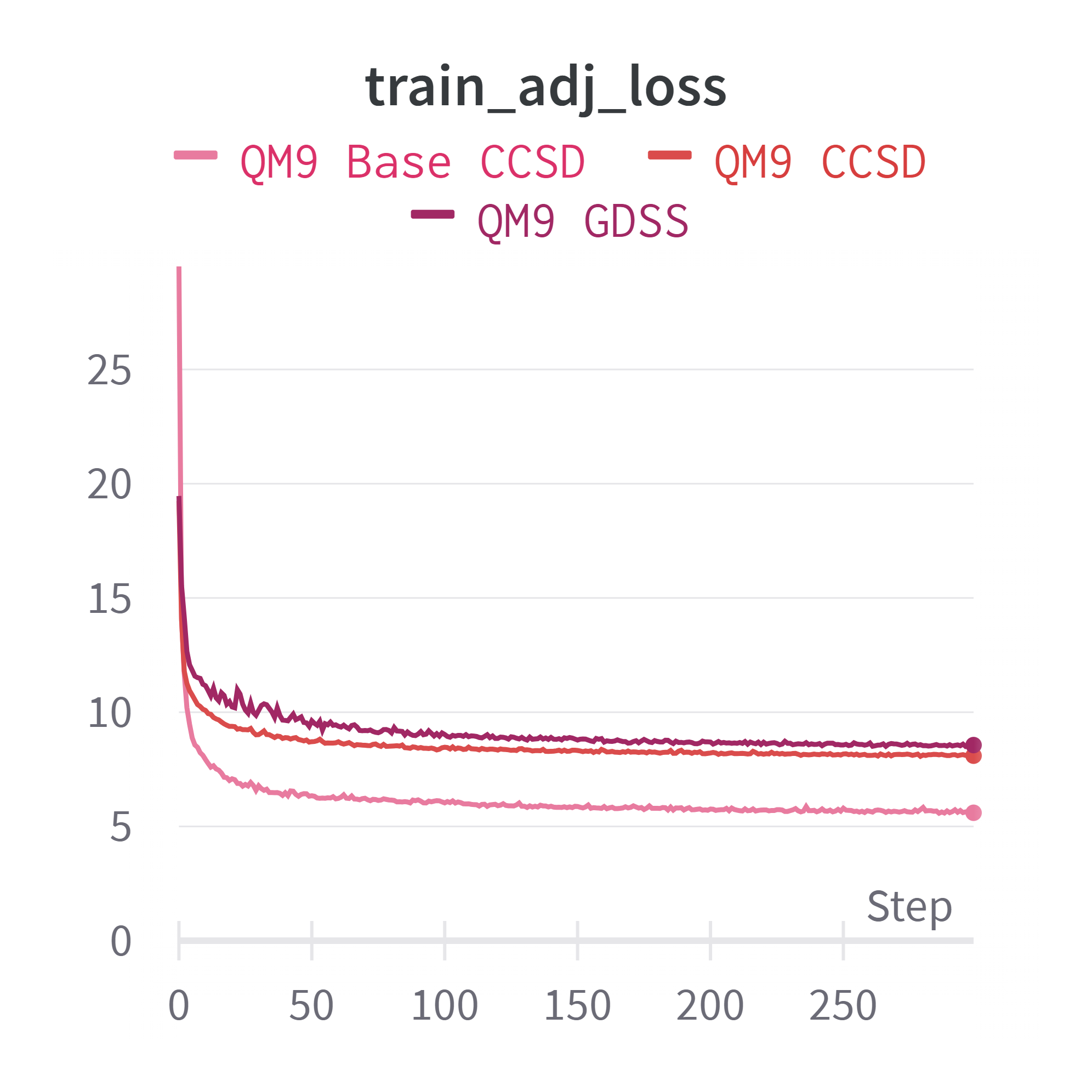}
\end{minipage}%
\begin{minipage}{0.33\textwidth}
  \centering
  \includegraphics[width=1\linewidth]{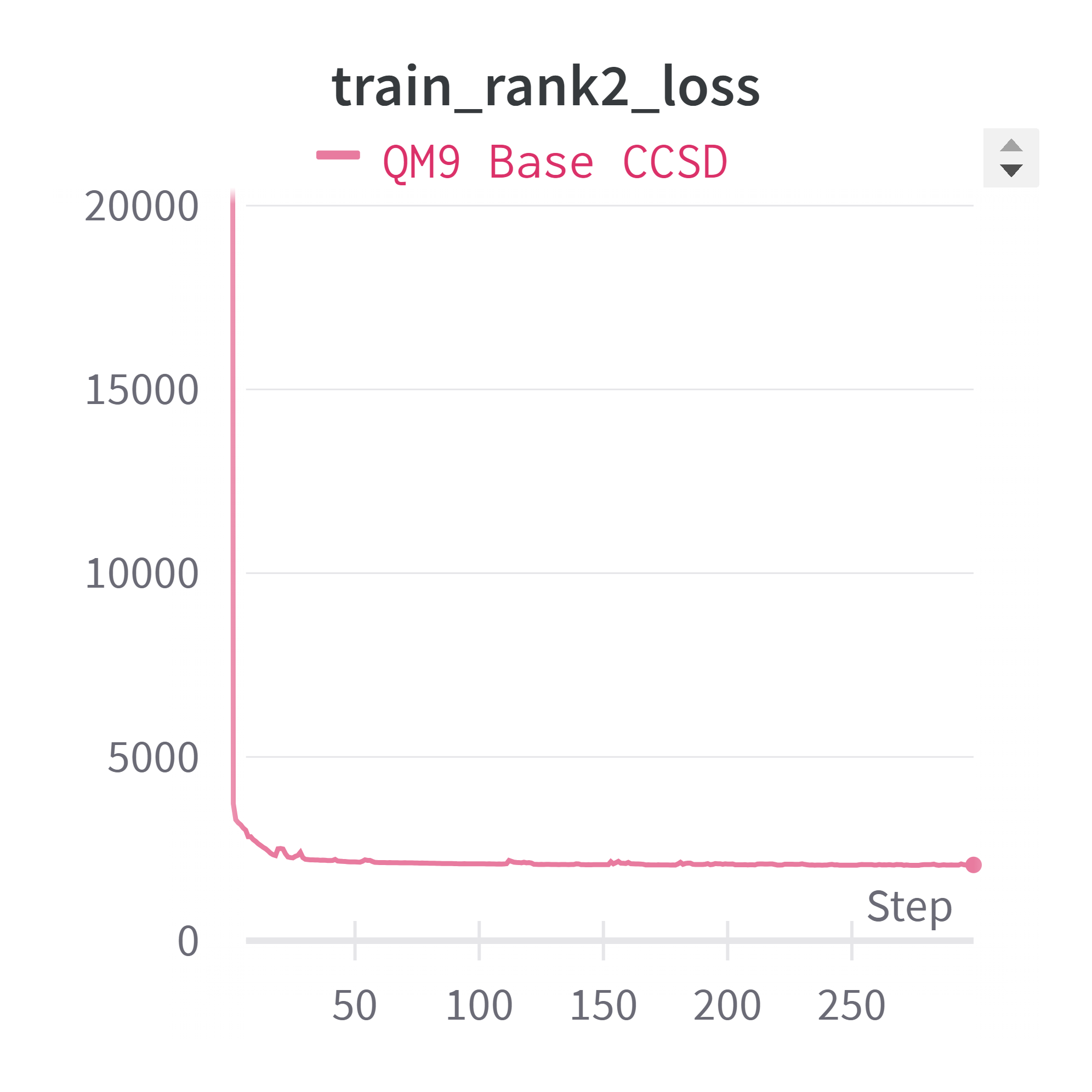}
\end{minipage}
\captionof{figure}{Train losses QM9.}
\label{fig:train_qm9}
\end{figure}

\begin{figure}[H]
\centering
\begin{minipage}{0.33\textwidth}
  \centering
  \includegraphics[width=1\linewidth]{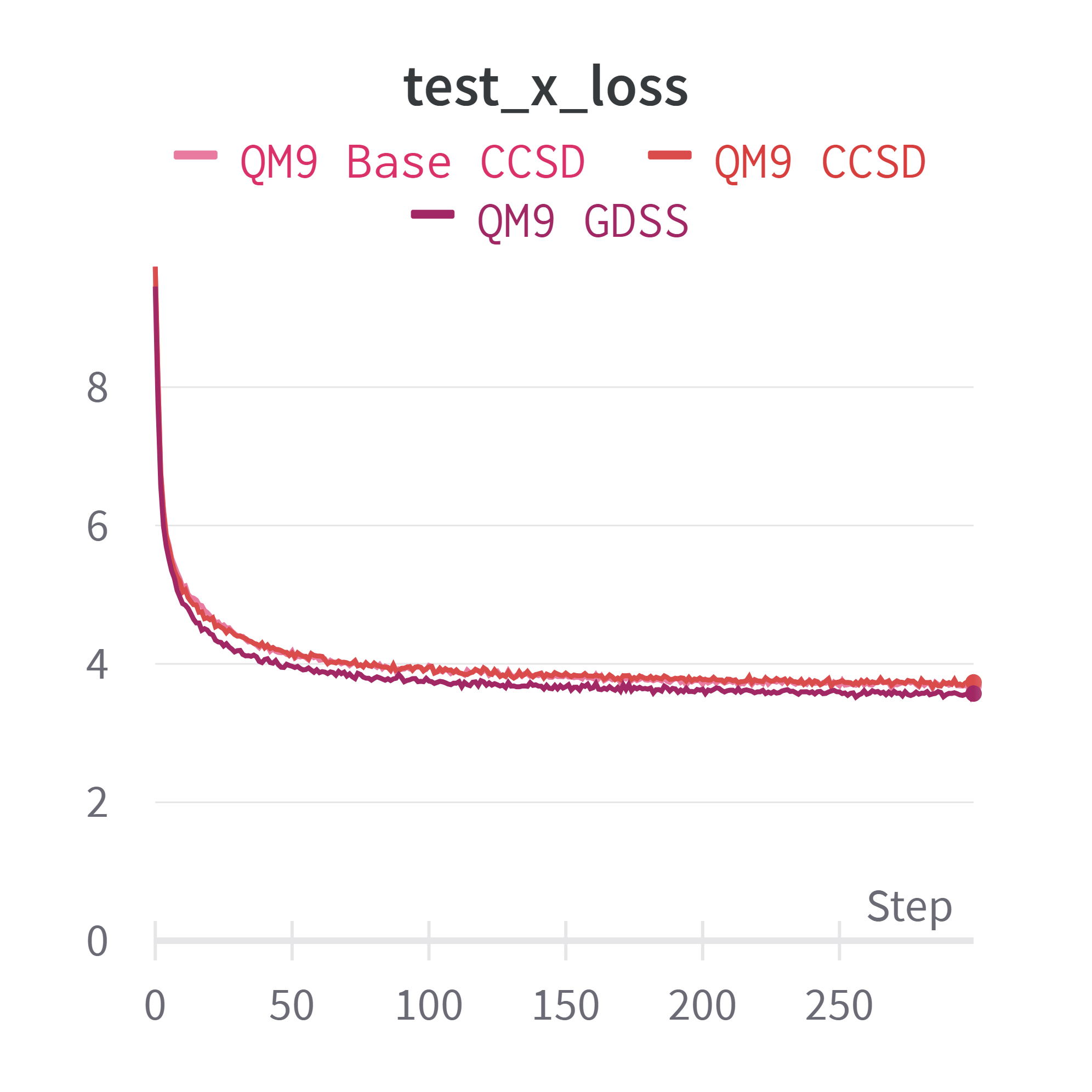}
\end{minipage}%
\begin{minipage}{0.33\textwidth}
  \centering
  \includegraphics[width=1\linewidth]{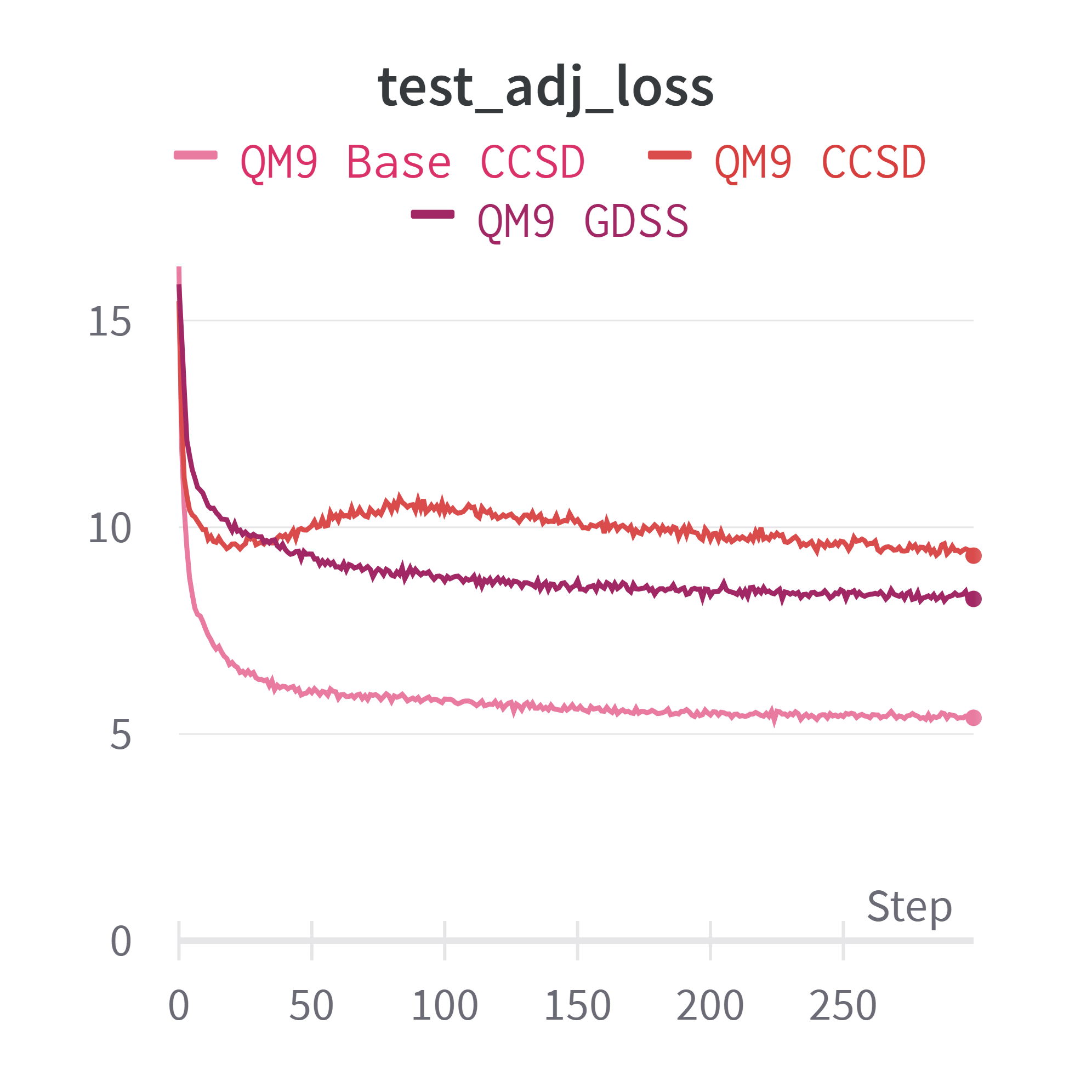}
\end{minipage}%
\begin{minipage}{0.33\textwidth}
  \centering
  \includegraphics[width=1\linewidth]{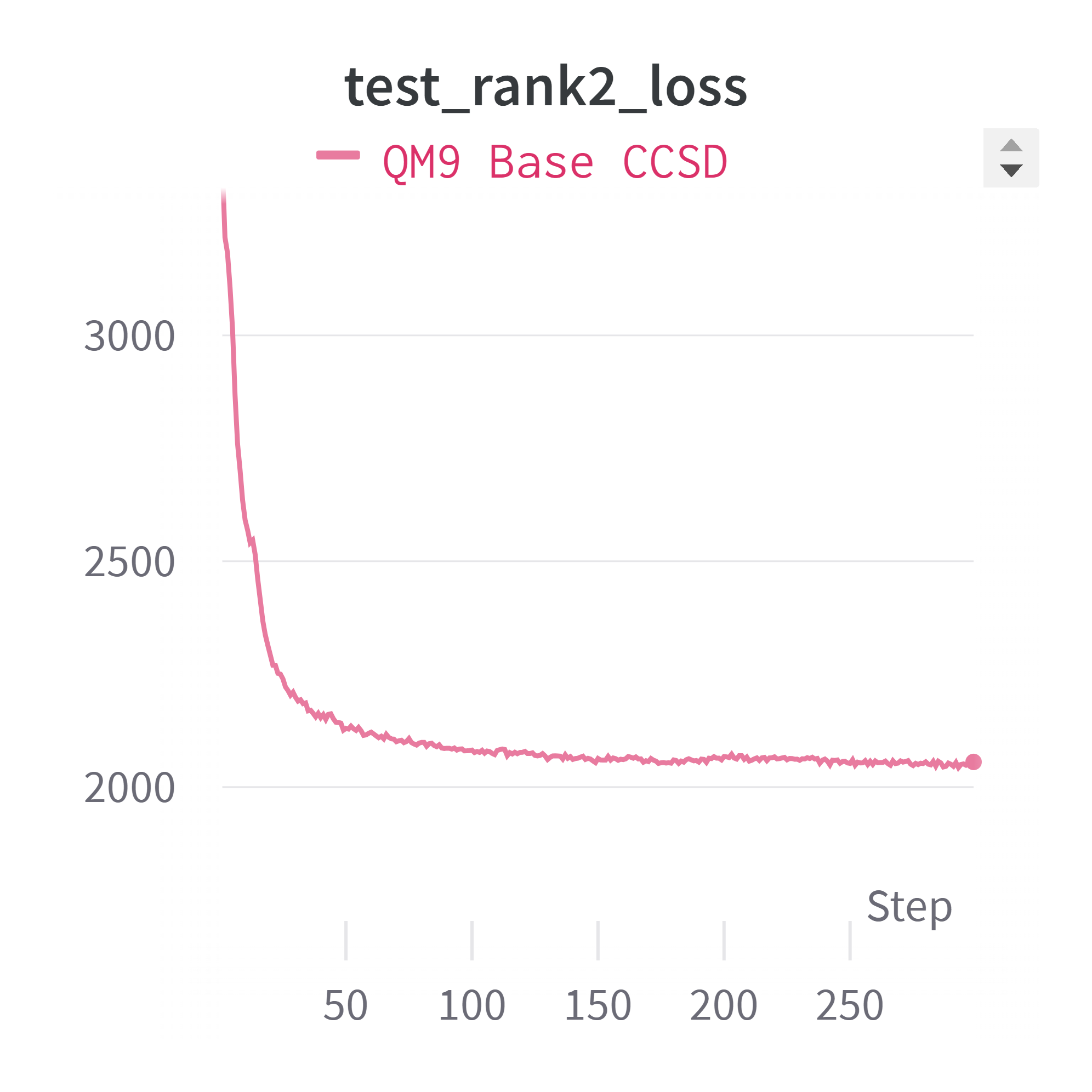}
\end{minipage}
\captionof{figure}{Test losses QM9.}
\label{fig:test_qm9}
\end{figure}

\subsection{Ego small}

\begin{figure}[H]
\centering
\begin{minipage}{0.33\textwidth}
  \centering
  \includegraphics[width=1\linewidth]{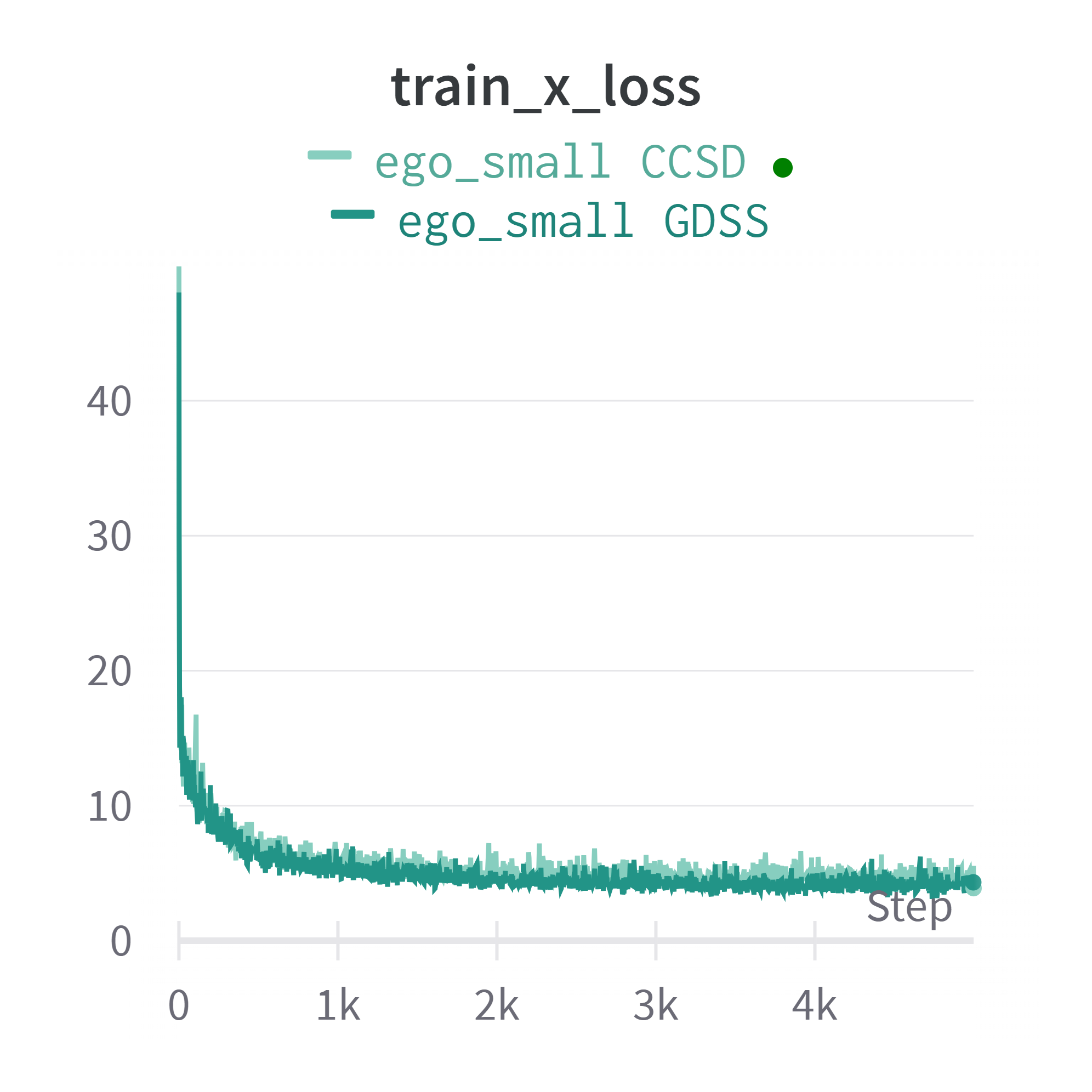}
\end{minipage}%
\begin{minipage}{0.33\textwidth}
  \centering
  \includegraphics[width=1\linewidth]{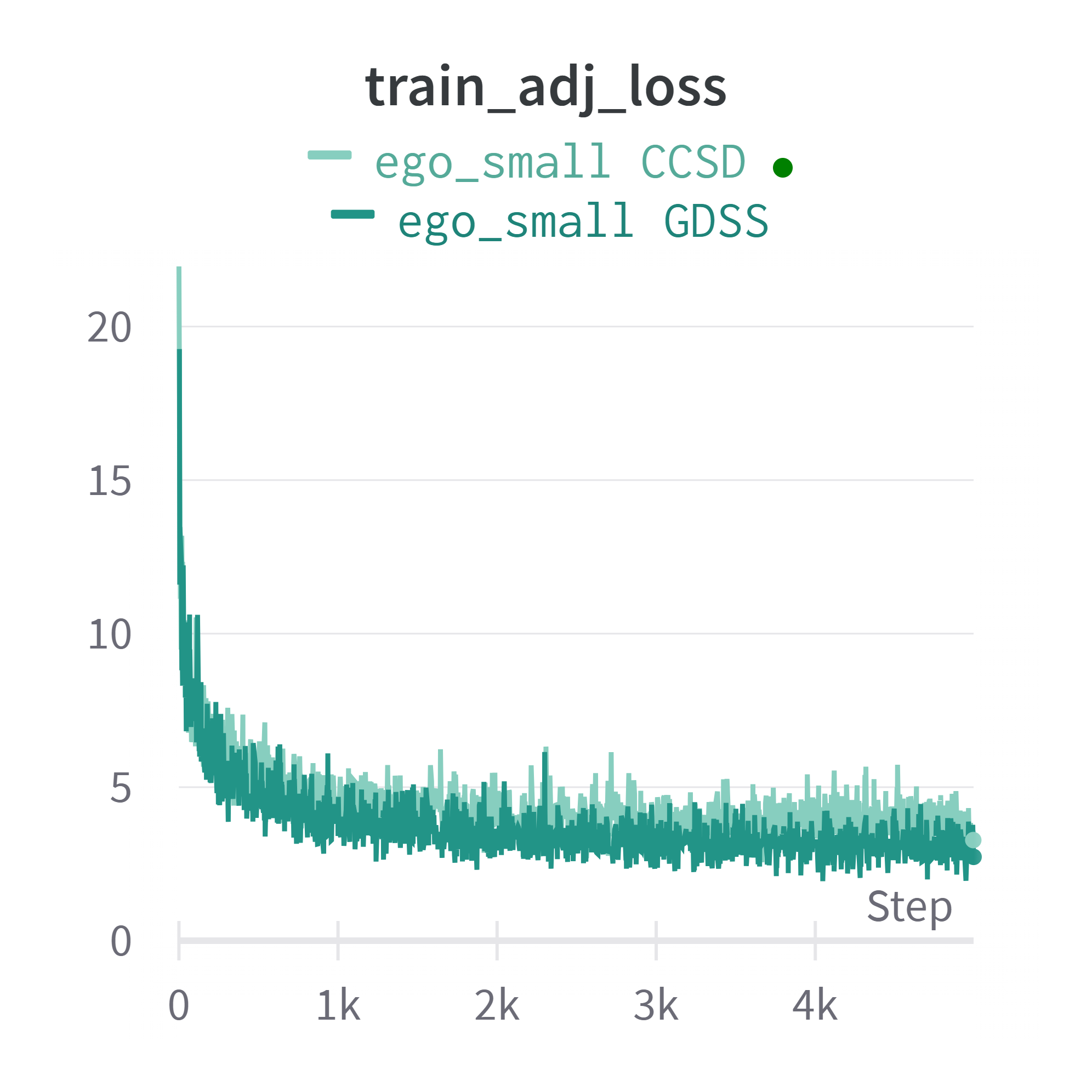}
\end{minipage}%
\begin{minipage}{0.33\textwidth}
  \centering
  \includegraphics[width=1\linewidth]{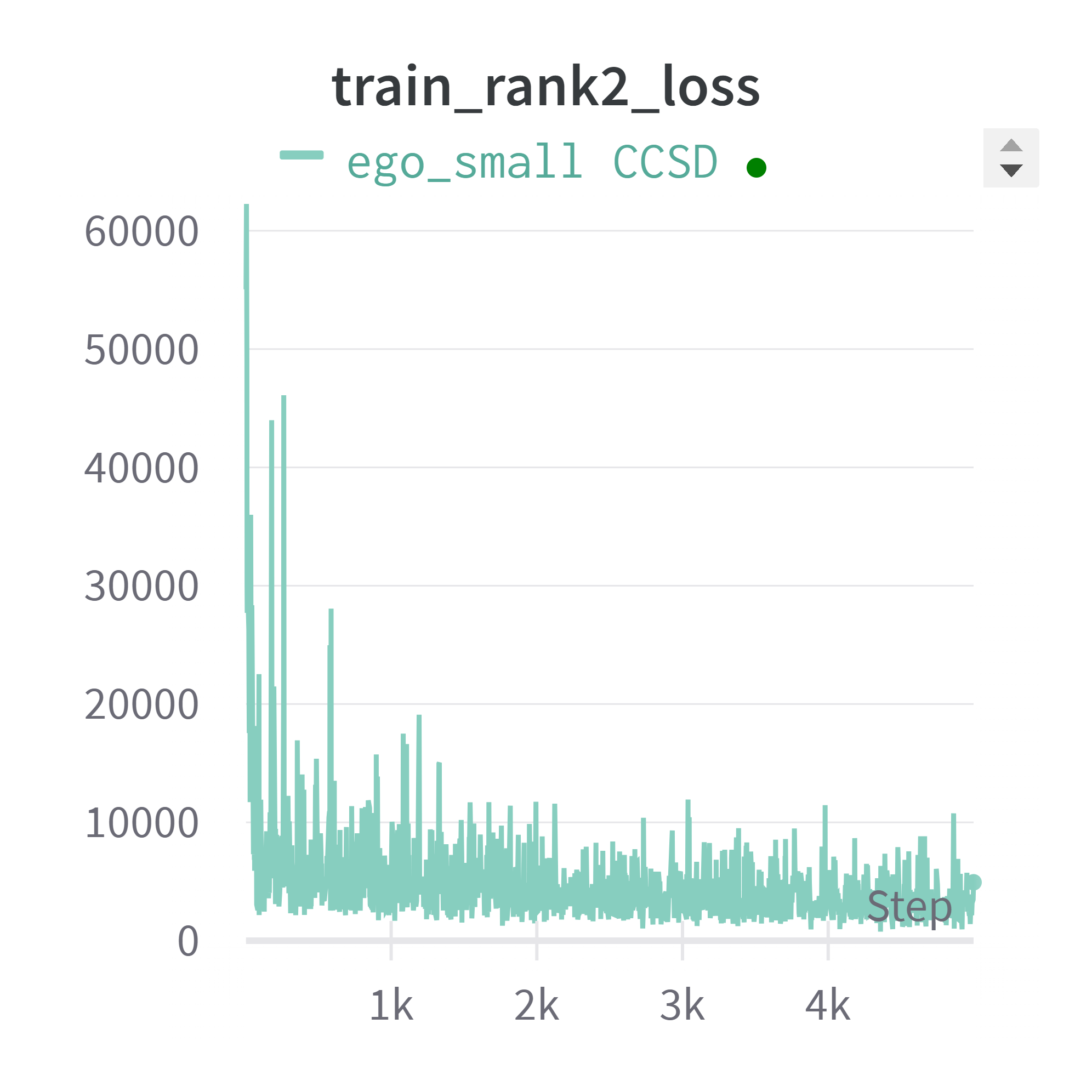}
\end{minipage}
\captionof{figure}{Train losses Ego small.}
\label{fig:train_ego}
\end{figure}

\begin{figure}[H]
\centering
\begin{minipage}{0.33\textwidth}
  \centering
  \includegraphics[width=1\linewidth]{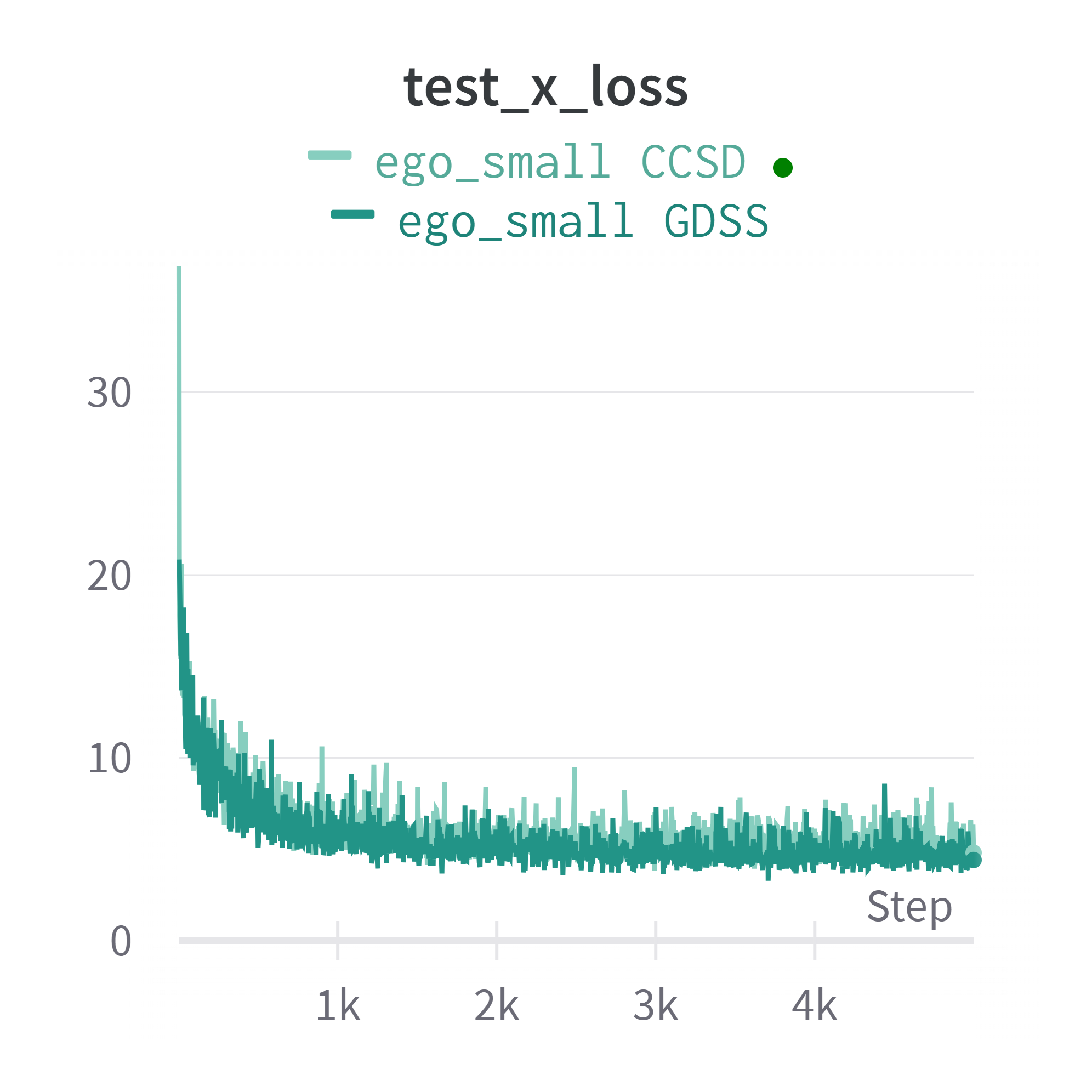}
\end{minipage}%
\begin{minipage}{0.33\textwidth}
  \centering
  \includegraphics[width=1\linewidth]{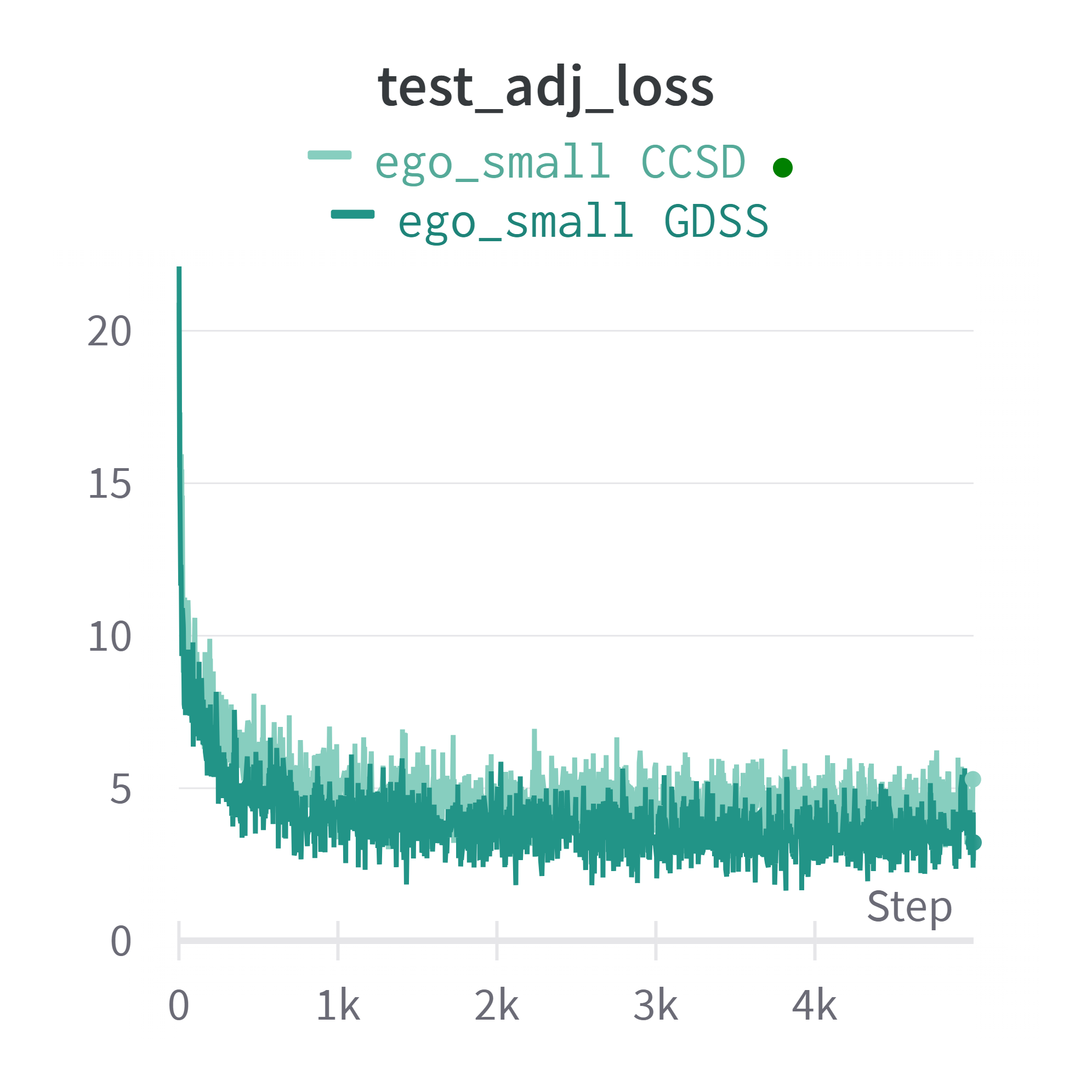}
\end{minipage}%
\begin{minipage}{0.33\textwidth}
  \centering
  \includegraphics[width=1\linewidth]{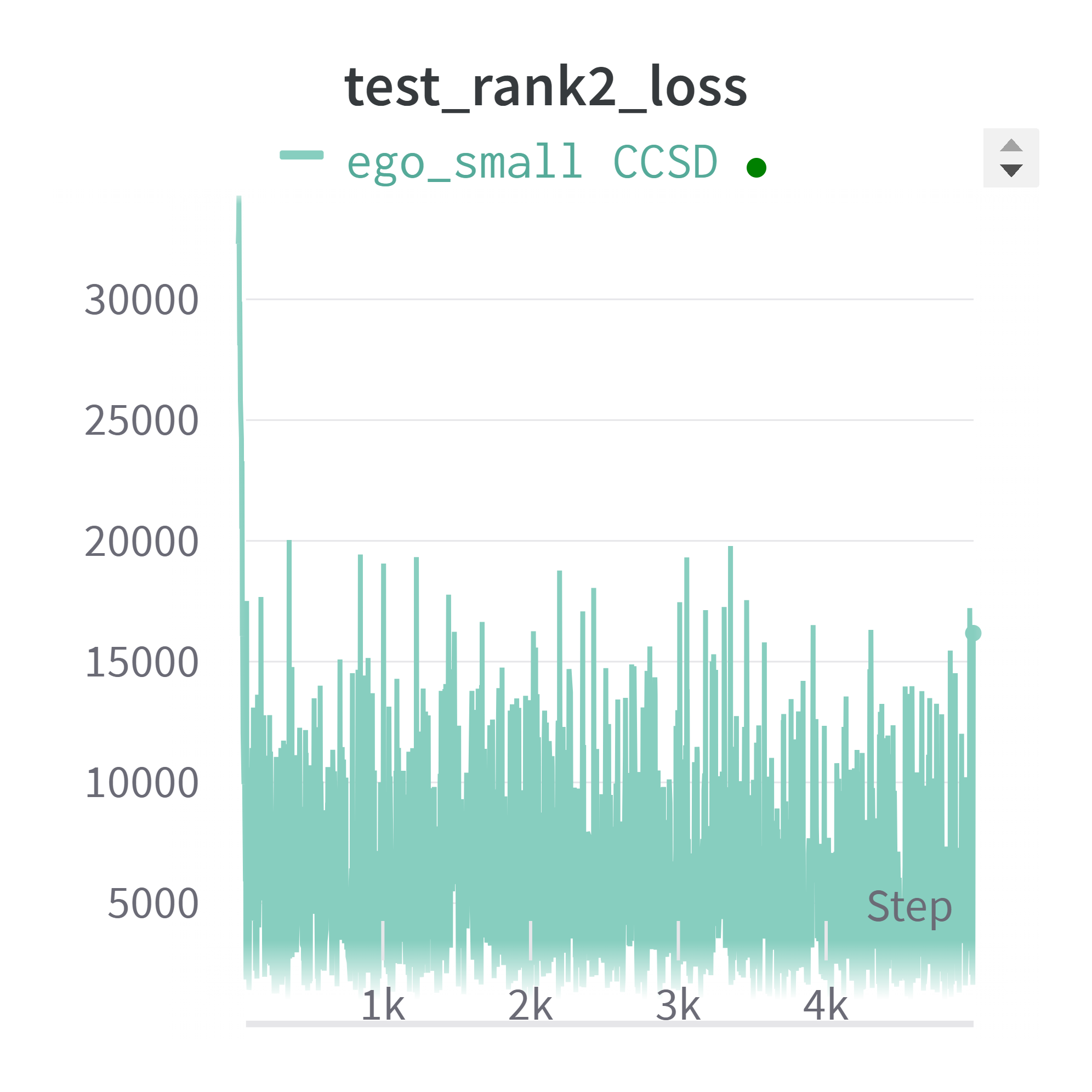}
\end{minipage}
\captionof{figure}{Test losses Ego small.}
\label{fig:test_ego}
\end{figure}

\subsection{Community small}

\begin{figure}[H]
\centering
\begin{minipage}{0.33\textwidth}
  \centering
  \includegraphics[width=1\linewidth]{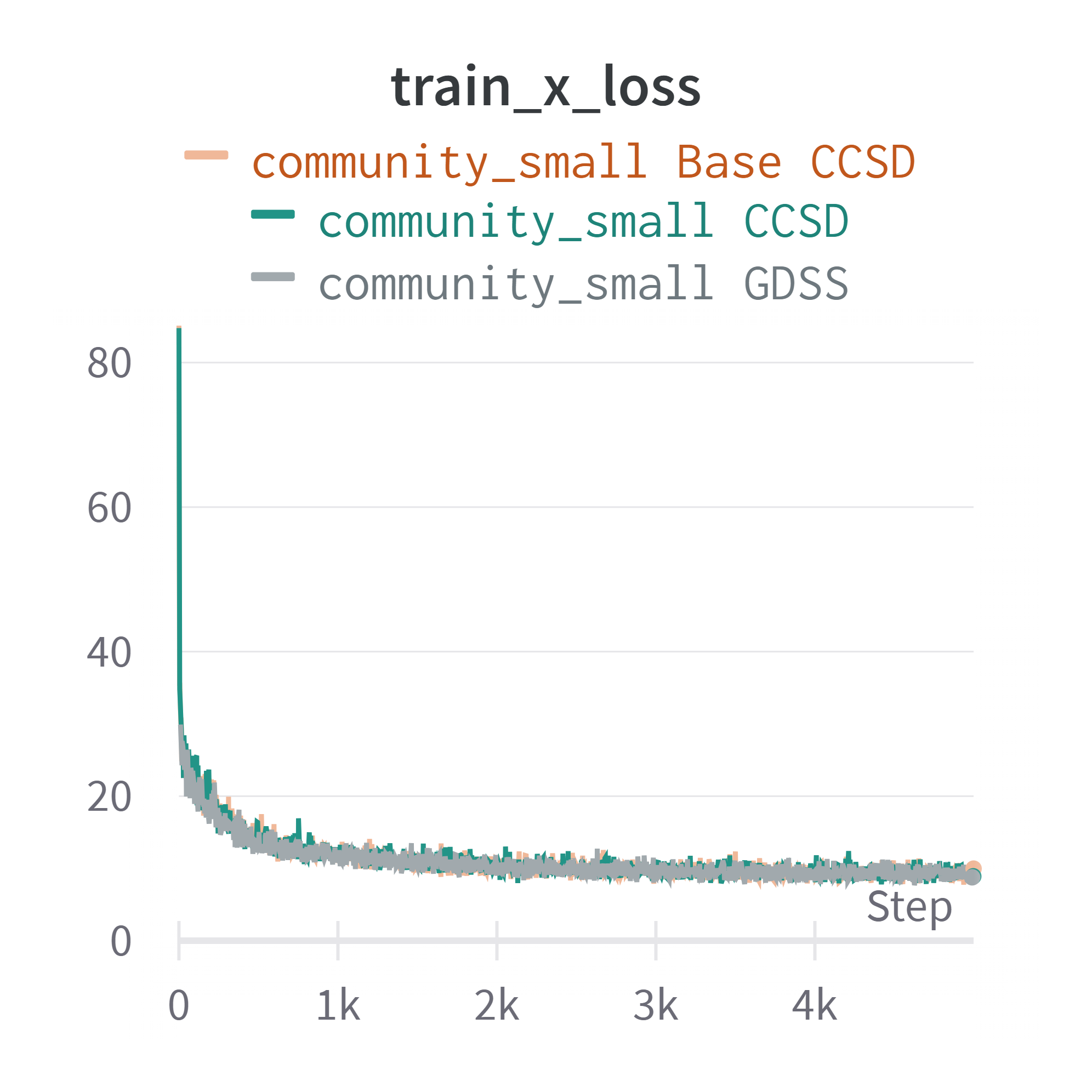}
\end{minipage}%
\begin{minipage}{0.33\textwidth}
  \centering
  \includegraphics[width=1\linewidth]{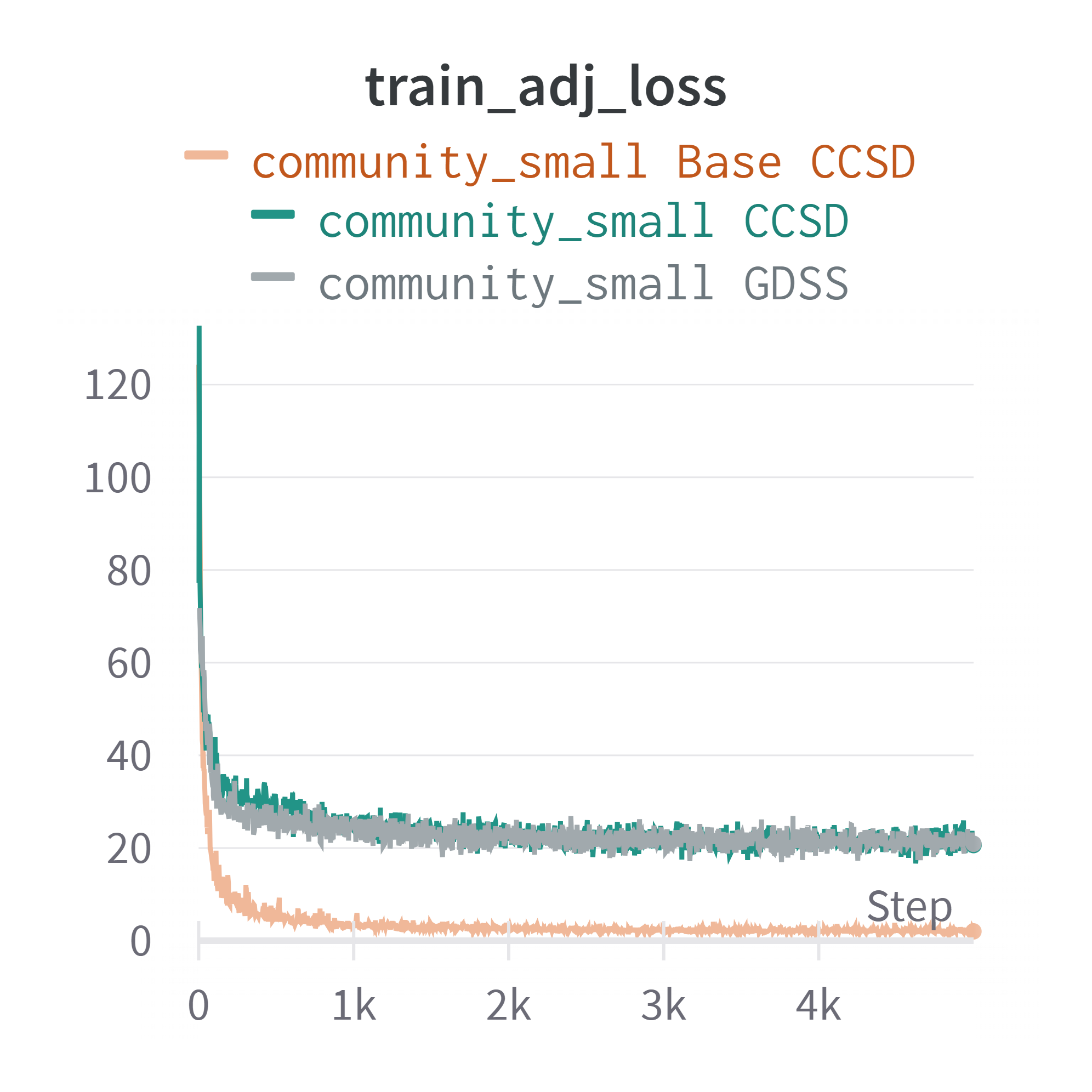}
\end{minipage}%
\begin{minipage}{0.33\textwidth}
  \centering
  \includegraphics[width=1\linewidth]{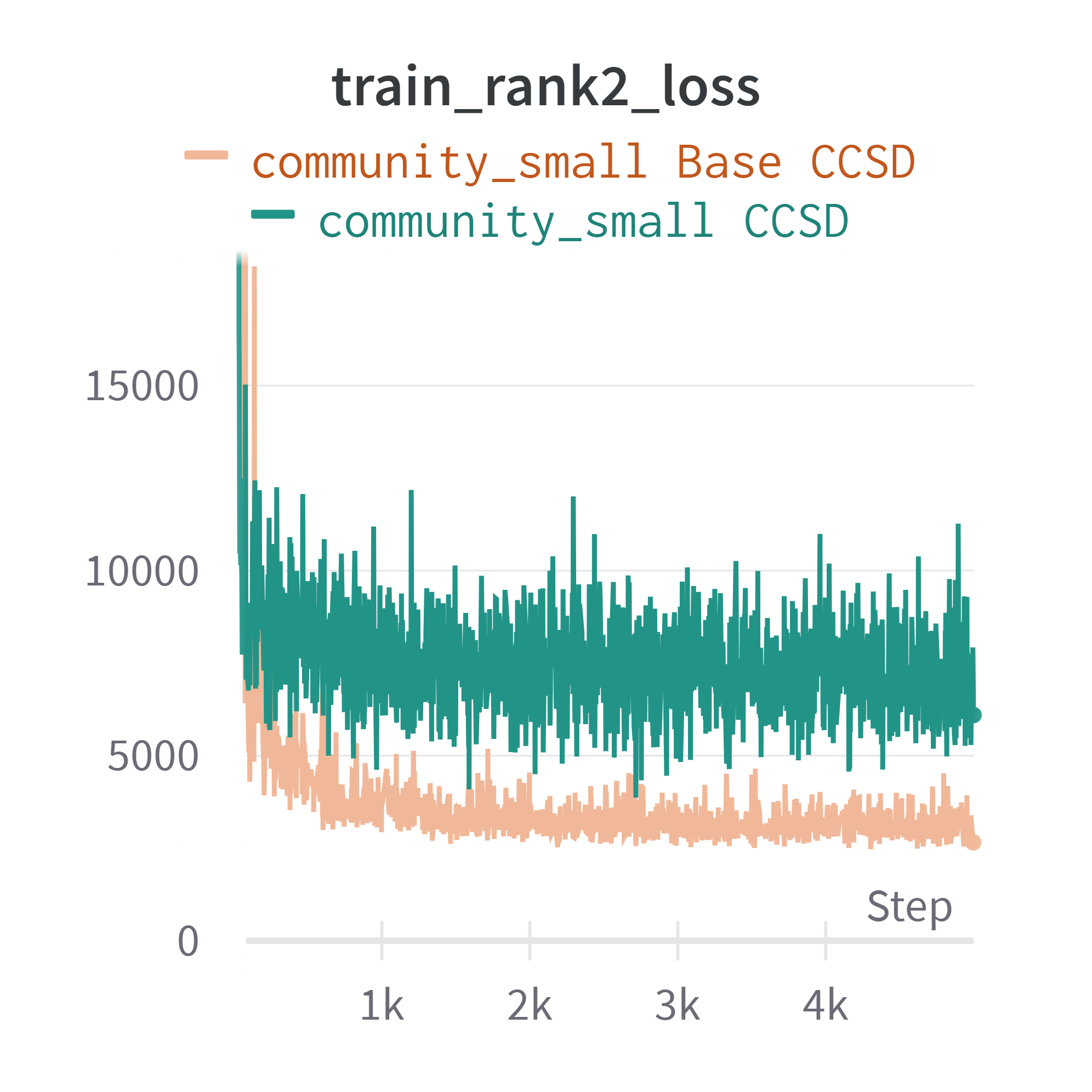}
\end{minipage}
\captionof{figure}{Train losses Community small.}
\label{fig:train_commu}
\end{figure}

\begin{figure}[H]
\centering
\begin{minipage}{0.33\textwidth}
  \centering
  \includegraphics[width=1\linewidth]{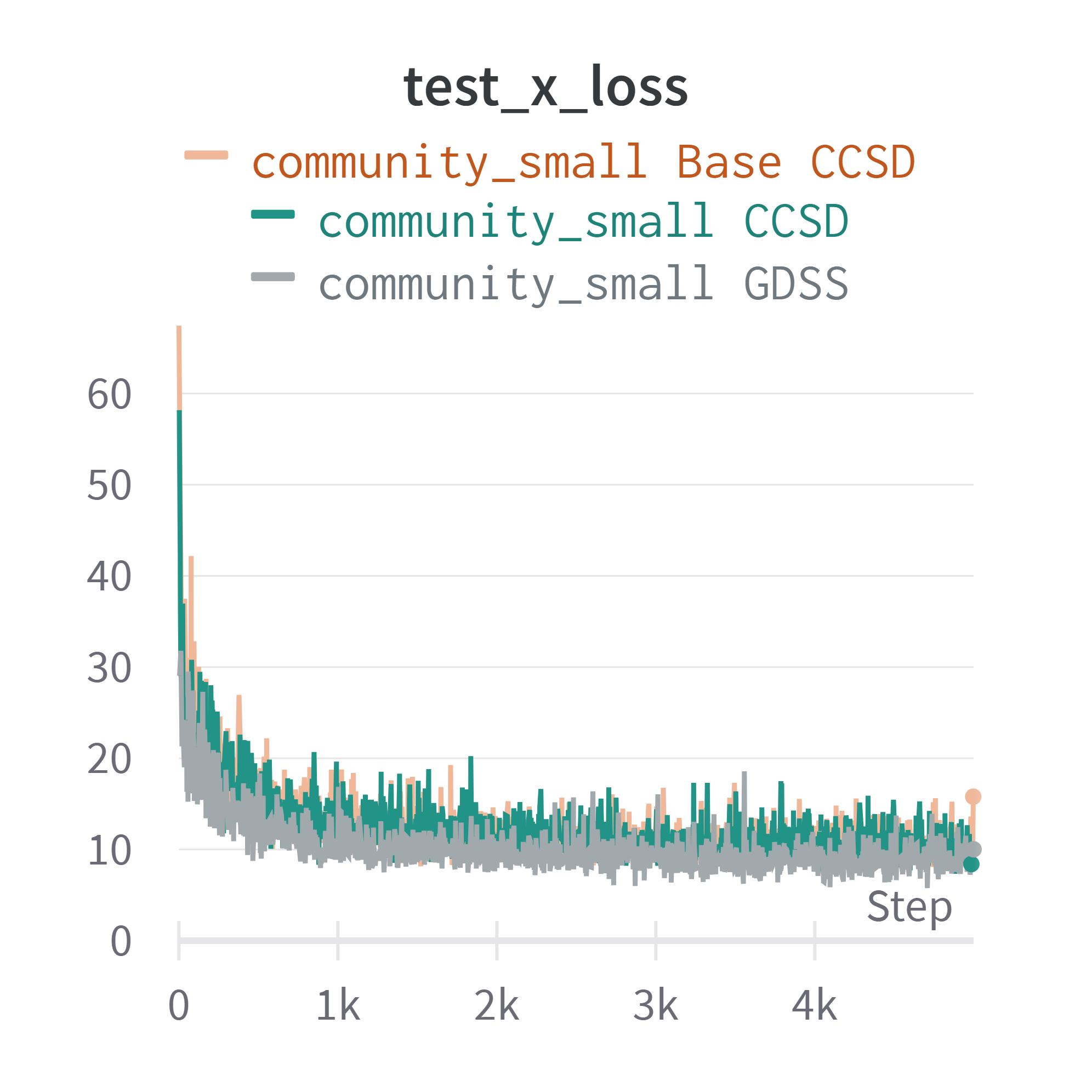}
\end{minipage}%
\begin{minipage}{0.33\textwidth}
  \centering
  \includegraphics[width=1\linewidth]{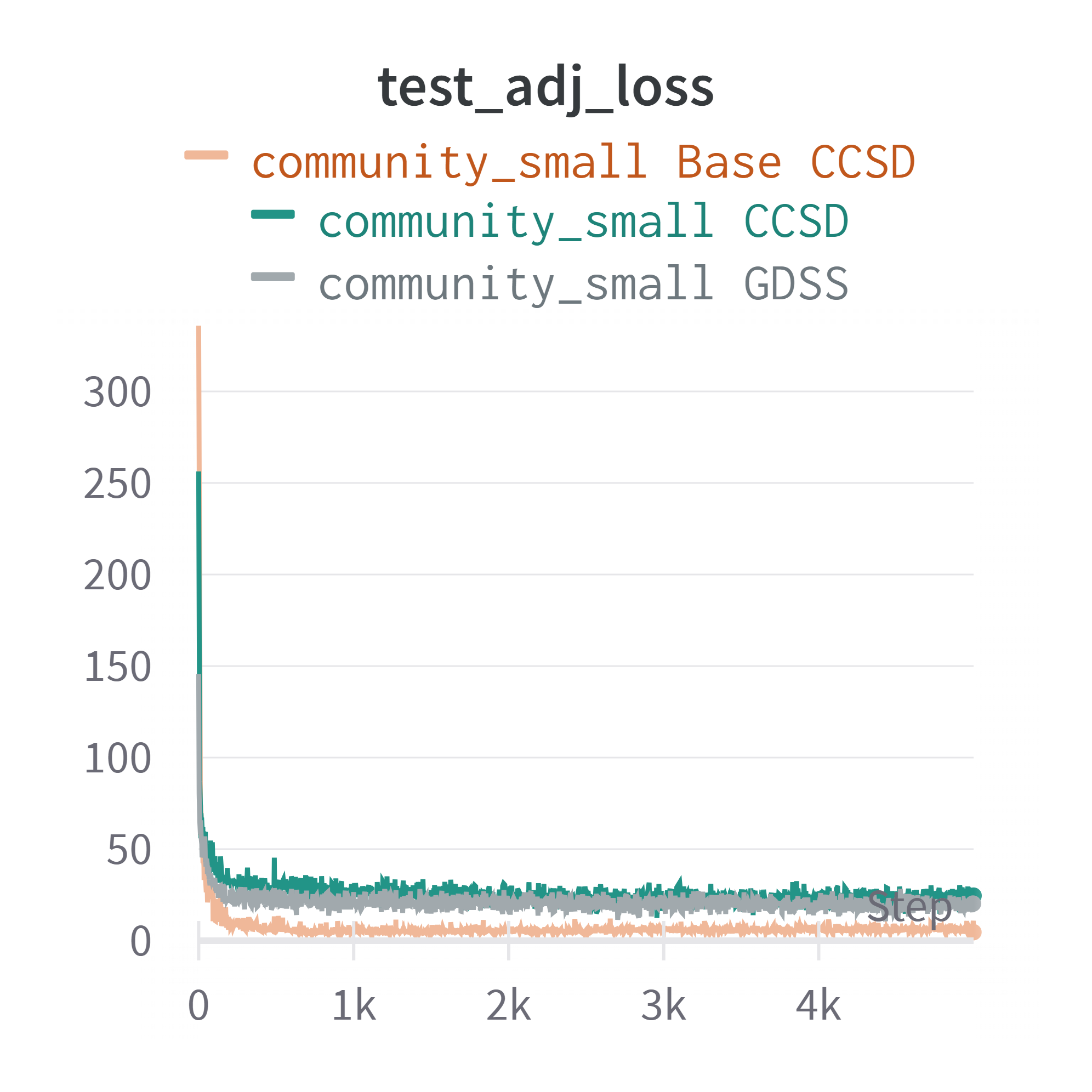}
\end{minipage}%
\begin{minipage}{0.33\textwidth}
  \centering
  \includegraphics[width=1\linewidth]{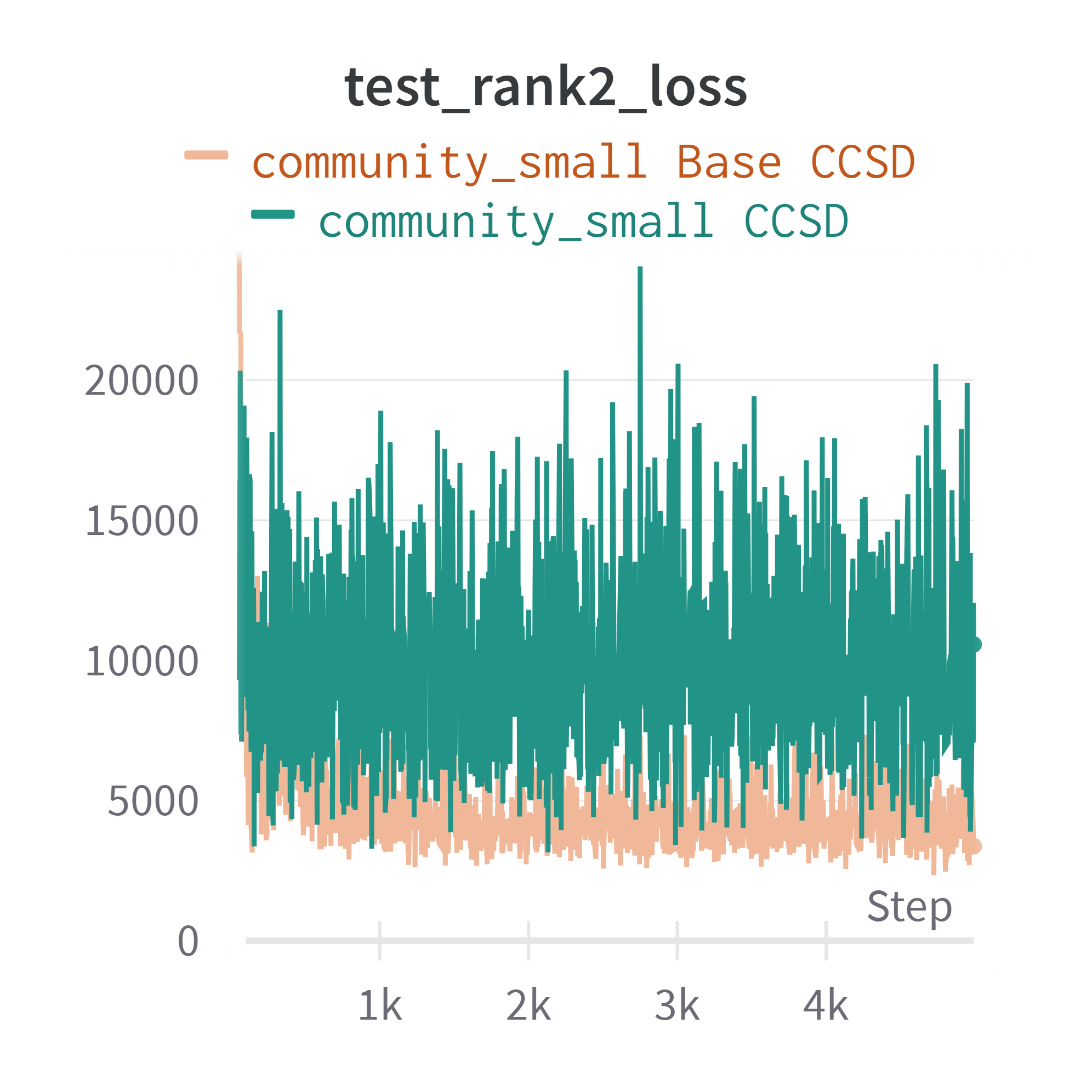}
\end{minipage}
\captionof{figure}{Test losses Community small.}
\label{fig:test_commu}
\end{figure}

\subsection{Enzymes small}

\begin{figure}[H]
\centering
\begin{minipage}{0.33\textwidth}
  \centering
  \includegraphics[width=1\linewidth]{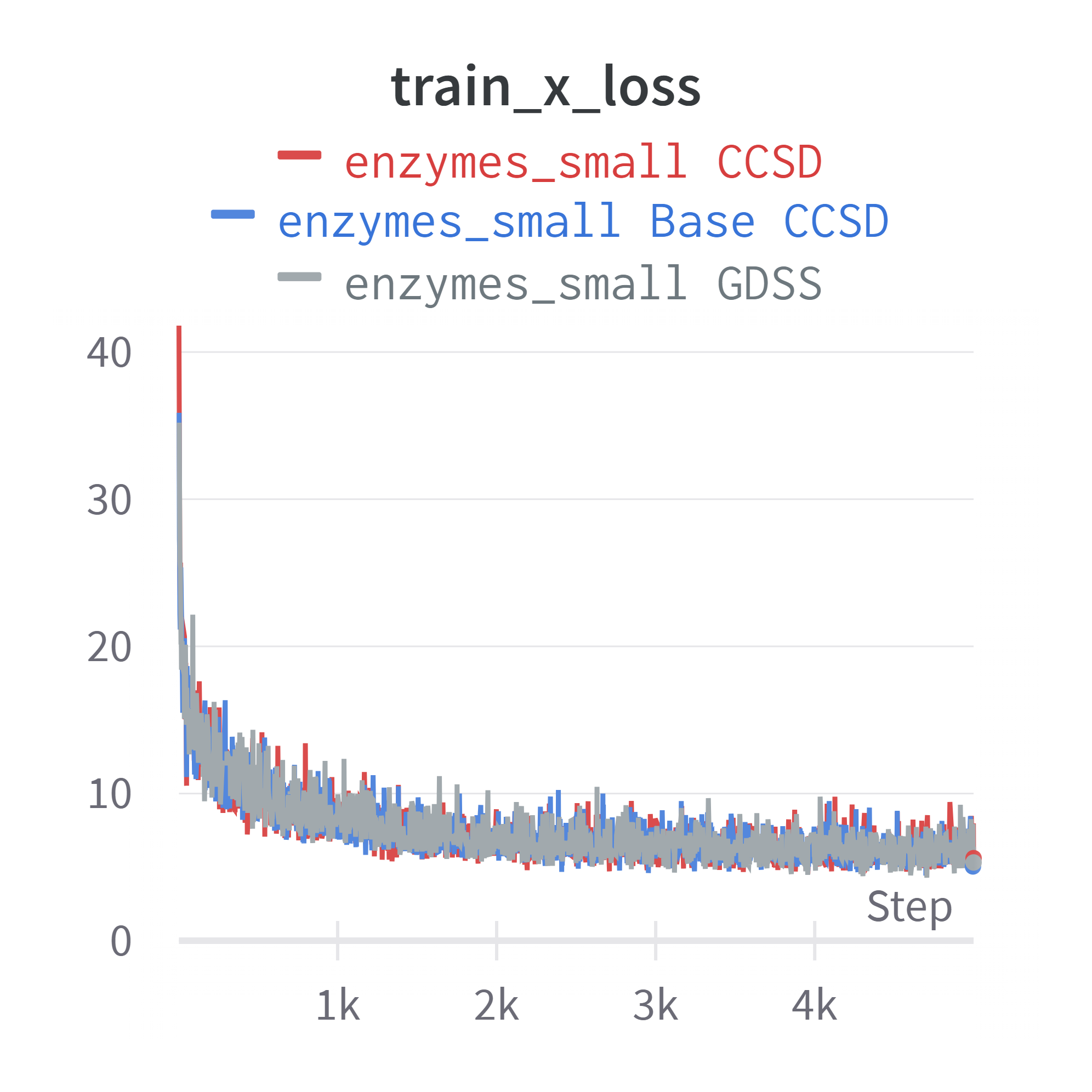}
\end{minipage}%
\begin{minipage}{0.33\textwidth}
  \centering
  \includegraphics[width=1\linewidth]{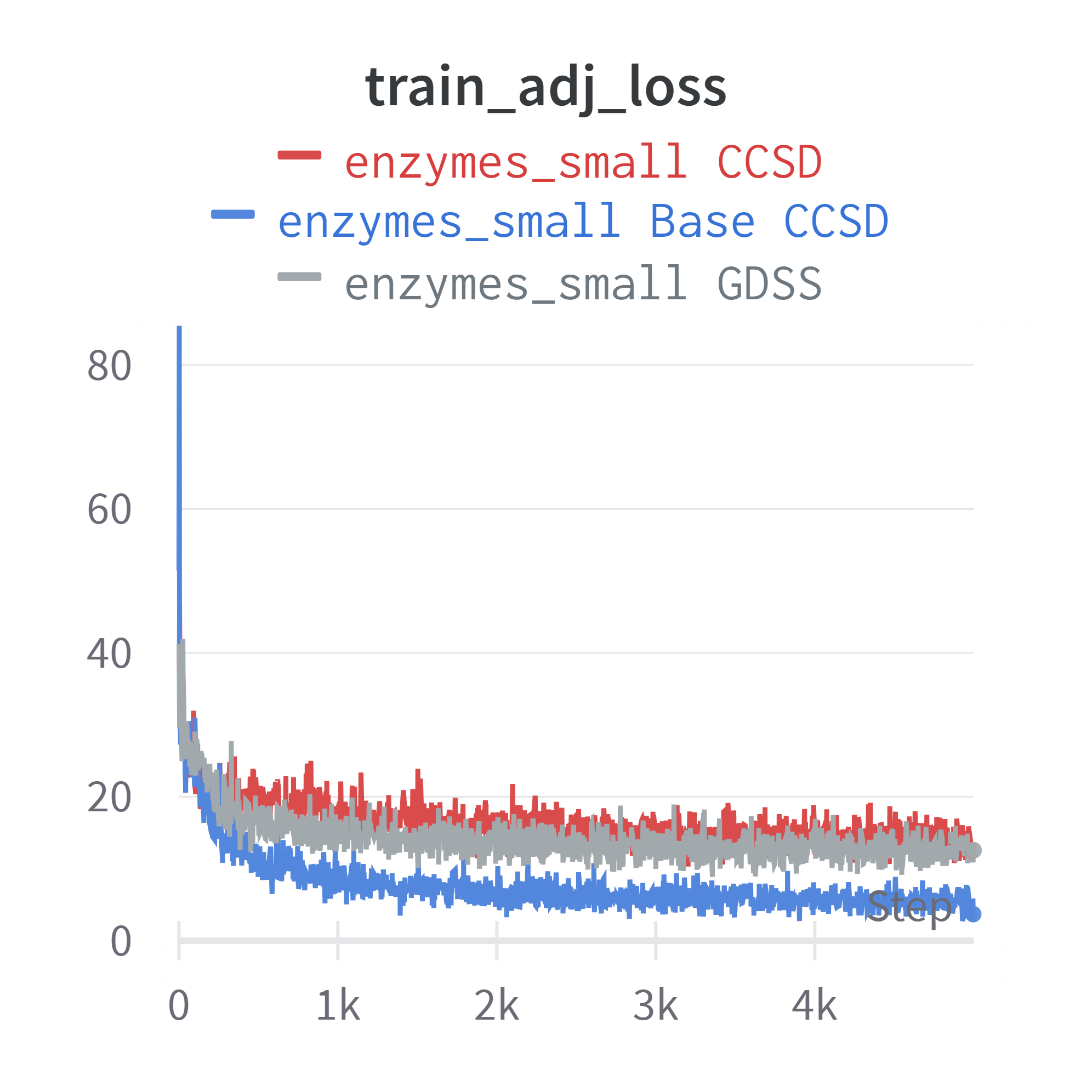}
\end{minipage}%
\begin{minipage}{0.33\textwidth}
  \centering
  \includegraphics[width=1\linewidth]{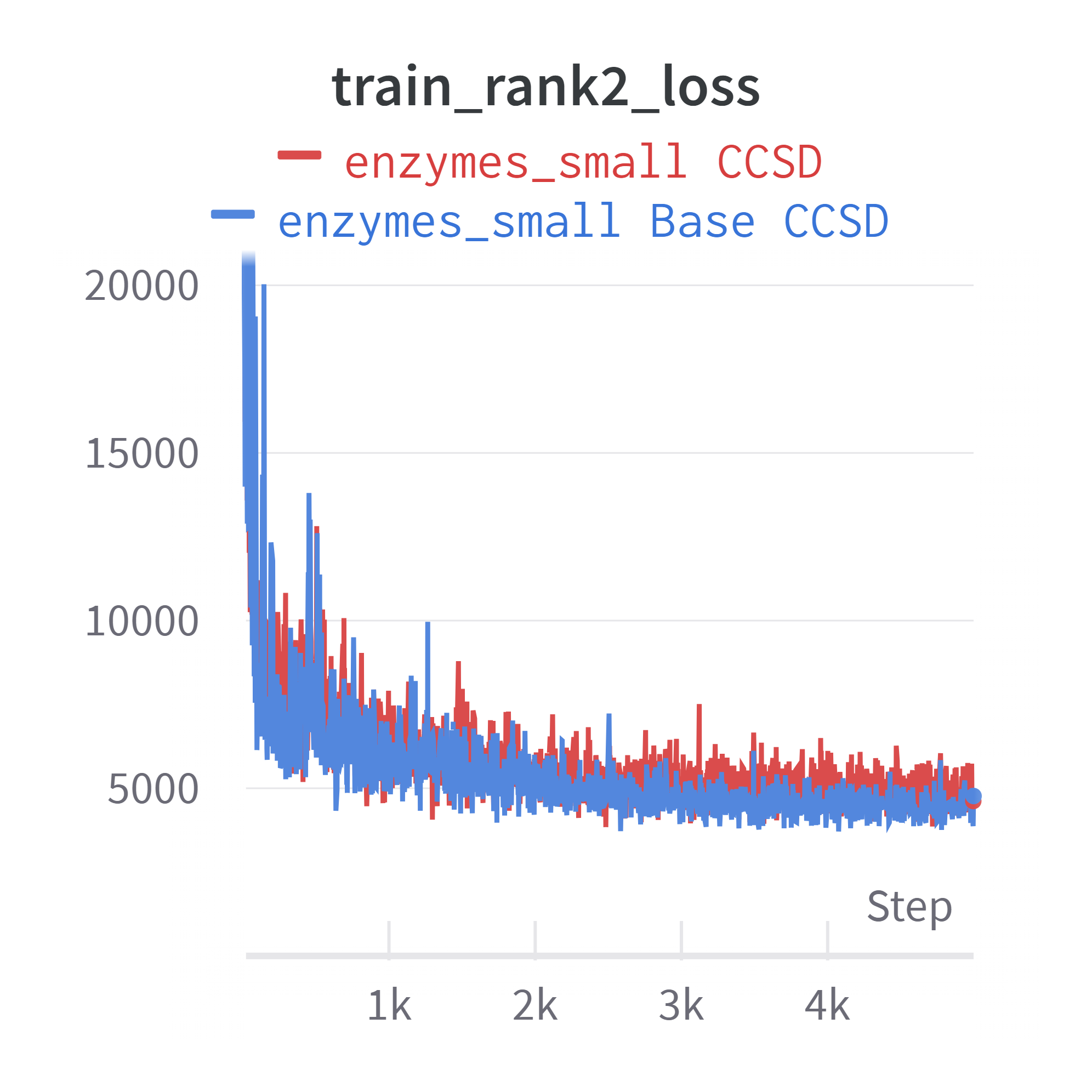}
\end{minipage}
\captionof{figure}{Train losses Enzymes small.}
\label{fig:train_enzymes}
\end{figure}

\begin{figure}[H]
\centering
\begin{minipage}{0.33\textwidth}
  \centering
  \includegraphics[width=1\linewidth]{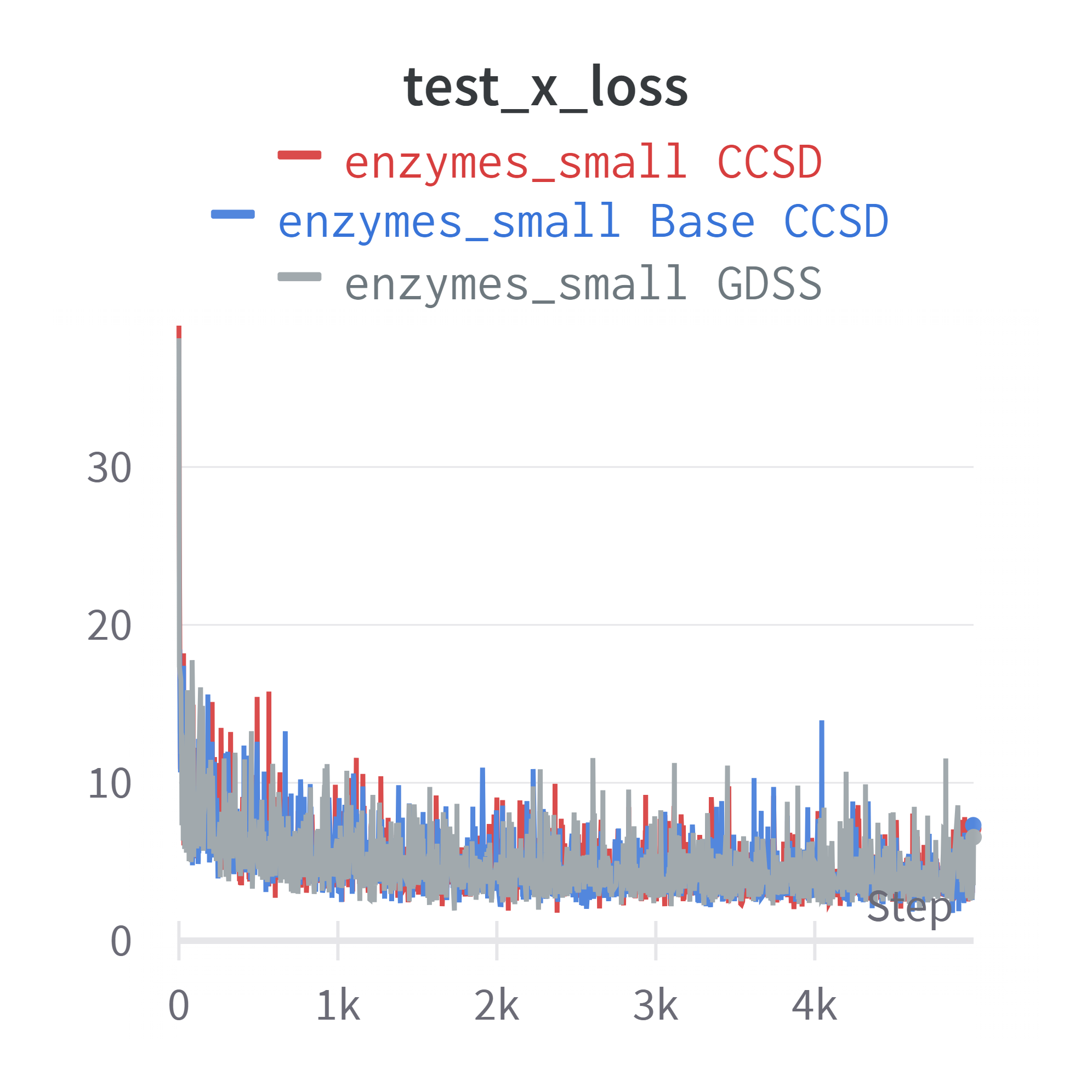}
\end{minipage}%
\begin{minipage}{0.33\textwidth}
  \centering
  \includegraphics[width=1\linewidth]{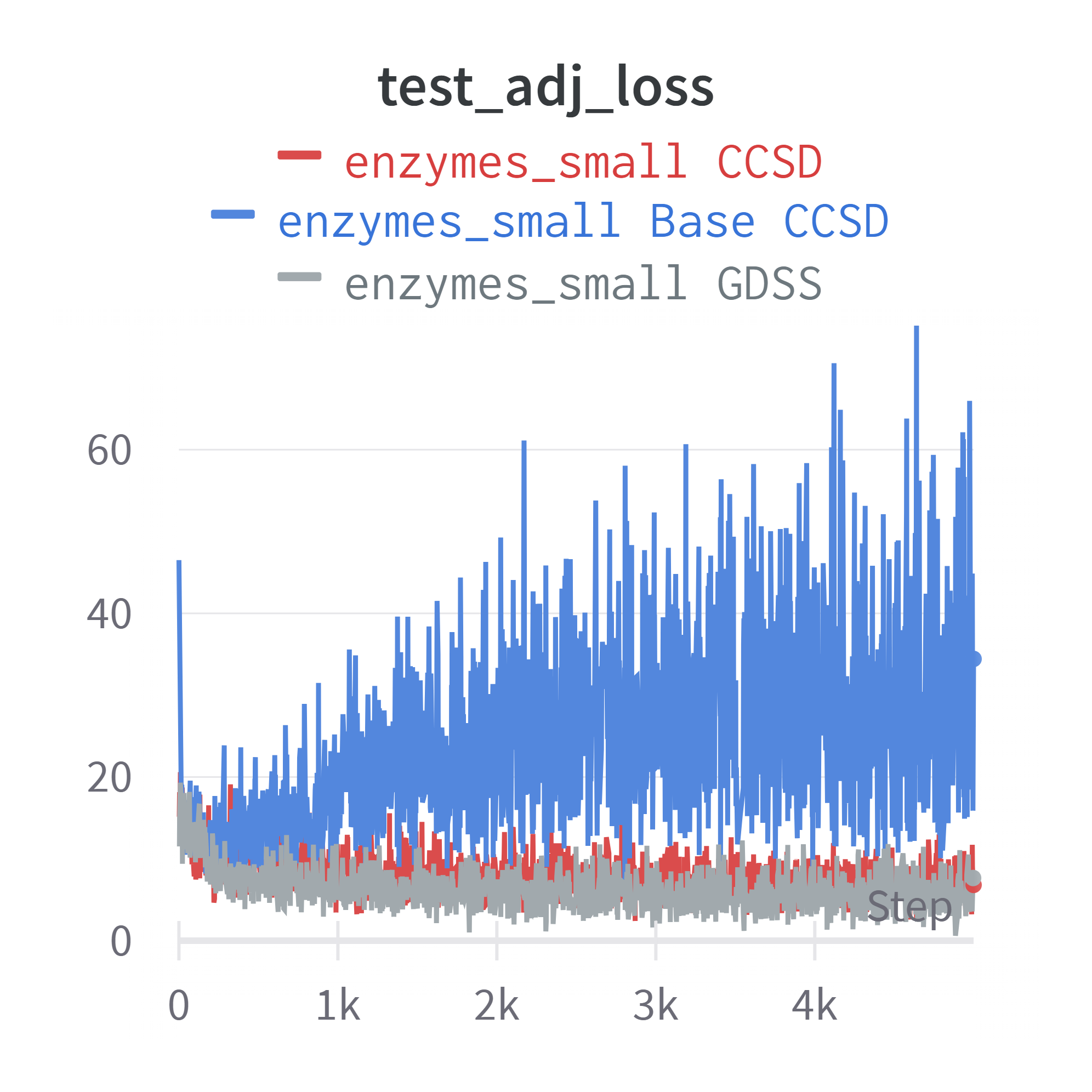}
\end{minipage}%
\begin{minipage}{0.33\textwidth}
  \centering
  \includegraphics[width=1\linewidth]{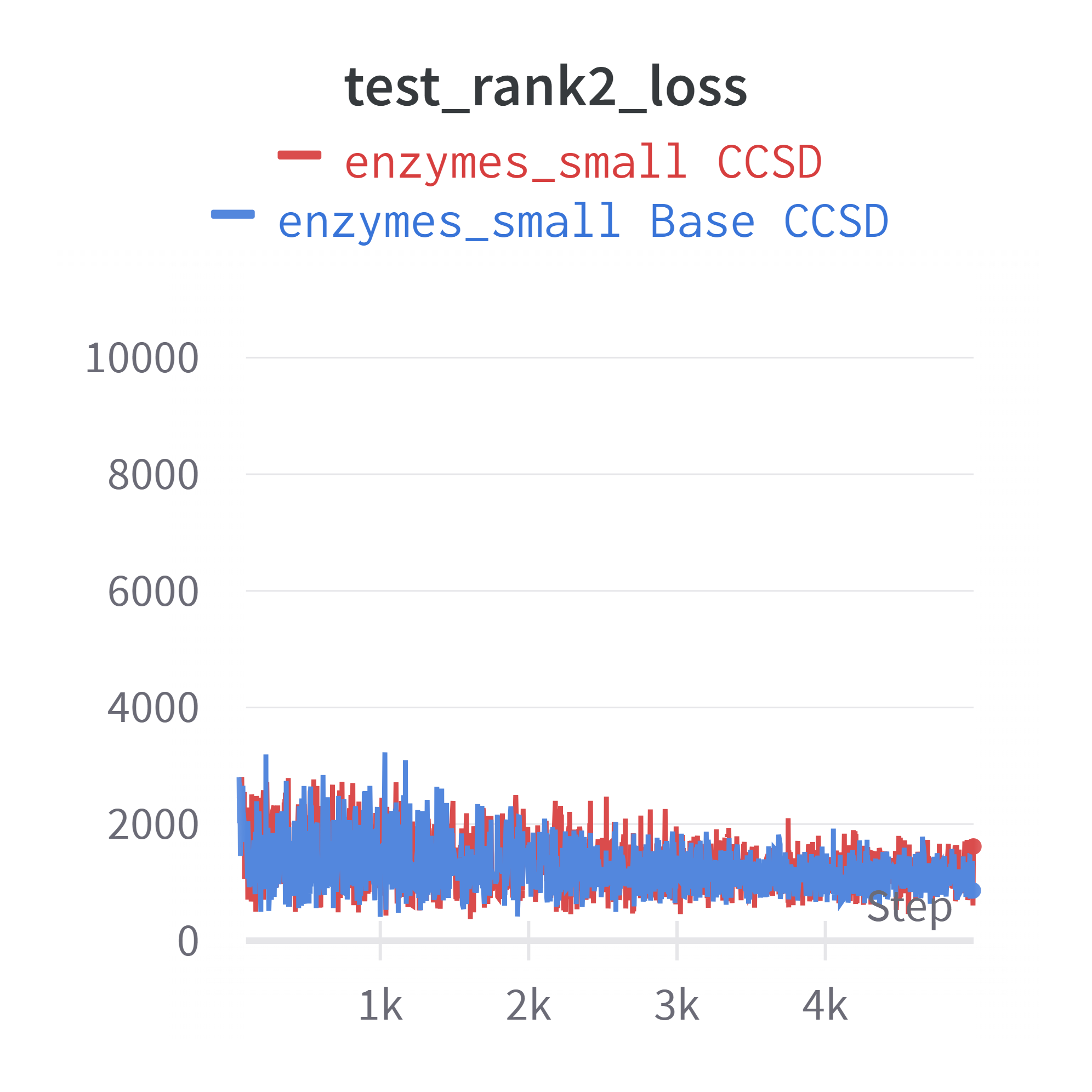}
\end{minipage}
\captionof{figure}{Test losses Enzymes small.}
\label{fig:test_enzymes}
\end{figure}

\subsection{Grid small}

\begin{figure}[H]
\centering
\begin{minipage}{0.33\textwidth}
  \centering
  \includegraphics[width=1\linewidth]{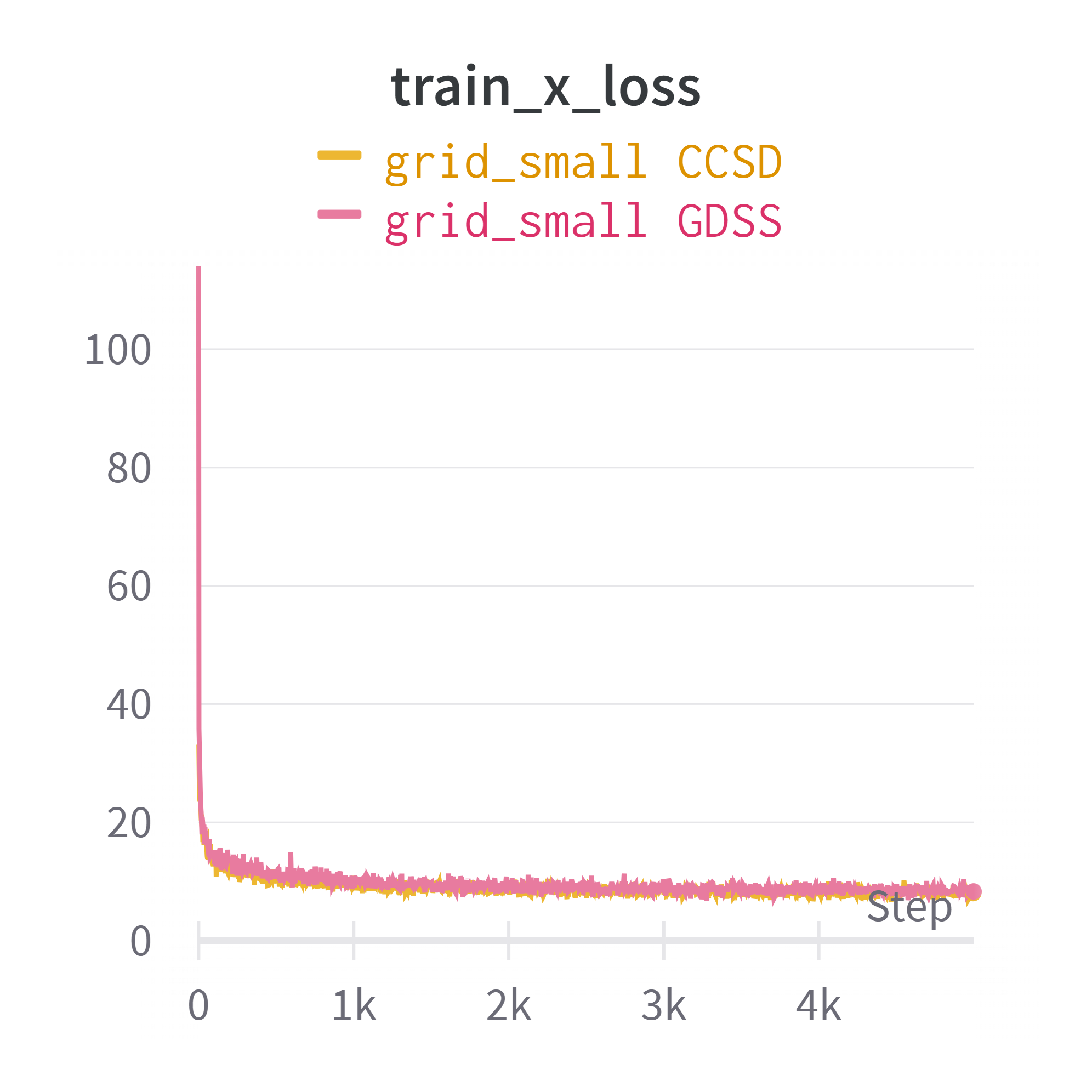}
\end{minipage}%
\begin{minipage}{0.33\textwidth}
  \centering
  \includegraphics[width=1\linewidth]{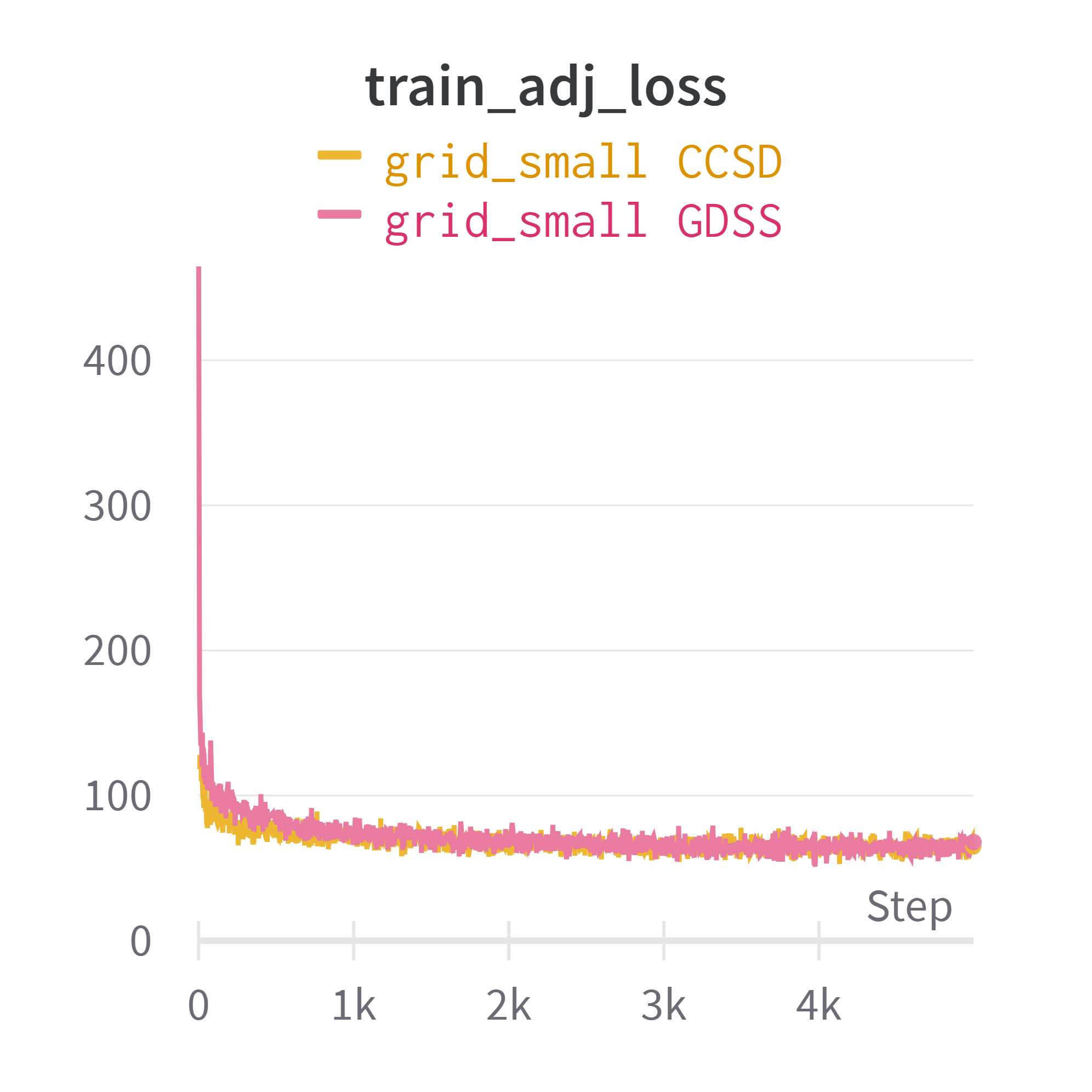}
\end{minipage}%
\begin{minipage}{0.33\textwidth}
  \centering
  \includegraphics[width=1\linewidth]{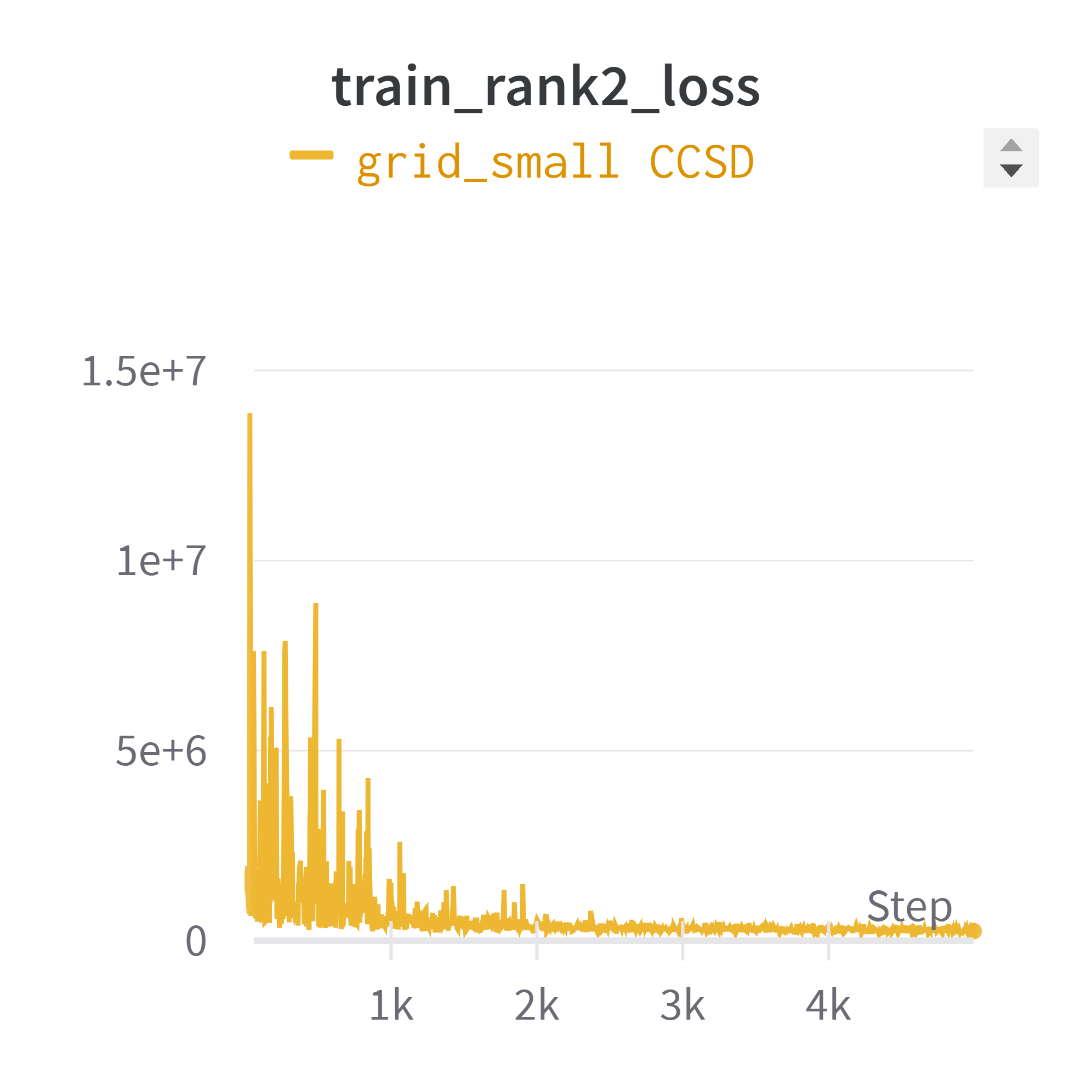}
\end{minipage}
\captionof{figure}{Train losses Grid small.}
\label{fig:train_grid}
\end{figure}

\begin{figure}[H]
\centering
\begin{minipage}{0.33\textwidth}
  \centering
  \includegraphics[width=1\linewidth]{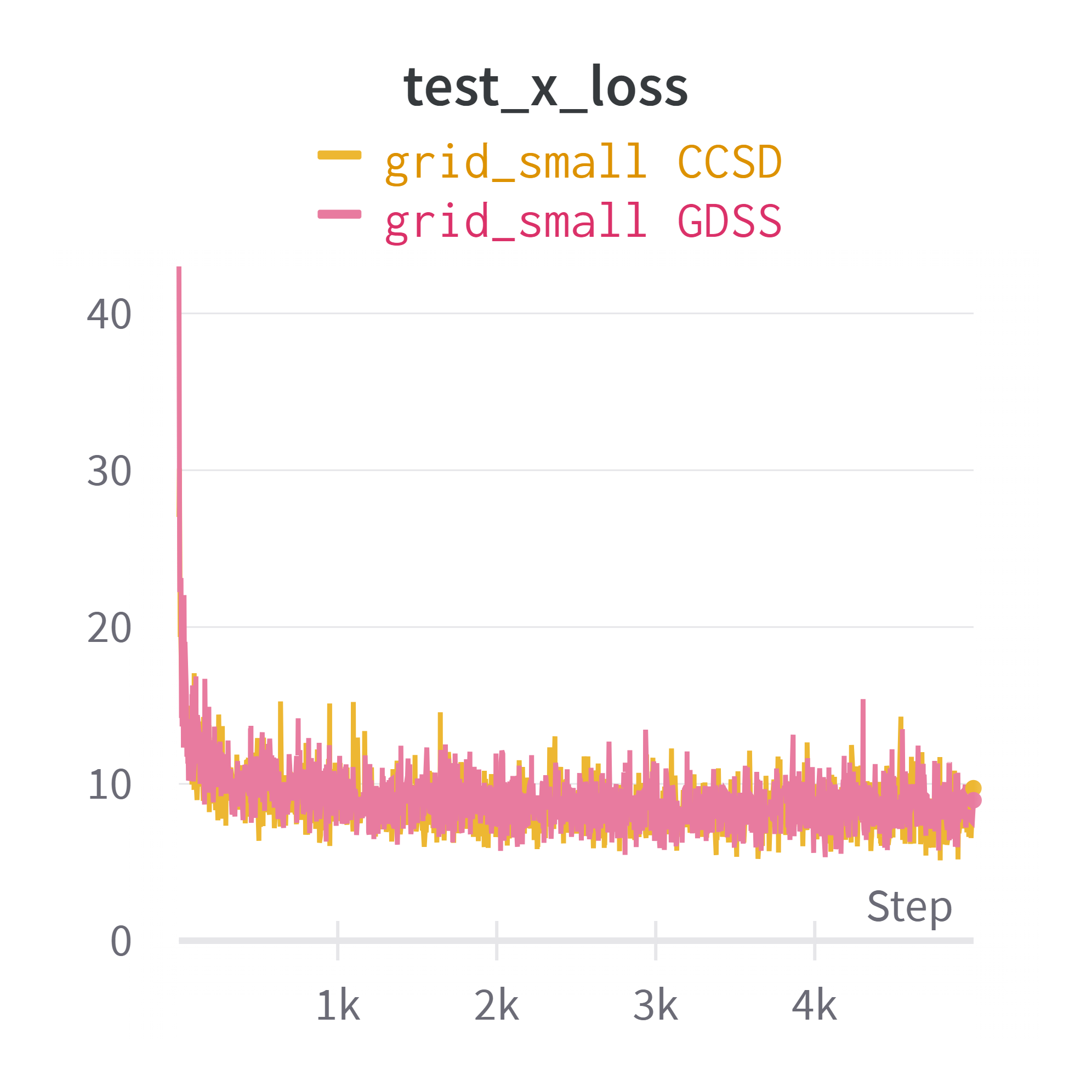}
\end{minipage}%
\begin{minipage}{0.33\textwidth}
  \centering
  \includegraphics[width=1\linewidth]{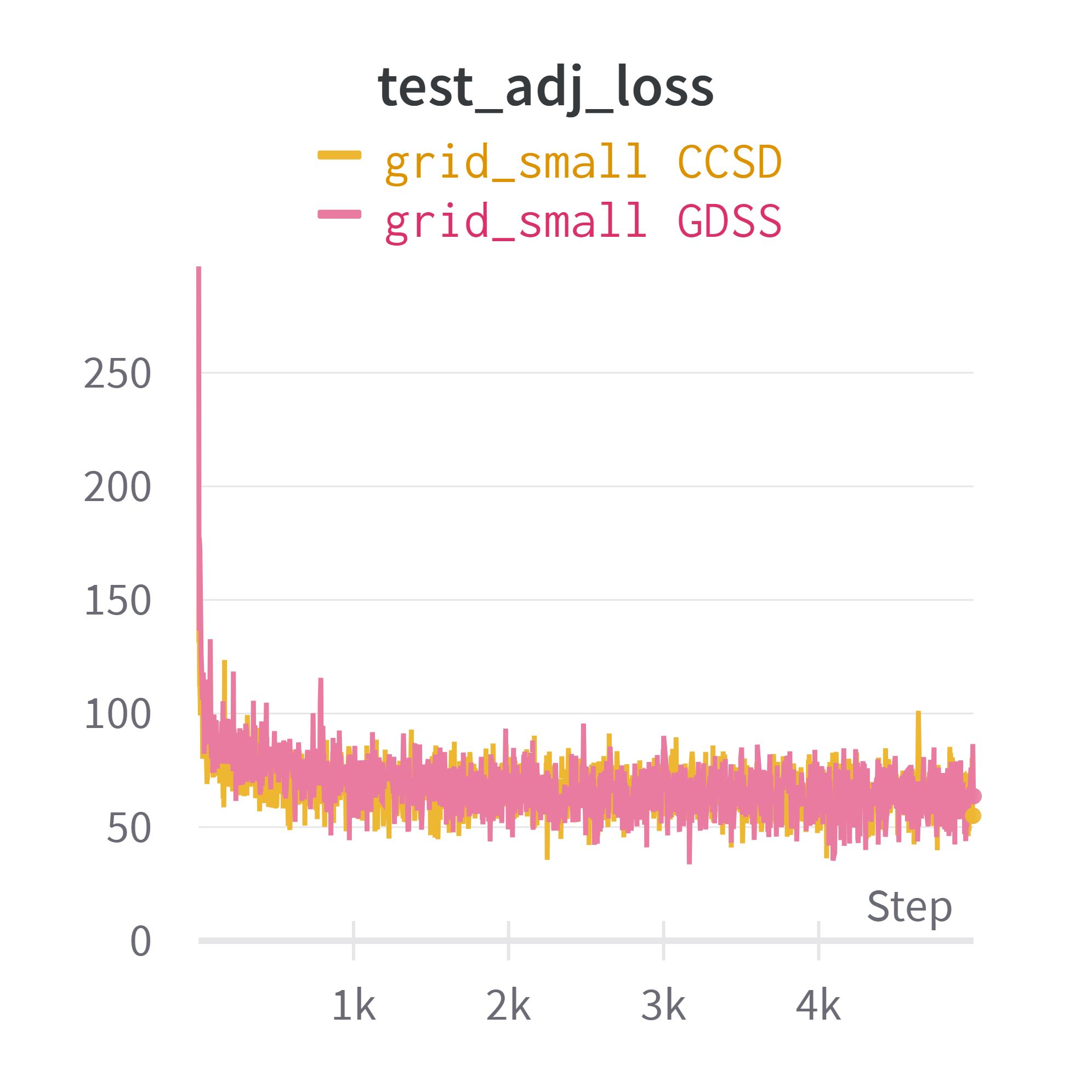}
\end{minipage}%
\begin{minipage}{0.33\textwidth}
  \centering
  \includegraphics[width=1\linewidth]{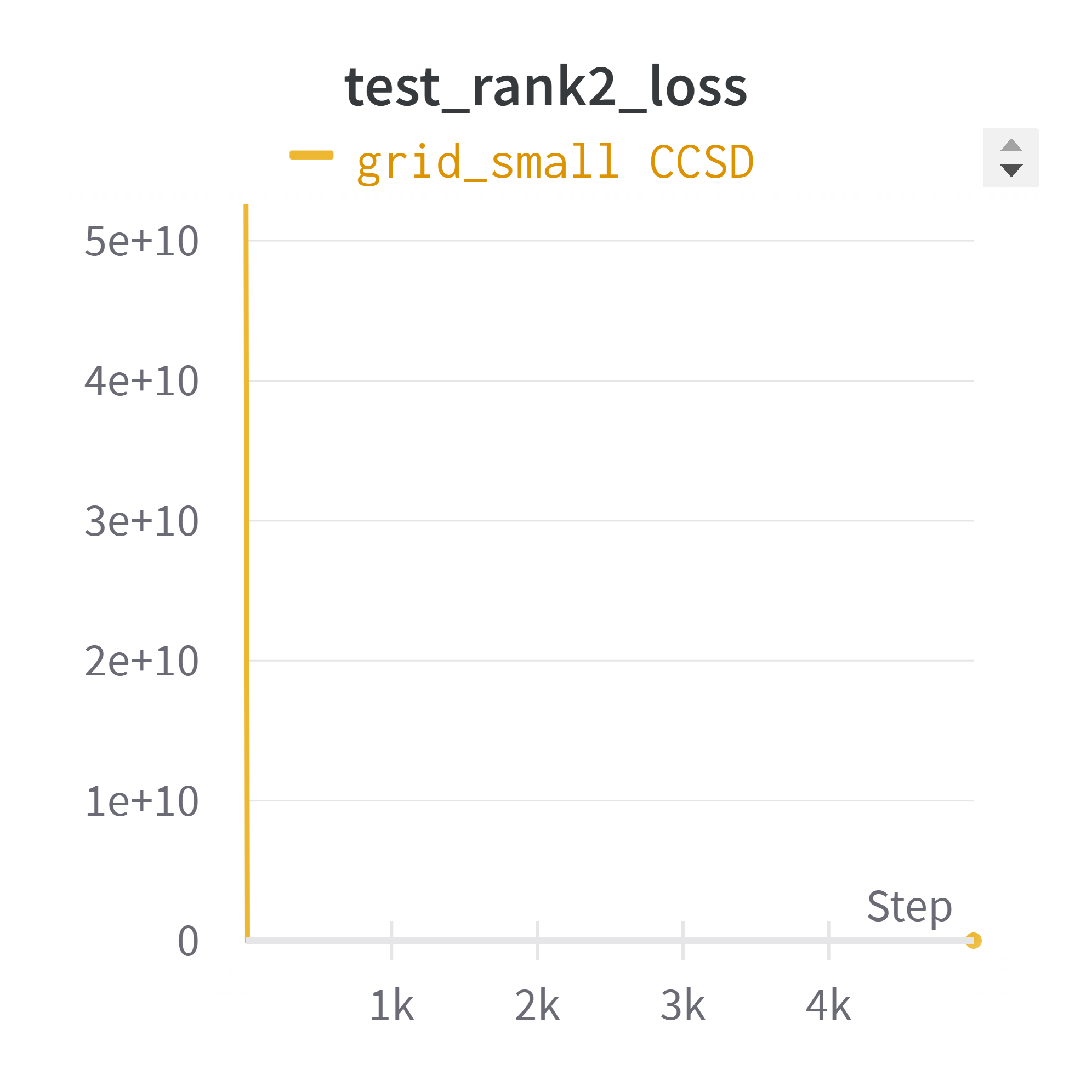}
\end{minipage}
\captionof{figure}{Test losses Grid small.}
\label{fig:test_grid}
\end{figure}

\end{document}